\theoremstyle{definition}  
\newtheorem{lemma}{Lemma}
\newtheorem{corollary}{Corollary}
\newtheorem{proposition}{Proposition}
\newtheorem{model}{Model}
\theoremstyle{plain}
\newtheorem{example}{Example}
\newtheorem{theorem}{Theorem}
\newtheorem{definition}{Definition}
\xpatchcmd{\proof}{\itshape}{\normalfont\proofnameformat}{}{}
\newcommand{\proofnameformat}{\bfseries}
\newcommand{\extend}{\textsc{Extend}}
\newcommand{\Ot}{\wt{O}}
\newcommand{\avgdeg}{\Delta_{\mathrm{avg}}}
\newcommand{\mc}[1]{\mathcal{#1}}
\newcommand{\vs}{\crl{\pm{}1}^{V}}
\newcommand{\es}{\crl{\pm{}1}^{E}}
\newcommand{\Yh}{\widehat{Y}}
\newcommand{\Ys}{Y^{\star}}
\newcommand{\Yt}{\wt{Y}}
\newcommand{\pmo}{\crl{\pm{}1}}
\newcommand{\mincut}[1]{\mathrm{\sf{mincut}}\prn*{#1}}
\newcommand{\mincuts}[1]{\mathrm{\sf{mincut}}^{\star}\prn*{#1}}
\newcommand{\wt}[1]{\widetilde{#1}}
\newcommand{\Yhws}{\Yh^{\Ws}}
\newcommand{\tW}{\mc{W}}
\newcommand{\W}{\mc{W}}
\newcommand{\Ws}{W^{\star}}
\newcommand{\tF}{F}
\newcommand{\tree}{(\tW, \tF)}
\newcommand{\width}{\mathrm{\sf{wid}}}
\newcommand{\widths}{\mathrm{\sf{wid}}^{\star}}
\newcommand{\tw}{\mathrm{\mathsf{tw}}}
\renewcommand{\deg}{\mathrm{\mathsf{deg}}}
\newcommand{\cost}{\textrm{Cost}}
\newcommand{\Xe}{X^{(e)}}
\newcommand{\wh}[1]{\widehat{#1}}
\newcommand{\mainalg}{\textsc{TreeDecompositionDecoder}}
\newcommand{\treealg}{\textsc{TreeDecoder}}
\title{Inference in Sparse Graphs with Pairwise Measurements and Side Information}
\author{
Dylan J. Foster \thanks{Department of Computer Science, Cornell University. Supported in part by the NDSEG fellowship.}
\and
Daniel Reichman \thanks{Electrical Engineering and Computer Science, University of California, Berkeley}
\and
Karthik Sridharan \thanks{Department of Computer Science, Cornell University}
}
\date{}
\begin{document}

\maketitle

\begin{abstract}

We consider the statistical problem of recovering a hidden ``ground truth'' binary labeling for the vertices of a graph up to low Hamming error from noisy edge and vertex measurements.  We present new algorithms and a sharp finite-sample analysis for this problem on trees and sparse graphs with poor expansion properties such as hypergrids and ring lattices. Our method generalizes and improves over that of \cite{Globerson2015Hard}, who introduced the problem for two-dimensional grid lattices.

For trees we provide a simple, efficient,
algorithm that infers the ground truth with optimal Hamming error has optimal sample complexity and implies recovery results for \emph{all connected graphs}. Here, the presence of side information is critical to obtain a non-trivial recovery rate.
We then show how to adapt this algorithm to tree decompositions of edge-subgraphs of certain graph families such as lattices, resulting in optimal recovery error rates that can be obtained efficiently

The thrust of our analysis is to 1) use the tree decomposition along with edge measurements to produce a small class of viable vertex labelings and 2) apply an analysis influenced by statistical learning theory to show that we can infer the ground truth from this class using vertex measurements. We show the power of our method in several examples including hypergrids, ring lattices, and the Newman-Watts model for small world graphs. For two-dimensional grids, our results improve over \cite{Globerson2015Hard} by obtaining optimal recovery in the constant-height regime.
\end{abstract}



\section{Introduction}

Statistical inference over graphs and networks is a fundamental problem that has received extensive attention in recent years \citep{fortunato2010community,Krzakala2013Spectral,Abbe2014Decoding,hajek2014computational}. Typical inference problems involve noisy observations of discrete labels assigned to edges of a given network, and the goal is to infer a ``ground truth'' labeling of the vertices (perhaps up to the right sign)  that best explains these observations. Such problems occur in a wide range of disciplines including statistical physics, sociology, community detection, average case analysis, and graph partitioning. This inference problem is also related to machine learning tasks involving structured prediction that arise in computer vision, speech recognition and other applications such as natural language processing. Despite the intractability of maximum likelihood estimation, maximum a-posteriori estimation, and marginal inference for most network models in the worst case, it has been observed that approximate inference algorithms work surprisingly well in practice \citep{sontag2012tightening}, and recent work has focused on improving our theoretical understanding of this phenomenon \citep{Globerson2015Hard}. 

\cite{Globerson2015Hard} introduced a new inference model with the key feature that, in addition to observing noisy edge labels, one also observes noisy vertex labels. The main focus of the present paper is to further examine the extent to which the addition of noisy vertex observations improves the statistical aspects of approximate recovery.
Specifically, we analyze statistical recovery rates in \pref{mod:edge_vertex_measurement}.

As a concrete example, consider the problem of trying to recover opinions of individuals in social networks. Suppose that every individual in a social network can hold one of two opinions labeled by $-1$ or $+1$. We receive a measurement of whether neighbors in the network have the same opinion, but the value of each measurement is flipped with probability $p$. We further receive estimates of the opinion of each individual, perhaps using a classification model on their profile, but these estimates are corrupted with probability $q$. 

\begin{displayquote}
\begin{mdframed}
\begin{model}
\label{mod:edge_vertex_measurement}
We receive an undirected graph $G=(V,E)$ with $\abs{V}=n$, whose vertices are labeled according to an unknown ground truth $Y\in\vs$. We receive noisy edge measurements $X\in\es$, where $X_{uv}=Y_uY_v$ with probability $1-p$ and $X_{uv}=-Y_uY_v$ otherwise.  We receive ``side information'' vertex measurements $Z\in\vs$, where $Z_u=Y_u$ with probability $1-q$ and $Z_u=-Y_u$ otherwise. We assume $p<q<1/2$

Our goal is to produce a labeling $\Yh\in\vs$ such that with probability at least $1-o_{n}(1)$ the Hamming error $\sum_{v\in{}V}\ind\crl{\Yh_v\neq{}Y_v}$ is bounded by $O(f(p)n)$ where $\lim_{p\rightarrow 0}f(p)=0$.\end{model}
\end{mdframed}
\end{displayquote}

The reader may imagine the pairwise measurements as fairly accurate and the side information vertex estimates as fairly noisy (since the flip probability $q$ close to $1/2$). \pref{mod:edge_vertex_measurement} then translates to the problem of producing an estimate of the opinions of users in the social network which predicts the opinion of few users incorrectly.

A first step in studying recovery problems on graphs with noisy vertex observations was taken by \cite{Globerson2014Tight,Globerson2015Hard} who studied \pref{mod:edge_vertex_measurement} on square grid lattices. They proved that the statistical complexity of the problem is essentially determined by the number of cuts with cutset of size $k$, where $k$ ranges over nonnegative integers.
This observation, together with a clever use of planar duality, enabled them to determine the optimal Hamming error for the square grid. 

As in \cite{Globerson2014Tight,Globerson2015Hard} we focus on finding a labeling of low Hamming error (as opposed to \emph{exact recovery}, where one seeks to find the error probability that with which we recover all labels correctly). \cite{chen2016community} have recently considered exact recovery for edges in this setting for sparse graphs such as grid and rings. They consider the case where there are \emph{multiple} i.i.d observations of edge labels. In contrast, we focus on the case where there is a single (noisy) observation for each edge, on  side information, and on partial recovery\footnote{We refer the reader to \pref{app:related_work} for further discussion of related models.}.

The availability of vertex observations changes the statistical nature of the problem and --- as we will show --- enables nontrivial partial recovery rates in all sparsity regimes. For example, for the $n$-vertex path, it is not difficult \citep{Globerson2014Tight} to show that when there are only noisy edge observations any algorithm will fail to find the correct labeling (up to sign) of $\Omega(n)$ edges.
In contrast, we show that when noisy vertex observations are available, one can obtain a labeling whose expected Hamming error is at most $\Ot(pn)$ for any $p$.

Related community detection models such as the well known Stochastic Block Model (SBM) and Censored Block Model (CBM) consider the case where one wishes to detect two communities based on noisy edge observations. Namely, in these models only noisy edges observations are provided and one wishes to recover the correct labeling of vertices up to sign.
Block model literature has focused on graphs which have good expansion properties such as complete graphs, random graphs, and spectral expanders. By including side information, our model allows for nontrivial recovery rates and efficient algorithms for graphs with ``small" separators such as trees, thin grids, and ring lattices. Studying recovery problems in such ``non-expanding" graphs is of interest as many graphs arising in applications such as social networks \citep{flaxman2007expansion} have poor expansion. 

\paragraph{Challenges and Results}

The key challenge in designing algorithms for \pref{mod:edge_vertex_measurement} is understanding statistical performance: Even for graphs such as trees in which the optimal estimator (the marginalized estimator) can be computed efficiently, it is unclear what Hamming error rate this estimator obtains. Our approach is to tackle this statistical challenge directly; we obtain efficient algorithms as a corollary.

Our first observation is that the optimal Hamming error for trees is  $\wt{\Theta}(pn)$ provided $q$ is bounded away from $1/2$\footnote{The assumption on $q$ is necessary, as when $q$ approaches $1/2$
it is proven in \cite{Globerson2015Hard} that an error of $\Omega(n)$ is unavoidable for certain trees.}. This is obtained by an efficient message passing algorithm. We then (efficiently) extend our algorithm for trees to more general graphs using a tree decompositions of (edge)-subgraphs. Our main observation is that if we are given an algorithm that obtains a non-trivial error rate for inference in each constant-sized component of a tree decomposition, we can lift this algorithm to obtain a non-trivial error rate for the entire graph by leveraging side information. 

This approach has the advantage that it applies to non-planar graphs such as high dimensional grids; it is not clear how to apply the machinery of
\cite{Globerson2015Hard} to such graphs because planar duality no longer applies. Our decomposition-based approach also enables us to obtain optimal error bounds for ring lattices and thin grids which do not have the so-called weak expansion property that is necessary for the analysis in \cite{Globerson2015Hard}. See \pref{sec:examples} for an extensive discussion of concrete graph families we consider and the error rates we achieve. 

\subsection{Preliminaries}
We work with an undirected graph $G=(V,E)$, with $\abs{V}=n$ and $\abs{E}=m$. For $W\subseteq{}V$, we let $G(W)$ be the induced subgraph and $E(W)$ be the edge set of the induced subgraph. Let $N(v)$ be the neighborhood of a vertex $v$. When it is not clear from context we will use $N_{G}(v)$ to denote neighborhood with respect to a specific graph $G$. Likewise, for $S\subseteq{}V$ we use $\delta_{G}(S)$ to denote its cut-set (edges with one endpoint in $S$) with respect to $G$. For a directed graph, we let $\delta_{+}(v)$ denote the outgoing neighbors and $\delta_{-}(v)$ denote the incoming neighbors of $v$. For a subset $W\subseteq{}V$ we let $N_{G}(W) = \bigcup_{v\in{}W}N_G(v)$. We let $\deg(G)$ denote the maximum degree and $\avgdeg$ the average degree.

\paragraph{Parameter range}
We treat $q=1/2-\eps$ as constant unless otherwise specified. Furthermore, we shall assume throughout that $p \geq w(1/n)$, so the expected number of edge errors is super-constant. We use $\wt{O}$ to suppress $\log(n)$, $\log(1/p)$, and $1/\eps$ factors. We use the phrase ``with high probability'' to refer to events that occur with probability at most $1-o_{n}(1)$. 

In the appendix (\pref{thm:large_degree}) we show  that if the minimum degree of the graph is $\Omega(\log{}n)$ there is a trivial strategy that achieves arbitrarily small Hamming error. We therefore restrict to $\deg(G)$ constant, as this is representative of the most interesting parameter regime.


\section{Inference for Trees}
\label{sec:trees}

In this section we show how to efficiently and optimally perform inference in \pref{mod:edge_vertex_measurement} when the graph $G$ is a tree. As a starting point, note that the expected number of edges $(u,v)$ of the tree with $X_{uv}$ flipped is $p(n-1)$.  In fact, using a simple Chernoff bound, one can see that with high probability at most $2pn + \Ot(1) $ edges are flipped. This implies that for the ground truth $Y$, $\sum_{(u,v) \in E} \ind\crl{Y_u \ne X_{u,v} Y_v} \le 2 p n + \Ot(1)$ with high probability over sampling of the edge labels. Hence to estimate ground truth, it is sufficient to search over labelings $\Yh$ that satisfy the inequality
{\small
\begin{align}\label{eq:constree}
\sum_{(u,v) \in E} \ind\crl{\Yh_u \ne X_{u,v} \Yh_v} \le 2 p n + \Ot(1).
\end{align}}We choose the estimator that is most correlated with the vertex observations $Z$ subject to the aformentioned inequality. That is, we take $\Yh$ to be the solution to\footnote{See appendix for constants.}
{\small
\begin{align}
\begin{aligned}
& \textrm{minimize} &&\sum_{v \in V} \ind\crl{\Yh_v \ne Z_v}  \\ &\textrm{subject to } &&\sum_{(u,v) \in E} \ind\crl{\Yh_u \ne X_{u,v} \Yh_v} \le 2 p n + \Ot(1).\end{aligned}\label{eq:treeopt}
\end{align}}This optimization problem can be solved efficiently --- $O(\ceil{pn}^{2}n\deg(G))$ time for general trees and $O(\ceil{pn}n)$ time for stars and path graphs --- with message passing. The full algorithm is stated in \pref{app:algorithms}.

On the statistical side we use results from statistical learning theory to show that the Hamming error of $\Yh$ obtained above is with high probability bounded by $\Ot(p n)$. To move to the statistical learning setting (see \pref{app:statistical_learning} for an overview) we first define a ``hypothesis class'' $\F\defeq\crl{Y'\in\vs\mid{}\sum_{(u,v) \in E} \ind\crl{Y'_u \ne X_{u,v} Y'_v} \le 2 p n + \Ot(1) }$; note that this is precisely the set of $Y'$ satisfying \pref{eq:constree}. The critical observation here is that for any $\Yh$ the Hamming error (with respect to the ground truth) is proportional to the \emph{excess risk} in the statistical learning setting over $Z$ with class $\F$:
{\small
\begin{align}
\label{eq:excess_risk}
&\sum_{v \in V} \ind\crl{\Yh_v \ne Y_v}\\
&= \frac{1}{1-2q}\brk*{
\sum_{v \in V} \Pr_{Z}\crl{\Yh_v \ne Z_v} - \min_{Y'\in\F}\sum_{v \in V}\Pr_{Z}\crl{Y'_v \ne Z_v}
}.\notag
\end{align}}

Combining \pref{eq:excess_risk} with a so-called \emph{fast rate} from statistical learning theory (\pref{cor:fast_rate_well_specified}) implies that if we take $\Yh$ to be the \emph{empirical risk minimizer} over $\F$ given $Z$, which is in fact the solution to \pref{eq:treeopt}, then we have $\sum_{v\in{}V}\ind\crl{\Yh_v\neq{}Y_v}\leq{}O(\log(\abs{\F}/\delta)/\eps^2)$ with probability at least $1-\delta$. Connectivity of $G$ then implies $\abs{\F}\approx (\frac{e}{p})^{2pn+\Ot(1)}$, giving the final $\Ot(pn)$ rate.  \pref{theorem:tree_decoding} makes this result precise:

\begin{theorem}[Inference in Trees]
\label{theorem:tree_decoding}
Let $\Yh$ be the solution to \pref{eq:treeopt}. Then with probability at least $1-\delta$,
{\small
\begin{align}
\sum_{v\in{}V}\ind\crl*{\Yh_v\neq{}Y_v} &\leq{} \frac{1}{\eps^{2}}(2pn + 2\log(2/\delta) + 1)\log(2e/p\delta)\notag\\
&=\Ot(pn).\label{eq:tree_hamming_error}
\end{align}
}
\end{theorem}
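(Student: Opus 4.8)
The plan is to carry out the three-step program sketched just above the theorem --- feasibility of the ground truth, a cardinality bound on the hypothesis class, and a statistical-learning fast rate --- and to assemble the pieces through identity \pref{eq:excess_risk}. Write $R\defeq 2pn+2\log(2/\delta)+1$ for the slack in the constraint of \pref{eq:treeopt}, so that $\F=\crl{Y'\in\vs:\sum_{(u,v)\in E}\ind\crl{Y'_u\ne X_{uv}Y'_v}\le R}$ and $\Yh$ is the minimizer of $\sum_{v\in V}\ind\crl{Y'_v\ne Z_v}$ over $Y'\in\F$, i.e.\ the empirical risk minimizer. First I would show the ground truth is feasible: the number of edges on which $X$ disagrees with $Y$ is $\bin(n-1,p)$-distributed with mean $p(n-1)\le pn$, so a Chernoff bound puts it at most $R$ with probability at least $1-\delta/2$ (the additive $2\log(2/\delta)+1$ absorbing the deviation term; the exact constant is routine). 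On this event $Y$ satisfies \pref{eq:constree}, i.e.\ $Y\in\F$, so $\F\ne\emptyset$ and $\Yh$ is well defined; crucially this event depends only on the edge observations $X$. Second I would bound $\abs{\F}$: since $G$ is a tree it has exactly $n-1$ edges, and a labeling $Y'\in\vs$ is determined --- up to a single global sign flip --- by the set of edges along which $Y'_uY'_v\ne X_{uv}$; feasibility caps that set at $R$ edges, so
\[
\abs{\F}\;\le\;2\sum_{k=0}^{R}\binom{n-1}{k}\;\le\;2\left(\frac{e(n-1)}{R}\right)^{R}\;\le\;2\left(\frac{e}{2p}\right)^{R}\;\le\;\left(\frac{e}{p}\right)^{R},
\]
using $R\ge 2pn>2p(n-1)$, the standard partial-binomial-sum estimate, and $2^{1-R}\le 1$; if instead $R\ge n-1$ the claimed bound already exceeds $n$ and holds trivially. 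Hence $\log\abs{\F}\le R\log(e/p)$, so $\log(\abs{\F}/\delta)\le R\log(e/p)+\log(1/\delta)\le R\log(2e/(p\delta))$ since $R\ge1$.

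For the main step I would condition on the event $\crl{Y\in\F}$ of Step 1. Because $X$ and $Z$ are independent (each is $Y$ corrupted by its own independent noise) and this event depends only on $X$, conditioning leaves the law of $Z$ unchanged. On this event the problem ``fit $\F$ to the sample $Z$'' is \emph{well specified}: for every vertex $v$ and every $Y'\in\F$, the per-vertex excess loss $\ind\crl{Y'_v\ne Z_v}-\ind\crl{Y_v\ne Z_v}$ vanishes when $Y'_v=Y_v$ and otherwise equals $+1$ with probability $1-q$ and $-1$ with probability $q$, so its mean is $(1-2q)\,\ind\crl{Y'_v\ne Y_v}=2\eps\,\ind\crl{Y'_v\ne Y_v}$ and its second moment is $\ind\crl{Y'_v\ne Y_v}$, i.e.\ exactly $\tfrac{1}{2\eps}$ times the mean. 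This is precisely the Bernstein/low-noise condition (with constant $\tfrac{1}{2\eps}$) required by \pref{cor:fast_rate_well_specified}, which, applied to $\Yh$ (the solution of \pref{eq:treeopt}), gives with probability at least $1-\delta/2$ over $Z$ that
\[
\sum_{v\in V}\Pr_Z\crl{\Yh_v\ne Z_v}-\min_{Y'\in\F}\sum_{v\in V}\Pr_Z\crl{Y'_v\ne Z_v}\;\le\;\frac{2}{\eps}\log(\abs{\F}/\delta).
\]
Since $Y\in\F$, identity \pref{eq:excess_risk} says the left-hand side equals $(1-2q)\sum_{v\in V}\ind\crl{\Yh_v\ne Y_v}=2\eps\sum_{v\in V}\ind\crl{\Yh_v\ne Y_v}$, whence
\[
\sum_{v\in V}\ind\crl{\Yh_v\ne Y_v}\;\le\;\frac{1}{\eps^{2}}\log(\abs{\F}/\delta)\;\le\;\frac{1}{\eps^{2}}\,R\,\log(2e/(p\delta))\;=\;\frac{1}{\eps^{2}}(2pn+2\log(2/\delta)+1)\log(2e/p\delta),
\]
and a union bound over the two failure events (Steps 1 and 3, each of probability at most $\delta/2$) yields the theorem.

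I expect the last step to be the crux. The two genuinely non-routine points are (i) recognizing that, once the ground truth is feasible, the problem is a \emph{well-specified} finite-class learning problem, so that the \emph{fast} rate (excess risk $\propto\log\abs{\F}$) applies rather than the slow uniform-convergence rate (excess risk $\propto\sqrt{n\log\abs{\F}}$, which would give only the suboptimal $\Ot(n\sqrt{p})$ Hamming bound instead of $\Ot(pn)$) --- this rests on the variance-versus-mean check above; and (ii) keeping the two independent sources of randomness straight, namely $X$, which defines the \emph{random} class $\F$ and the feasibility event, versus $Z$, the i.i.d.\ sample to which \pref{cor:fast_rate_well_specified} is applied, so that the conditioning in Step 3 is valid. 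Step 2 is the only place where connectivity of the tree is essential; the Chernoff bound, the binomial-sum estimate, and the bookkeeping of constants into the $\Ot(1)$ slack are all routine.
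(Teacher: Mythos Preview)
Your proposal is correct and follows essentially the same approach as the paper's proof: Bernstein/Chernoff to show $Y\in\F$ with probability $1-\delta/2$, the excess-risk identity \pref{eq:excess_risk}, the well-specified fast-rate bound \pref{cor:fast_rate_well_specified} applied to the ERM $\Yh$, and the partial-binomial-sum estimate for $\abs{\F}$. The only cosmetic differences are that you fold the ``$+1$'' directly into the constraint threshold $R$ (the paper uses threshold $2pn+2\log(2/\delta)$ and picks up the ``$+1$'' later when absorbing $\log(2/\delta)$), and you make the conditioning-on-$X$/independence-of-$Z$ step and the sign-flip factor of $2$ in the $\abs{\F}$ count explicit, which the paper leaves implicit.
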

We emphasize that side information is critical in this result. For trees --- in particular, the path graph --- no estimator can achieve below $\Omega(n)$ hamming error unless $p=O(1/n)$ \citep{Globerson2014Tight}.


\section{Inference for General Graphs}
\label{sec:inference}

\subsection{Upper Bound: Inference with Tree Decompositions} \label{sec:upper}

Our main algorithm, \mainalg{} (\pref{alg:decoder}) produces estimators for  \pref{mod:edge_vertex_measurement} for graphs $G$ that admit a \emph{tree decomposition} in the sense of Robertson and Seymour (\cite{robertson1986graph}). Recall that a tree decomposition for a graph $G=(V,E)$ is new graph $T=(\tW,F)$ in which each node in $\tW$ corresponds to a subset of nodes in the original graph $G$. The edge set $F$ forms a tree over $\tW$ and must satisfy a property known as \emph{coherence}, which guarantees that the connectivity structure of $T$ captures that of $G$. The approach of \mainalg{} is to use the edge observations $X$ to produce a \emph{local} estimator for each component of the tree decomposition $T$, then use the vertex observations $Z$ to combine the many local estimators into a single \emph{global} estimator.

Tree decompositions have found extensive use in algorithm design and machine learning primarily for computational reasons: These objects allow one to lift algorithmic techniques that are only feasible computationally on constant-sized graphs, such as brute force enumeration, into algorithms that run efficiently on graphs of all sizes. It is interesting to note that our algorithm obeys this principle, but for \emph{statistical performance} in addition to computational performance: We are able to lift an analysis technique that is only tight for constant-sized graphs, the union bound, into an analysis that is tight for arbitrarily large graphs from families such as grids. However, as our analysis for trees shows, this approach is only made possible by the side information $Z$.

The \emph{width} $\width(T)$ of a tree decomposition $T$ is the size of the largest component in $T$, minus one (by convention). To place a guarantee on the performance of \mainalg{}, both statistically and computationally, it is critical that the width be at most logarithmic in $n$. At first glance this condition may seem restrictive there are graphs of interests such as grids for which the \emph{treewidth} $\tw(G)$ --- the smallest treewidth of \emph{any} tree decomposition --- is of order $\sqrt{n}$.
 For such graphs, our approach is to choose a subset $E' \subseteq E$ of edges to probe so that the graph $G' = (V,E')$ has small treewidth. For all of the graphs we consider this approach obtains optimal sample complexity in spite of discarding information.

Having found a decomposition of small treewidth for $G'$ we apply the following algorithm. For each component of this decomposition, we compute the maximum likelihood estimator for the labels in this component given the edge measurements $X$. This is done by brute-force enumeration over vertex labels, which can be done efficiently because we require small treewidth.

For a given component, there will be two estimators that match the edges in that component equally well due to sign ambiguity. The remaining problem is to select a set of signs --- one for each component --- so that the local estimators agree globally. For this task we leverage the side information $Z$. Our approach will mirror that of \pref{sec:trees}: To produce a global prediction $\Yh$ we solve a global optimization problem over the tree decomposition using dynamic programming, then analyze the statistical performance of $\Yh$ using statistical learning theory.

Informally, if there is some $\Delta$ such that we can show a $p^{\Delta}$ failure probability for estimating up to sign the vertex labels within each component of the tree decomposition, the prediction produces by \pref{alg:decoder} will attain a high probability $p^{\Delta}n$ Hamming error bound for the entire graph. For example, in \pref{sec:examples} we show a $p^2$ failure probability for estimating vertex labels in a grid of size $3 \times 2$, which through  \pref{alg:decoder} translates to a $O(p^2 n)$ rate with high probability on both $\sqrt{n} \times \sqrt{n}$ and $3 \times n/3$ grids.

\begin{definition}[\cite{cowell2006probabilistic}]
\label{def:tree_decomposition}
A tree $T = \tree$ is a tree decomposition for $G=(V,E)$ if it satisfies
\begin{enumerate}
\vspace{-.1in}
\item \label{def:tree_decomp:1} \textbf{Vertex Inclusion:} Each node in $v\in{}V$ belongs to at least one component $W\in\tW$.
\item \label{def:tree_decomp:2} \textbf{Edge Inclusion:} For each edge $(u,v)\in{}E$, there is some $W\in\tW$ containing both $u$ and $v$.
\item \label{def:tree_decomp:3} \textbf{Coherence:} Let $W_1, W_2, W_3\in\tW$ with $W_2$ on the path between $W_1$ and $W_3$ in $T$. Then if $v\in{}V$ belongs to $W_1$ and $W_3$, it also belongs to $W_2$.
\end{enumerate}
\vspace{-.1in}
We assume without loss of generality that $T$ is not redundant, i.e. there is no $(W,W')\in{}F$ with $W'\subseteq{}W$.
\end{definition}

The next definition concerns the subsets of the graph $G$ used in the local inference procedure within \pref{alg:decoder}. We allow the local maximum likelihood estimator for a component $W$ to consider a superset of nodes, $\extend(W)$, whose definition will be specialized to different classes of graphs.

\begin{definition}[Component Extension Function]
\label{def:extend}
For a given $W\in\tW$, the \emph{extended component} $\Ws\supseteq{}W$ denotes the result of $\textsc{Extend}(W)$.
\end{definition}
Choices we will use for the extension function include the identity $\textsc{Extend}(W)=W$ and the neighborhood of $W$ with respect to the probed graph:
\begin{equation}
\textsc{Extend}(W) = \prn*{\bigcup_{v\in{}W}N_{G'}(v)}\cup{}W.
\end{equation}
Concrete instantiations of $\extend$ are given in \pref{sec:examples}.

We define quantitative properties of the tree decomposition in \pref{tab:tree_decomposition_properties}. For a given property, the corresponding ($\star$) version will denote the analogue the arises in analyzing performance when using extended components. For simplicity, the reader may wish to imagine each ($\star$) property as the corresponding non-($\star$) property on their first read-through.

\begin{table*}[htbp]
\centering
  \caption{Tree decomposition properties.}
  \label{tab:tree_decomposition_properties}
    \begin{tabular}{|ll|}
    \hline
$\deg(T)=\max_{W\in\W}\abs{\crl*{(W,W')\in{}F}}$ &\\
 $\width(T)=\max_{W\in\tW}\abs{W}-1$ & $\widths(T)=\max_{W\in\tW}\abs{\Ws}-1$\\
 $\mc{W}(e)=\crl*{W\in\W{}\mid{}e\in{}E(W)}$ & $\mc{W}^{\star}(e)=\crl*{W\in\W{}\mid{}e\in{}E(\Ws)}$\\
 $\deg_E(T)=\max_{e\in E}\abs{\mc{W}(e)}$ & $\deg^{\star}_E(T)=\max_{e\in E}\abs{\mc{W}^{\star}(e)}$\\
$\mincut{W}=\min_{S\subset{}W, S\neq{}\emptyset}\abs{\delta_{G(W)}(S)}$ &
$\mincuts{W}=\min_{S\subset{}\Ws, S\cap{}W\neq{}\emptyset, \bar{S}\cap{}W\neq{} \emptyset}\abs{\delta_{G(W)}(S)}$\\
 \hline
    \end{tabular}
\end{table*}

\begin{definition}[Admissible Tree Decomposition]
\label{def:admissible}
Call a tree decomposition $T=(\W,F)$ \emph{admissible} if it satisfies the following properties:

\begin{itemize}
\item $\deg(T)$, $\deg^{\star}_E(T)$, $\max_{W\in\W}\abs{E(W^{\star})}$, and $\widths(T)$ are constant.
\item $G'(W^{\star})$ is connected for all $W\in\W$\footnote{Together with our other assumptions, this implies the \emph{connected treewidth} of $G'$ \citep{diestel2016connected} is constant.}.
\end{itemize}
\end{definition}
In the rest of this section, the $\Ot$ notation will hide all of the constant quantities from \pref{def:admissible}.

\begin{theorem}[Main Theorem]
\label{theorem:main_simple}
Let $\Yh$ be the labeling produced using \pref{alg:decoder} with an admissible tree decomposition.
Then, with high probability over the draw of $X$ and $Z$,
{\small
\begin{equation}
\label{eq:main_w}
\sum_{v\in{}V}\ind\crl*{\Yh_v\neq{}Y_v} \leq{} \Ot\prn*{\sum_{W\in\tW}p^{\ceil{\mincuts{W}/2}}}.
\end{equation}}In particular, let $\Delta$ be such that $\Delta\leq{}\mincuts{W}$ for all $W\in\tW$. Then, with high probability,
{\small
\begin{equation}
\label{eq:eq:main_delta}
\sum_{v\in{}V}\ind\crl*{\Yh_v\neq{}Y_v} \leq{} \Ot\prn*{p^{\ceil{\Delta/2}}n}.
\end{equation}}\pref{alg:decoder} runs in time $\Ot(\ceil{p^{\Delta/2}n}^2n)$ for general tree decompositions and time $\Ot(\ceil{p^{\Delta/2}n}n)$ when $T$ is a path graph.
\end{theorem}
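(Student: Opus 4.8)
The plan is to follow the template of the proof of \pref{theorem:tree_decoding}, with the components of the tree decomposition playing the role that individual edges played there. Concretely I would proceed in three steps: (1) bound, for each component $W$, the probability that the local maximum likelihood estimator $\Yhws$ computed from $X$ fails to recover $Y|_W$ up to sign; (2) assemble a finite hypothesis class $\mathcal{H}\subseteq\vs$ of global labelings agreeing with all but a few of the local estimators, show $Y\in\mathcal{H}$ with high probability, and bound $\log\abs{\mathcal{H}}$ by the right-hand side of \pref{eq:main_w}; and (3) observe that \pref{alg:decoder} returns the empirical risk minimizer over $\mathcal{H}$ against the side information $Z$ and invoke the fast rate \pref{cor:fast_rate_well_specified} exactly as in the tree case.

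For step (1), which I expect to be the crux of the whole argument: fix $W$ and note that $\Yhws$ depends only on $X$ restricted to $E(\Ws)$, and that since $G'(\Ws)$ is connected the measurements determine $Y|_{\Ws}$ up to one global sign. The MLE over $\Ws$ minimizes the number of violated measured relations among the edges of $\Ws$, so if its restriction to $W$ disagrees with $Y|_W$ up to sign, then there is some $S\subset\Ws$ with $S\cap W\neq\emptyset$ and $\bar S\cap W\neq\emptyset$ such that flipping $S$ in $Y|_{\Ws}$ does not increase the number of violated edges; comparing the two labelings edge by edge, this forces at least $\ceil{\abs{\delta_{G'(\Ws)}(S)}/2}$ of the cut edges $\delta_{G'(\Ws)}(S)$ to be noisy, and by definition $\abs{\delta_{G'(\Ws)}(S)}\ge\mincuts{W}$. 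A union bound over the $2^{\widths(T)+1}=O(1)$ choices of $S$, using $\abs{E(\Ws)}=O(1)$ and $\Pr[\bin(k,p)\ge\ceil{k/2}]\le 2^k p^{\ceil{k/2}}$, gives failure probability $\Ot\prn*{p^{\ceil{\mincuts{W}/2}}}$ for component $W$. Hence the number $B$ of components whose local estimator fails has expectation at most $\Ot\prn*{\sum_{W\in\tW}p^{\ceil{\mincuts{W}/2}}}$, and Markov's inequality (losing only a $\log n$ factor, which $\Ot$ absorbs) gives $B\le\Ot\prn*{\sum_{W\in\tW}p^{\ceil{\mincuts{W}/2}}}$ with high probability. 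I expect the delicate points here to be identifying exactly this combinatorial event, pinning the exponent at $\ceil{\mincuts{W}/2}$, and cleanly handling ties in the MLE and the interaction with the extension $\extend$.

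For step (2), let $\mathcal{H}$ be the set of $\Yh\in\vs$ such that $\Yh|_W\in\{\Yhws|_W,-\Yhws|_W\}$ for all but at most $B$ components $W$. Whenever the bound on $B$ from step (1) holds, $Y\in\mathcal{H}$: on each good component choose the per-component sign matching $Y$, which is globally consistent because $Y$ is a genuine labeling, and the $\le B$ bad components are precisely the ones $\mathcal{H}$ lets us override. To bound $\abs{\mathcal{H}}$, a member is specified by (a) the set of $\le B$ overridden components --- $\binom{\abs{\tW}}{B}\le\exp(\Ot(B))$ choices, using $\abs{\tW}=O(n)$ and that $\Ot$ absorbs $\log(1/p)$, so $B\log(\abs{\tW}/B)=\Ot(B)$; (b) an arbitrary labeling of each overridden component --- $O(1)$ choices each; and (c) a sign for each remaining component, subject to agreement on shared vertices of adjacent components. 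Since $G'$ is connected (a consequence of admissibility, as $T$ is a tree and each $G'(\Ws)$ is connected), adjacent components in $T$ share a vertex, and the local estimates of two adjacent good components agree on that vertex up to a global sign; since $\deg(T)=O(1)$, deleting the $\le B$ overridden components from $T$ leaves $O(B)$ subtrees, each contributing only $2$ valid sign patterns. Thus $\log\abs{\mathcal{H}}=\Ot(B)=\Ot\prn*{\sum_{W\in\tW}p^{\ceil{\mincuts{W}/2}}}$.

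For step (3), I would condition on $X$ (so $\mathcal{H}$ is fixed), note that $Z$ remains independent coordinatewise noise on $Y$ with flip probability $q=1/2-\eps$, and use that the labeling $\Yh$ produced by \pref{alg:decoder} is by construction the minimizer of $\sum_{v\in V}\ind\{\Yh_v\neq Z_v\}$ over labelings overriding at most $B$ local estimators (the analogue of \pref{eq:treeopt}), i.e.\ the empirical risk minimizer over $\mathcal{H}$. Since $Y\in\mathcal{H}$ with high probability, the excess-risk identity (the analogue of \pref{eq:excess_risk} with $\F$ replaced by $\mathcal{H}$) together with \pref{cor:fast_rate_well_specified}, applied verbatim as in the proof of \pref{theorem:tree_decoding}, yields $\sum_{v\in V}\ind\{\Yh_v\neq Y_v\}\le\Ot(\log\abs{\mathcal{H}})=\Ot\prn*{\sum_{W\in\tW}p^{\ceil{\mincuts{W}/2}}}$ with high probability, which is \pref{eq:main_w}; bounding $\mincuts{W}\ge\Delta$ and $\abs{\tW}=O(n)$ then gives \pref{eq:eq:main_delta}. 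For the runtime, the local MLEs are found by brute force over the $2^{\widths(T)+1}$ labelings of each $\Ws$, costing $\Ot(n)$ in total, and the global optimization is a dynamic program over $T$ that maintains at each node the $O(1)$ labelings of that component together with a table over the number of overrides used so far (capped at $B=\Ot(\ceil{p^{\Delta/2}n})$), merging children by a capped min-plus convolution; this runs in time $\Ot(\ceil{p^{\Delta/2}n}^2n)$ in general and $\Ot(\ceil{p^{\Delta/2}n}n)$ when $T$ is a path.
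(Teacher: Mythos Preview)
Step~(1) matches the paper's \pref{lem:extended_component_estimator}. Your use of Markov's inequality in place of the paper's entropy method (\pref{lem:concentration_extended}, via \pref{theorem:entropy}) suffices for the $1-o_n(1)$ statement as phrased, though the paper's route yields the sharper $1-\delta$ bound with an additive $\log(1/\delta)$ term; this is a difference in strength rather than a gap.

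The genuine gap is step~(3). You assert that \pref{alg:decoder} returns the empirical risk minimizer over your class $\mathcal{H}$ of vertex labelings that may \emph{override} up to $B$ local estimators with arbitrary values. It does not: the algorithm only selects a sign $\hat{s}_W\in\pmo$ per component (subject to a bound $L_n$ on the number of violated meta-edges $(W_1,W_2)\in F$) and outputs $\Yh_v=\hat{s}_W\Yhws_v$; it never substitutes an arbitrary labeling on any component. On each of the failed components, neither $+\Yhws$ nor $-\Yhws$ equals $Y|_W$, so the ground truth $Y$ is not representable by \emph{any} signing and hence is not in the class the algorithm actually searches. You therefore cannot invoke the well-specified fast rate \pref{cor:fast_rate_well_specified}. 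The paper instead works with the class $\mathcal{F}(X)\subseteq\pmo^{\tW}$ of signings, shows that the \emph{best} signing $s^\star$ (not $Y$) lies in $\mathcal{F}(X)$, applies the mis-specified fast rate \pref{lem:fast_rate}, and separately absorbs an additive $O(\width(T)\cdot K_n)$ term for the fact that even $s^\star_W\Yhws$ is wrong on the $\le K_n$ failed components. Your construction of $\mathcal{H}$ and the well-specified corollary would analyze a different algorithm---one that is permitted to relabel failed components from scratch---but not \pref{alg:decoder} as stated in the theorem.
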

\begin{algorithm}[h!]
\begin{spacing}{1.2}
\caption{\textsc{TreeDecompositionDecoder}}\label{alg:decoder}
\textbf{Parameters:} Graph $G=(V,E)$. Probed edges $E'\subseteq{}E$. Extension function $\textsc{Extend}$.\\~~~~~~~~~Tree decomposition $T=(\tW, F)$ for $(V,E')$. Failure probability $\delta > 0$.\\
\textbf{Input:} Edge measurements $X\in\es$. Vertex measurements $Z\in\vs$.
\algrenewcommand\algorithmicindent{.8em}
\begin{algorithmic}[1]
\Procedure{TreeDecompositionDecoder}{}
\Statex \textbf{\textcolor{black}{\textsc{\underline{Stage 1} }}}
\Statex \textbf{\textcolor{blue}{\texttt{/* Compute estimator for each tree decomposition component. */}}}
\For{$W\in\tW$}
\State $W^{\star}\gets{}\textsc{Extend}(W)$. 
\Statex \indent\indent\indent\indent{\small\textbf{\textcolor{blue}{\texttt{// See \pref{def:extend}. }}}}
\State $\Yt^{\Ws}\leftarrow{}$ {\small$\argmin\limits_{\Yt\in\pmo^{\Ws}}\sum_{uv\in{}E'(\Ws)}\ind\crl{\Yt_u\Yt_v\neq{}X_{uv}}$}.
\State Let $\Yhws$ be the restriction of $\Yt^{\Ws}$ to $W$.\label{line:estimator}
\EndFor
\Statex \textbf{\textcolor{black}{\textsc{\underline{Stage 2} }}}
\Statex \textbf{\textcolor{blue}{\texttt{/* Use component estimators to assign edge costs to tree decomposition. */}}}
\For{$W\in\tW$}
\State $\mathrm{Cost}_W[+1] \leftarrow \sum_{v \in W} \ind\crl{\Yh^{\Ws}_v \ne Z_v} $ \State 
$\mathrm{Cost}_W[-1] \leftarrow \sum_{v \in W} \ind\crl{- \Yh^{\Ws}_v \ne Z_v}$.
\EndFor
\For{$(W_1,W_2) \in F$ }\label{line:decoder_meta_edge}
\State Let $v\in{}W_1 \cap W_2$.
\State {$S(W_1,W_2) \leftarrow \Yh^{\Ws_1}_{v} \cdot \Yh^{\Ws_2}_{v}$}.
\EndFor
\Statex \textbf{\textcolor{blue}{\texttt{/* Run tree inference algorithm from \pref{sec:trees} over tree decomposition. */}}}
\State$\hat{s} \gets \treealg{}(T, \textrm{Cost},S, L_n)$.\label{line:s_hat}
\Statex \indent\indent\indent\indent{\small\textbf{\textcolor{blue}{\texttt{// See eq. \pref{eq:ln} for constant $L_n$. }}}}
\For{$v\in{}V$}
\State Choose arbitrary $W$ s.t. $v\in{}W$ \Statex \indent\indent and set $\Yh_{v}\leftarrow{}\hat{s}_{W}\Yh^{\Ws}_{v}$.
\EndFor\\
\Return $\Yh$.
\EndProcedure
\end{algorithmic}
\end{spacing}
\end{algorithm}

\subsection{Main theorem: Proof sketch}
Let us sketch the analysis of \pref{theorem:main_simple} in the simplest case, where $\textsc{Extend}(W)=W$ for all $W\in\tW$ and consequently all $(\star)$ properties are replaced with their non-$(\star)$ counterparts. We give a bound begin by bounding that probability that a single component-wise estimator $\Yh^{W}$ computed on \pref{line:estimator} of \pref{alg:decoder} fails to exactly recover the ground truth within its component.
\begin{definition}[Component Estimator]
The (edge) maximum likelihood estimator for $W$ is given by
{\small
\begin{equation}
\label{eq:component_estimator}
\Yh^{W}\defeq{}\argmin_{\Yh\in\pmo^{W}}\sum_{uv\in{}E'(W)}\ind\crl{\Yh_u\Yh_v\neq{}X_{uv}}.
\end{equation}}
$\Yh^W$ can be computed by enumeration over all labelings in time $2^{\abs{W}}$.
There are always two solutions to \pref{eq:component_estimator} due to sign ambiguity; we take one arbitrarily.
\end{definition}

\begin{proposition}[Error Probability for Component Estimator]
\label{prop:component_estimator}
{\small
\[
\Pr\left(\min_{s\pmo} \ind\crl{s\Yh^W \ne Y^W} >0\right) \le \Ot(p^{\lceil\mincut{W}/2\rceil})
\]}
\end{proposition}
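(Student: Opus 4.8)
The plan is a union bound over the possible ``disagreement cuts'' between $\Yh^{W}$ and the ground truth within $W$. Let $B\subseteq E'(W)$ be the random set of component edges whose measurement is flipped, $B=\crl{uv\in E'(W):X_{uv}\neq Y_uY_v}$; the events $\crl{uv\in B}$ are independent and each has probability $p$. Viewing $Y^{W}$ itself as a feasible point of the minimization in \pref{eq:component_estimator}, it violates exactly the edges of $B$ (those $uv$ with $Y_uY_v\neq X_{uv}$), so the optimal objective value of \pref{eq:component_estimator} is at most $\abs{B}$.

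First I would pin down the failure event combinatorially. Suppose $\min_{s\in\pmo}\ind\crl{s\Yh^{W}\neq Y^{W}}>0$, i.e.\ $\Yh^{W}\notin\crl{Y^{W},-Y^{W}}$, and let $S\defeq\crl{v\in W:\Yh^{W}_v\neq Y^{W}_v}$. These two inequalities say exactly that $S$ is a proper, nonempty subset of $W$. An edge $uv\in E'(W)$ satisfies $\Yh^{W}_u\Yh^{W}_v\neq Y_uY_v$ if and only if $uv$ crosses $S$; splitting $E'(W)$ according to whether an edge crosses $S$ and whether it lies in $B$ shows that the number of edges violated by $\Yh^{W}$ equals $\abs{\delta(S)}+\abs{B}-2\abs{B\cap\delta(S)}$, where $\delta(S)$ denotes the set of edges of $E'(W)$ crossing $S$. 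Optimality of $\Yh^{W}$ forces this quantity to be at most $\abs{B}$, hence $\abs{B\cap\delta(S)}\geq\abs{\delta(S)}/2$; and since $S$ is proper and nonempty, $\abs{\delta(S)}\geq\mincut{W}$. Therefore the failure event is contained in $\bigcup_{S}\crl{\abs{B\cap\delta(S)}\geq\ceil{\mincut{W}/2}}$, where the union is over proper nonempty $S\subseteq W$.

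It then remains to bound and sum these terms. For a fixed $S$, $\abs{B\cap\delta(S)}$ is binomial with parameters $\abs{\delta(S)}$ and $p$, so with $k=\ceil{\mincut{W}/2}$ we have $\Pr(\abs{B\cap\delta(S)}\geq k)\leq\binom{\abs{\delta(S)}}{k}p^{k}$. Under admissibility (\pref{def:admissible}), $\abs{W}\leq\width(T)+1=O(1)$, so both $\abs{\delta(S)}$ and the number $2^{\abs{W}}$ of candidate sets $S$ are $O(1)$; summing over $S$ therefore yields a failure probability of $\Ot(p^{\ceil{\mincut{W}/2}})$, with the hidden constant depending only on $\width(T)$. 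The connectivity requirement in \pref{def:admissible} ensures $\mincut{W}\geq1$, so the bound is nontrivial, and — unlike in the tree analysis of \pref{sec:trees} — no Chernoff control on $\abs{B}$ is needed for this particular estimate.

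I expect the only delicate point to be the second step: correctly identifying $S$ as a proper nonempty subset of $W$, writing the violated-edge count of $\Yh^{W}$ in terms of $\abs{\delta(S)}$, $\abs{B}$, and $\abs{B\cap\delta(S)}$, and using optimality of $\Yh^{W}$ to extract $\abs{B\cap\delta(S)}\geq\abs{\delta(S)}/2$. One should also keep track of which graph the cut $\delta(S)$ lives in --- it is the graph on $W$ with edge set $E'(W)$ that the component MLE actually optimizes over --- and verify that $\mincut{W}$, as used in the statement, lower-bounds its cut size over all proper nonempty $S$.
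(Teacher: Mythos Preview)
Your proposal is correct and matches the paper's proof essentially line for line: a union bound over proper nonempty disagreement sets $S\subseteq W$, combined with the observation that optimality of $\Yh^{W}$ forces at least $\lceil\abs{\delta(S)}/2\rceil$ of the edges in $\delta(S)$ to be flipped. The only cosmetic differences are that the paper takes $S$ to be a maximal connected component of the disagreement set (rather than the full disagreement set) and phrases the optimality step as ``flip $S$ and obtain a contradiction'' rather than your direct objective-value computation; neither change affects the argument or the final bound.
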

\begin{proof}
Assume that both $\Yh^W$ and $-\Yh^W$ disagree with the ground truth or else we are done. Let $S$ be a maximal connected component of the set of vertices $v$ for which $\Yh^W_v\neq{}Y_v$. It must be the case that at least $\ceil{\abs{\delta(S)}/2}$ edges $(u,v)$ in $\delta(S)$ have $X_{uv}$ flipped from the ground truth, or else we could flip all the vertices in $S$ to get a new estimator that agrees with $X$ better than $\Yh$; this would be a contradiction since $\Yh$ minimizes $\sum_{uv\in{}E'(W)}\ind\crl{\Yh_u\Yh_v\neq{}X_{uv}}$. Applying a union bound, the failure probability is bounded by
{\small
\[
 \sum_{S\subseteq{}W:S\neq{}\emptyset, S\neq{}W}p^{\ceil{\abs{\delta(S)/2}}}\leq{} 2^{\abs{W}}p^{\ceil{\mincut{W}/2}}.
\]}

\end{proof}
\pref{prop:component_estimator} bounds the probability of failure for \emph{individual components},
 but does not immediately imply a bound on the total number of components that may fail for a given realization of $X$. If the components $\W$ did not overlap one could apply a Chernoff bound to establish such a result, as their predictions would be independent. Since components can in fact overlap their predictions are dependent, but using a sharper concentration inequality (from the \emph{entropy method} \citep{Boucheron2003Concentration}) we can show that --- so long as no edge appears in too many components --- an analogous concentration result holds and total number of components failures is close to the expected number with high probability.
  \begin{lemma}[Informal]
  \label{lem:conc_informal}
With high probability over the draw of $X$,
 {\small
 \begin{align}
 \label{eq:conc_simple}
 \min_{s\in\pmo^{\tW}}\sum_{W\in\tW}\ind\crl{s_W\Yh^W \ne Y^W} &\leq \Ot\prn*{\sum_{W\in\W}p^{\ceil{\mincut{W}/2}}}. \end{align}}
     \end{lemma}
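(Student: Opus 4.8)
The plan is to recast \pref{eq:conc_simple} as a concentration inequality for the integer-valued random variable $f\defeq\sum_{W\in\tW}\xi_W$, where $\xi_W\defeq\min_{s\in\pmo}\ind\crl{s\Yh^W\neq{}Y^W}\in\crl{0,1}$ records whether the component estimator for $W$ fails to recover $Y^W$ up to sign. Since $\min_{s\in\pmo^{\tW}}\sum_{W\in\tW}\ind\crl{s_W\Yh^W\neq{}Y^W}\leq{}f$ pointwise, it suffices to control $f$. Here \pref{prop:component_estimator} and linearity of expectation already give $\mu\defeq\mathbb{E}[f]\leq{}\Ot(\mu')$ with $\mu'\defeq\sum_{W\in\tW}p^{\ceil{\mincut{W}/2}}$, so the remaining task is to upgrade this first-moment estimate to a high-probability bound $f\leq{}\Ot(\mu')$.

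The structural fact I would exploit is that $f$ is a function of the \emph{independent} coordinates $\crl{X_e}_{e\in{}E'}$ with bounded local dependence: each $\xi_W$ is a deterministic function of $\crl{X_e}_{e\in{}E'(W)}$ alone (the edges inspected by the $\argmin$ in \pref{eq:component_estimator}, once a deterministic tie-break is fixed), and by \pref{def:admissible} the number of bags containing a fixed edge --- namely $\deg_E(T)$ --- and the sizes $\abs{E'(W)}$ are all $O(1)$. Taking $f_e\defeq\sum_{W:\,e\notin{}E'(W)}\xi_W$ (which does not depend on $X_e$), one gets $0\leq{}f-f_e=\sum_{W:\,e\in{}E'(W)}\xi_W\leq{}\deg_E(T)$ for every $e$, and $\sum_{e\in{}E'}(f-f_e)=\sum_{W\in\tW}\abs{E'(W)}\,\xi_W\leq{}\bigl(\max_W\abs{E'(W)}\bigr)f$. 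Hence $f/\deg_E(T)$ is weakly self-bounding with constant parameters, and the entropy-method concentration inequality for (weakly) self-bounding functions \citep{Boucheron2003Concentration} yields a Bernstein-type upper tail $\Pr\bigl(f\geq{}\mu+t\bigr)\leq{}\exp\!\bigl(-\Omega(t^2/(\mu+t))\bigr)$, with the constant in $\Omega(\cdot)$ depending only on the admissibility constants.

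To finish, I would take $t=\max(\mu,\log^2 n)$: the exponent is then at least $\Omega(\log^2 n)$ if $\mu\leq{}\log^2 n$ and at least $\Omega(\mu)$ if $\mu>\log^2 n$, and in either case it tends to infinity, so with high probability $f\leq{}\mu+t\leq{}2\max(\mu,\log^2 n)=\Ot(\mu')$; in the residual regime $\mu'=o(1)$ the crude bound $\Pr(f\geq{}1)\leq{}\mu\leq{}\Ot(\mu')$ from Markov's inequality already forces $f=0$ with high probability since $f$ is integer-valued. This yields \pref{eq:conc_simple}. The general case is identical after replacing $E'(W)$, $\deg_E(T)$, and $\mincut{W}$ by $E'(W^{\star})$, $\deg^{\star}_E(T)$, and $\mincuts{W}$ respectively, all of which remain $O(1)$ under \pref{def:admissible}.

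I expect the concentration step to be the main obstacle. A plain bounded-differences/McDiarmid inequality only controls deviations of $f$ at the scale $\sqrt{\abs{E'}}=\Theta(\sqrt{n})$, which far exceeds the target $\mu'\approx{}p^{\ceil{\Delta/2}}n$ whenever $p$ is polynomially small in $n$; it is precisely the self-bounding (equivalently, small-variance) structure of $f$ that pins its fluctuations to the scale of $\mu$ itself, turning the first-moment estimate into a high-probability multiplicative bound. The delicate point in establishing that structure is verifying that overlap between bags does not inflate the per-coordinate sensitivity beyond a constant, which is exactly what the admissibility hypotheses ($\deg_E(T)$ and $\max_W\abs{E'(W)}$ constant) provide.
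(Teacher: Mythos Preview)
Your proposal is correct and follows essentially the same route as the paper. The paper defines the same random variable $S(X)=\sum_{W\in\W}\min_{s\in\pmo}\ind\crl*{s\Yh^{W}(X)\neq{}Y^{W}}$, establishes the Efron--Stein--type variance bound $V_{+}\leq aS$ with $a=\max_{e}\abs{\W(e)}\cdot\max_{W}\abs{E'(W)}$ via exactly the overlap-counting argument you outline, and then invokes the entropy-method inequality of \citet{Boucheron2003Concentration} to obtain $S\leq 2\En[S]+6a\log(1/\delta)$; your framing via (weakly) self-bounding functions with the explicit witnesses $f_e=\sum_{W:\,e\notin E'(W)}\xi_W$ is just the companion formulation of the same tool and yields the same conclusion with the same constants.
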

 In light of \pref{eq:conc_simple}, consider the signing of the component-wise predictions $(\Yh^{W})$ that best matches the ground truth.
 {\small
\[
s^{\star} = \argmin_{s\in\pmo^{\tW}}\sum_{W\in\tW}\ind\crl{s_W\Yh^W \ne Y^W}.
\]}If we knew the value of $s^{\star}$ we could use it to produce a vertex prediction with a Hamming error bound matching \pref{eq:main_w}. Computing $s^{\star}$ is not possible because we do not have access to $Y$. We get the stated result by proceeding in a manner similar to the algorithm \pref{eq:treeopt} for the tree. We first define a class $\F\subseteq{}\pmo^{\tW}$ which has the property that 1) $s^{\star}\in\F$ with high probability and 2) $\abs{\F}\lesssim{} 2^{\Ot\prn*{\sum_{W\in\W}2^{\abs{W}}p^{\ceil{\mincut{W}/2}}}}$. Then we take the component labeling $\hat{s}$ is simply the element of $\F$ that is most correlated with the vertex observations $Z$: $\hat{s}=\argmin_{s\in\F}\sum_{W\in\tW}\sum_{v\in{}W}\ind\crl*{s_{W}\Yh^{W}_v\neq{}Z_v}$ (this is \pref{line:s_hat} of \pref{alg:decoder}). Finally, to produce the final prediction $\Yh_v$ for a given vertex $v$, we find $W\in\tW$ with $v\in{}W$ and take $\Yh_{v}=\hat{s}_{W}\cdot{}\Yh^{W}_v$. A generalization bound from statistical learning theory then implies that this predictor enjoys error at most $\Ot(\log\abs{\F}) = \Ot\prn*{\sum_{W\in\W}2^{\abs{W}}p^{\ceil{\mincut{W}/2}}}$, which establishes the main theorem.

\paragraph{Efficient implementation} Both the tree algorithm and \pref{alg:decoder} rely on solving a constrained optimization problem of the form \pref{eq:treeopt}. In \pref{app:algorithms} we show how to perform this procedure efficiently using a message passing scheme.

\subsection{Lower Bounds: General Tools}
In this section we state simple lower bound techniques for  \pref{mod:edge_vertex_measurement}.
Recall that we consider $q$ as a constant, and thus we are satisfied with lower bounds that coincide with our upper bounds up to polynomial dependence on $q$.

\begin{theorem}
\label{theorem:general_lb}
Assume $p<q$. Then any algorithm for  \pref{mod:edge_vertex_measurement} incurs expected hamming error $\Omega(\sum_{v\in{}V}p^{\ceil{\deg(v)/2}})$.
\end{theorem}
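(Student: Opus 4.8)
The plan is to prove a matching \emph{minimax} lower bound. Place the uniform prior on $Y\in\vs$, so that for any (possibly randomized) estimator $\Yh=\Yh(G,X,Z)$ we have $\sup_{Y}\mathbb{E}_{X,Z\mid Y}\big[\sum_{v}\ind\crl{\Yh_v\neq Y_v}\big]\geq \mathbb{E}_{Y}\mathbb{E}_{X,Z\mid Y}\big[\sum_{v}\ind\crl{\Yh_v\neq Y_v}\big]=\sum_{v\in V}\Pr\crl{\Yh_v\neq Y_v}$, where the probability is now over $Y$ uniform and over the noisy observations. By linearity of expectation it suffices to show $\Pr\crl{\Yh_v\neq Y_v}=\Omega\big(p^{\ceil{\deg(v)/2}}\big)$ for every $v$, with a constant uniform in $v$; this last point is where we use $\deg(G)=O(1)$.

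Fix $v$, put $d=\deg(v)$, and let $\delta(v)$ be the set of $d$ edges incident to $v$. Pair each $w\in\vs$ with $w^{(v)}$, the configuration obtained by flipping coordinate $v$ only. Conditioned on $Y$, the block of observations $(X_{\delta(v)},Z_v)$ is independent of all remaining observations $(X_{E\setminus\delta(v)},Z_{V\setminus v})$, and the conditional law of those remaining observations is \emph{identical} under $Y=w$ and $Y=w^{(v)}$ since $w$ and $w^{(v)}$ agree off $v$. Hence the likelihood ratio between the hypotheses $Y=w$ and $Y=w^{(v)}$ depends only on $(X_{\delta(v)},Z_v)$, and the optimal estimator gains nothing at coordinate $v$ from the rest of the observations. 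Writing $P_0,P_1$ for the conditional laws of $(X_{\delta(v)},Z_v)$ under $Y=w$ and $Y=w^{(v)}$, conditioning on the unordered pair $\crl{w,w^{(v)}}$ (its two elements equally likely under the uniform prior) and lower bounding the error of an arbitrary estimator by the Bayes risk of the likelihood-ratio test gives $\Pr\crl{\Yh_v\neq Y_v}\geq \tfrac12\,\mathbb{E}_{w}\big[\sum_{\omega}\min(P_0(\omega),P_1(\omega))\big]=\tfrac12\,\mathbb{E}_{w}\big[1-\mathrm{TV}(P_0,P_1)\big]$.

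It remains to show $1-\mathrm{TV}(P_0,P_1)=\sum_{\omega}\min(P_0(\omega),P_1(\omega))=\Omega(p^{\ceil{d/2}})$ uniformly in $w$. Using $\sum_{\omega}\min(P_0,P_1)\geq P_0(\mc{B})$ for any event $\mc{B}$ on which $P_0\leq P_1$ pointwise, take $\mc{B}=\crl{\text{at least }\ceil{d/2}\text{ edges of }\delta(v)\text{ are flipped relative to }w}\cap\crl{Z_v=-w_v}$. Flipping coordinate $v$ complements the set of flipped incident edges, so under $Y=w^{(v)}$ at most $d-\ceil{d/2}$ incident edges are flipped on $\mc{B}$; combined with $p<1/2$ this gives $P_0\leq P_1$ in the edge coordinates, while $\Pr\crl{Z_v=-w_v\mid Y_v=w_v}=q\leq 1-q=\Pr\crl{Z_v=-w_v\mid Y_v=-w_v}$ since $q<1/2$, so $P_0\leq P_1$ on all of $\mc{B}$. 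By the conditional independence of $X_{\delta(v)}$ and $Z_v$, $P_0(\mc{B})=q\cdot\Pr\crl{\bin(d,p)\geq\ceil{d/2}}\geq q\binom{d}{\ceil{d/2}}p^{\ceil{d/2}}(1-p)^{d-\ceil{d/2}}$. Since $d\leq\deg(G)=O(1)$ and $q$ is constant, this is $\Omega(p^{\ceil{\deg(v)/2}})$ with the hidden constant depending only on $q$ and $\deg(G)$; summing over $v\in V$ then yields the theorem. (We use only $p<1/2$ and $q\in(0,1/2)$, both implied by the model assumptions.)

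The two-point reduction and the binomial estimate are routine. The step that needs the most care — and the main obstacle — is the one in the second paragraph: arguing that the Bayes-optimal estimator cannot exploit observations outside the local block $(X_{\delta(v)},Z_v)$ to tell $w$ from $w^{(v)}$, i.e.\ isolating the conditional-independence / identical-marginal structure that reduces the per-vertex problem to a clean binary test, and then correctly converting the statement ``$w^{(v)}$ is at least as likely as $w$ on $\mc{B}$'' into a lower bound on the coordinate-$v$ error of an arbitrary (possibly randomized) estimator via the total-variation form of the Bayes risk.
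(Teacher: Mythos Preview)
Your proof is correct and shares the paper's core idea: localize to a single vertex and lower-bound the per-vertex error by the binomial tail $\Pr(\bin(d,p)\geq\lceil d/2\rceil)$. The packaging differs slightly. The paper localizes via a genie argument (hand the estimator $Y_{V\setminus\{v\}}$, then write down the MAP test explicitly and compute its error); you localize via the Le Cam two-point reduction $w$ versus $w^{(v)}$ and bound the Bayes risk through $1-\mathrm{TV}$. These are standard equivalent formulations of the same reduction. One substantive difference in your favor: the paper invokes $p<q$ to argue that the MAP test is a majority vote on $\{Y_uX_{uv}\}_{u\in N(v)}$ with $Z_v$ used only to break ties, whereas your total-variation argument needs only $p<1/2$ and $q\in(0,1/2)$, so the hypothesis $p<q$ is never actually used in your version.
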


\begin{corollary}
Any algorithm for \pref{mod:edge_vertex_measurement} incurs expected hamming error $\Omega(p^{\avgdeg/2+1}n)$.
\end{corollary}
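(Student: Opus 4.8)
The plan is to derive the corollary directly from \pref{theorem:general_lb}, which (under the standing assumption $p<q$) already gives every algorithm an expected Hamming error of at least $\Omega\prn*{\sum_{v\in{}V}p^{\ceil{\deg(v)/2}}}$. It therefore suffices to establish the purely arithmetic inequality
\[
\sum_{v\in{}V}p^{\ceil{\deg(v)/2}} \;\geq\; p^{\avgdeg/2+1}\,n .
\]

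First I would strip off the ceiling. Since $\ceil{\deg(v)/2}\leq{}\deg(v)/2+1$ and $0<p<1$ — so that raising $p$ to a larger exponent can only decrease it — we get $p^{\ceil{\deg(v)/2}}\geq{}p^{\deg(v)/2+1}=p\cdot{}p^{\deg(v)/2}$. Summing over $v$,
\[
\sum_{v\in{}V}p^{\ceil{\deg(v)/2}}\;\geq\;p\sum_{v\in{}V}p^{\deg(v)/2}\;=\;p\sum_{v\in{}V}\prn*{\sqrt{p}}^{\deg(v)} .
\]

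Next I would apply Jensen's inequality to the convex map $x\mapsto\prn*{\sqrt{p}}^{x}=e^{(x/2)\ln p}$ (convex for any base in $(0,1)$), taking the uniform distribution over $V$ and the degree sequence as the random variable:
\[
\frac{1}{n}\sum_{v\in{}V}\prn*{\sqrt{p}}^{\deg(v)}\;\geq\;\prn*{\sqrt{p}}^{\frac{1}{n}\sum_{v\in{}V}\deg(v)}\;=\;\prn*{\sqrt{p}}^{\avgdeg}\;=\;p^{\avgdeg/2},
\]
where the last two equalities use $\avgdeg=\frac{1}{n}\sum_{v\in{}V}\deg(v)$ by definition of the average degree. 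Chaining the two displays yields $\sum_{v\in{}V}p^{\ceil{\deg(v)/2}}\geq{}p\cdot{}n\cdot{}p^{\avgdeg/2}=p^{\avgdeg/2+1}n$, which is exactly the claimed lower bound.

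There is no genuine obstacle here; the whole content is convexity of the exponential. The only points that warrant a moment's care are (i) keeping track of the direction of the inequality when enlarging an exponent, which flips precisely because $p<1$, and (ii) noting that the crude bound $\ceil{\deg(v)/2}\leq{}\deg(v)/2+1$ is already enough to land on the exponent $\avgdeg/2+1$ stated in the corollary — the sharper estimate $\ceil{\deg(v)/2}\leq(\deg(v)+1)/2$ would in fact give the slightly stronger exponent $\avgdeg/2+1/2$, but this is not needed.
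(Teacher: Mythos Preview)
Your argument is correct. The paper does not actually give a proof of this corollary; it simply states it immediately after \pref{theorem:general_lb} and leaves the derivation implicit. Your route --- drop the ceiling via $\ceil{\deg(v)/2}\leq \deg(v)/2+1$, then apply Jensen to the convex map $x\mapsto p^{x/2}$ against the uniform distribution on $V$ --- is exactly the natural way to pass from $\sum_{v}p^{\ceil{\deg(v)/2}}$ to $p^{\avgdeg/2+1}n$, and is almost certainly the intended justification. Your closing remark that the tighter bound $\ceil{\deg(v)/2}\leq (\deg(v)+1)/2$ would yield exponent $\avgdeg/2+1/2$ is also correct and a nice observation, though as you note it is not needed to match the stated corollary.
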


\begin{theorem}
\label{theorem:system_lb}
Let $\tW$ be a collection of disjoint constant-sized subsets of $V$. Then for all $p$ below some constant, any algorithm for \pref{mod:edge_vertex_measurement} incurs expected Hamming error $\Omega(\sum_{W\in\tW}p^{\ceil{\abs{\delta_{G}(W)}/2}})$.
\end{theorem}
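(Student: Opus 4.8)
The plan is a Bayesian minimax lower bound that parallels the proof of \pref{theorem:general_lb}. I would restrict the adversary to ground truths $Y$ that are \emph{constant on each component} $W\in\tW$ — say $Y_v=\sigma_W$ for all $v\in W$ — and equal to $+1$ on $V\setminus\bigcup_W W$, and place the prior under which the $\sigma_W$ are i.i.d.\ uniform on $\pms$. Any lower bound on the Bayes risk of this restricted problem lower-bounds the minimax risk of \pref{mod:edge_vertex_measurement}. For any estimator $\Yh$, letting $\hat\sigma_W$ be the majority value of $\Yh$ on $W$, we have the pointwise bound $\sum_{v\in W}\ind\crl{\Yh_v\neq Y_v}\geq \frac{\abs{W}}{2}\,\ind\crl{\hat\sigma_W\neq\sigma_W}$; since the $W$'s are disjoint, it therefore suffices to prove $\Pr\brk*{\hat\sigma_W\neq\sigma_W}=\Omega\prn*{p^{\ceil{\abs{\delta_G(W)}/2}}}$ for each $W$ separately and sum.

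For a fixed $W$, I would hand the algorithm a genie revealing $\sigma_{W'}$ for every $W'\neq W$, which can only help. Conditioned on this, every edge in $\delta_G(W)$ has a known endpoint outside $W$, so it constitutes one observation of $\sigma_W$ that is correct with probability $1-p$; the side information $Z$ on $W$ provides $\abs{W}$ further observations of $\sigma_W$, each correct with probability $1-q$; and the edges inside $W$ have a law independent of $\sigma_W$, so they carry no information. We are thus back to the one-bit estimation problem analyzed for \pref{theorem:general_lb}: the Bayes estimator is the weighted majority vote, with weight $\log\frac{1-p}{p}$ on each of the $d_W\defeq\abs{\delta_G(W)}$ edge-observations and weight $\log\frac{1-q}{q}=O(1)$ on each side-observation.

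Because $p$ is below a constant depending only on $q$ and $\max_W\abs{W}$, a single edge-observation outweighs all $\abs{W}$ side-observations combined, so the vote errs once the edge-observations alone point the wrong way: for $d_W$ odd this means at least $\ceil{d_W/2}$ of them are flipped, and for $d_W$ even it means either more than $d_W/2$ are flipped or exactly $d_W/2$ are flipped and the side-observations tip the balance wrongly (which happens with probability bounded away from $0$). In all cases a binomial-tail estimate — valid since $d_W\le\abs{W}\deg(G)$ is a constant — gives that the vote errs with probability $\Omega\prn*{p^{\ceil{d_W/2}}}$. As the Bayes estimator minimizes the error probability and the genie only helps, every estimator, in particular $\hat\sigma_W$ (which does not even use the genie), satisfies $\Pr\brk*{\hat\sigma_W\neq\sigma_W}\geq\Omega\prn*{p^{\ceil{d_W/2}}}$; summing over $\tW$ and using disjointness yields $\Omega\prn*{\sum_{W\in\tW}p^{\ceil{\abs{\delta_G(W)}/2}}}$.

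The step I expect to require the most care is the decoupling: components are allowed to be adjacent, and an edge joining two components is a noisy observation of $\sigma_W\sigma_{W'}$ rather than of a single bit, so the ``independent one-bit channels'' picture only emerges after the genie reveals the neighboring bits — getting this conditioning right is the crux. The remaining bookkeeping — tracking the even/odd parity of $d_W$ in the one-bit Bayes-error computation, and making precise that the $O(1)$-weight side information cannot overturn a strict edge majority (which is exactly where the hypothesis ``$p$ below some constant'' enters) — is routine and mirrors the argument for \pref{theorem:general_lb}.
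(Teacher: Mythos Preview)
Your proposal is correct and follows essentially the same route as the paper: reduce each component to a single-bit estimation problem by giving the algorithm all information outside $W$ (you do this via a restricted prior plus a genie, the paper does it by revealing $Y_{V\setminus W}$ and the internal edge products $Y_uY_v$ for $uv\in E(W)$), then invoke the one-bit Bayes-error calculation from \pref{theorem:general_lb} with $|\delta_G(W)|$ playing the role of $\deg(v)$ and the $|W|$ vertex observations playing the role of the single $Z_v$. The paper phrases the ``$p$ below some constant'' hypothesis as $p<q^{\max_W|W|}$, which is exactly your condition that one edge-weight $\log\frac{1-p}{p}$ dominates $|W|$ side-weights $\log\frac{1-q}{q}$.
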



\section{Concrete Results for Specific Graphs}
\label{sec:examples}
We now specialize the tools developed in the previous section to provide tight upper and lower bounds on recovery for concrete classes of graphs.

\subsection{Connected Graphs}
\begin{example}[Arbitrary graphs]
\label{ex:connected}
For any connected graph $G$, the following procedure attains an error rate of $\Ot(pn)$ with high probability: {\bf 1.} Find a spanning tree $T$ for $G$. {\bf 2.} Run the algorithm from \pref{sec:trees} on $T$.

\noindent This rate is sharp, in the sense that there are connected graphs --- in particular, all trees --- for which  $\Omega(pn)$ Hamming error is optimal. Furthermore, for all graphs one can attain an estimator whose Hamming error is bounded as $\Ot(p n + \#\mathrm{connected~components})$ by taking a spanning tree for each component. This bound is also sharp.
\end{example}
The next example shows that there are connected graphs beyond trees for which $\Omega(pn)$ Hamming error is unavoidable.
More generally, $\Omega(pn)$ Hamming error is unavoidable for any graph with a linear number of degree-$2$ vertices.

Looking at \pref{theorem:general_lb}, one might be tempted to guess that the correct rate for inference is determined entirely by the degree profile of a graph. This would imply, for instance, that for any $d$-regular graph the correct rate is $\Theta(p^{\ceil{d/2}}n)$. The next example --- via \pref{theorem:system_lb} --- shows that this is not the case.
\begin{example}
\label{ex:dregular_lb}
For any constant $d$, there exists a family of $d$-regular graphs on $n$ vertices for which no algorithm in \pref{mod:edge_vertex_measurement} attains lower than $\Omega(pn)$ Hamming error.
\end{example}
This construction for $d=3$ is illustrated in \pref{fig:3regular}. We note that this lower bound hides a term of order $q^{\Omega(d)}$,
 but for constant $q$ and $d$ it is indeed order $\Omega(pn)$.

\begin{figure}[h!]
\centering
\includegraphics[scale=.2]{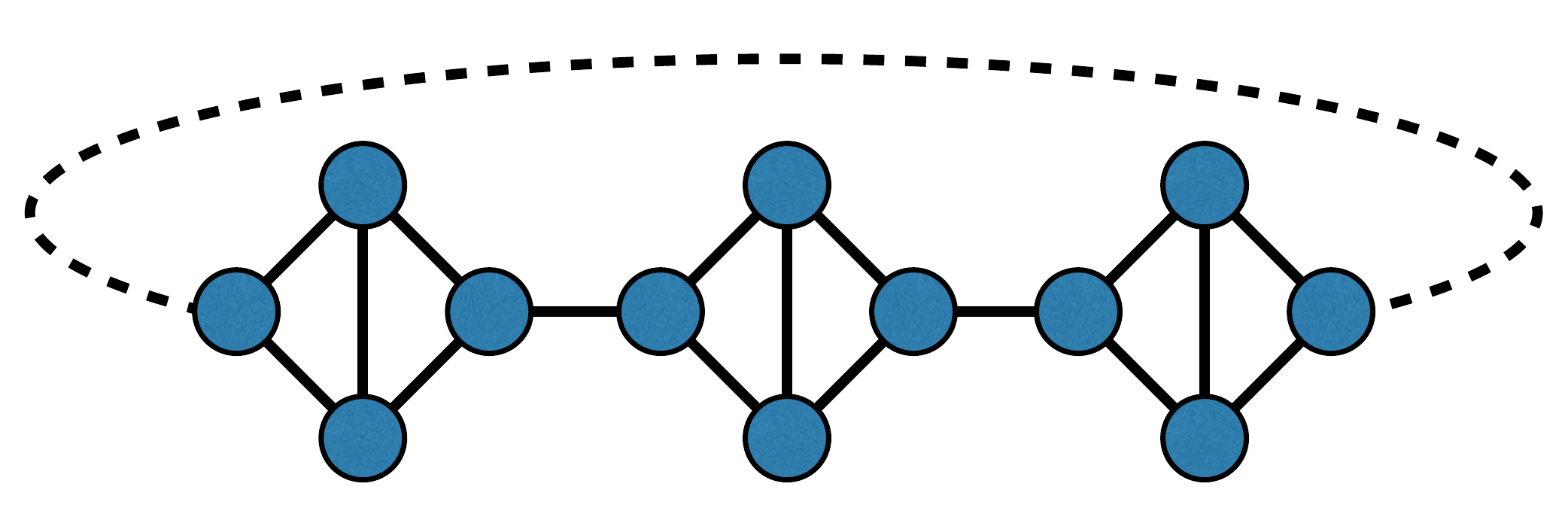}
\caption{$3$-regular graph for which $O(pn)$ error rate is optimal.}
\label{fig:3regular}
\end{figure}

\subsection{Grid Lattices}

\begin{figure}[h!]
\centering
\begin{subfigure}{.25\textwidth}
\centering
\includegraphics[scale=.35]{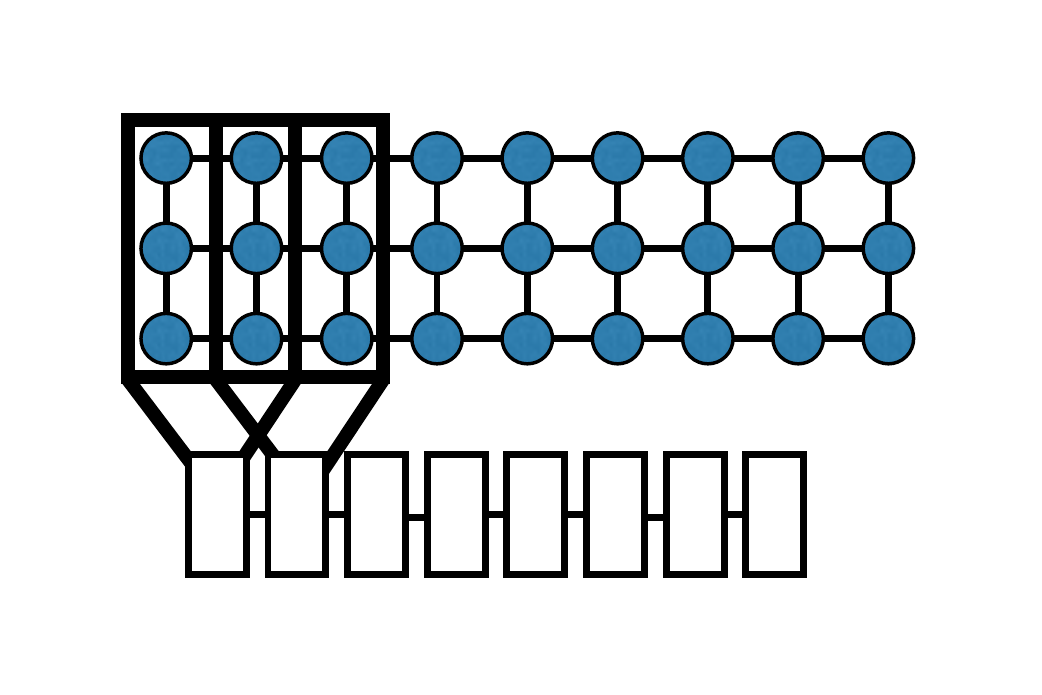}
\caption{Tree decomposition for $3\times{}n/3$ grid.}
\label{fig:const_height_decomp}
\end{subfigure}\hspace{.2\linewidth}\begin{subfigure}{.25\textwidth}
\centering
\includegraphics[scale=.3]{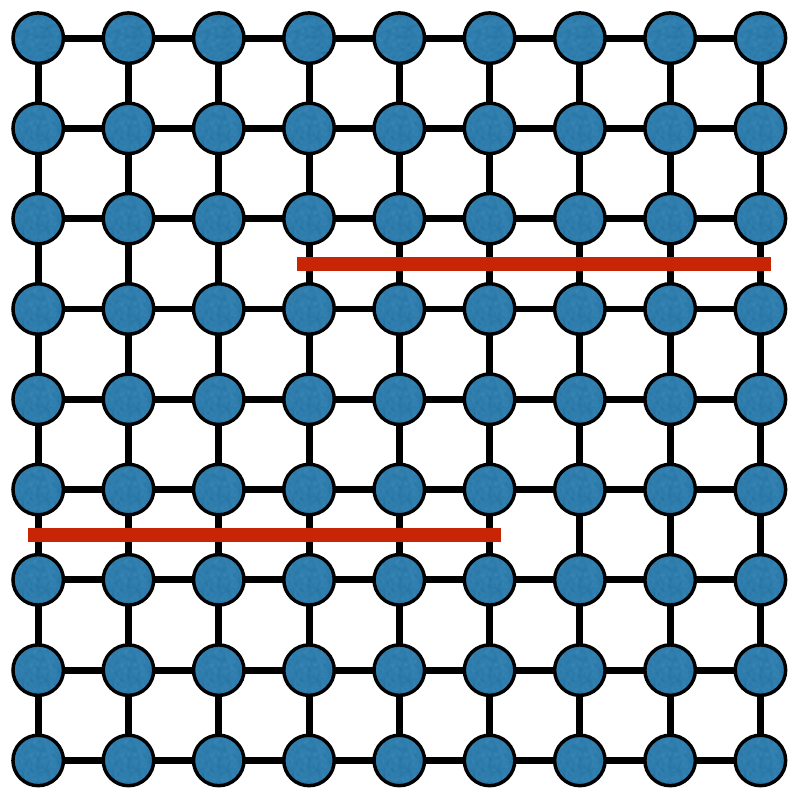}
\caption{$E'$ for $\sqrt{n}\times{}\sqrt{n}$ grid.\\~}
\label{fig:grid_cut}
\end{subfigure}
\caption{}
\label{fig:decomp_examples}
\end{figure}

In this section we illustrate how to use the tree-decomposition based algorithm, \pref{alg:decoder}, to obtain optimal rates for grid lattices.

\begin{example}[$2$-dimensional grid]
\label{ex:grid}
Let $G$ be a $2$-dimensional grid lattice of size $c \times n/c$ where $c \le \sqrt{n}$. For grid of height $c = 3$ (or above) using \pref{alg:decoder}, we obtain an estimator $\wh{Y}$ such that with high probability, the Hamming error is bounded as $O(p^2 n)$. This estimator runs in time $O(\ceil{p^2 n}n)$,
 By the degree profile argument (also given in \cite{Globerson2015Hard}), there is a matching lower bound of $\Omega(p^2 n)$. For a grid of height $c=1$ there is an obvious lower bound of $\Omega(pn)$ since this graph is a tree.

\end{example}
The estimator of \cite{Globerson2015Hard} can be shown to have expected Hamming error of $O(p^2 n)$ for the 2-dimensional grid with $c = \Omega(\log n)$. Our method works for \emph{constant height grids} ($c= O(1)$) and with high probability.

\pref{alg:decoder} of course requires a tree decomposition as input. The tree decomposition used to obtain \pref{ex:grid} for constant-height grids is illustrated in \pref{fig:const_height_decomp} for $c=3$: The grid is covered in overlapping $3\times{}2$ components, and these are connected as a path graph to form the tree decomposition.

The reader will observe that this tree decomposition has $\mincut{W}=2$, and so only implies a $O(pn)$ Hamming error bound through \pref{theorem:main_simple}. This rate falls short of the $O(p^{2}n)$ rate promised in the example; it is no better than the rate if $G$ were a tree. The problem is that within each $3\times{}2$ block, there are four ``corner'' nodes each with degree $2$. Indeed if either edge connected to a corner is flipped from the ground truth, which happens with probability $p$, this corner is effectively disconnected from the rest of $W$ in terms of information. To sidestep this issue, we define $\textsc{Extend}(W)=\bigcup_{v\in{}W}N(v)$. With this extension, we have $\mincuts{W}=3$ for all components except the endpoints, which implies the $O(p^{2}n)$ rate.

\paragraph{Probing Edges} We now illustrate how to extend the tree decomposition construction for constant-height grids to a construction for grids of arbitrary height.
Recall that \pref{alg:decoder} takes as input a subset $E'\subseteq{}E$ and a tree decomposition $T$ for $G'=(V,E')$. To see where using only a subset of edges can be helpful consider \pref{fig:const_height_decomp} and \pref{fig:grid_cut}. The $3\times{}n/3$ grid is ideal for our decoding approach because it can be covered in $3\times{}2$ blocks as in \pref{fig:const_height_decomp} and thus has treewidth at most 5. The $\sqrt{n}\times{}\sqrt{n}$ grid is more troublesome because it has treewidth $\sqrt{n}$, but we can arrive at $G'$ with constant treewidth by removing $\Theta(n)$ edges through the ``zig-zagging'' cut shown in \pref{fig:grid_cut}. Observe that once the marked edges in \pref{fig:grid_cut} are removed we can ``unroll'' the graph and apply a decomposition similar to \pref{fig:const_height_decomp}.

The tree decomposition construction we have outlined for two-dimensional grids readily lifts to higher dimension. This gives rise to the next example.

\begin{example}[Hypergrids and Hypertubes]
\label{ex:hypertube}
Consider a three-dimensional grid lattice of of length $n/c^2$, height $c$, and width $c$. If $c = n^{1/3}$ --- that is, we have a cube --- then \pref{alg:decoder} obtains Hamming error $\wt{O}(p^3 n)$ with high probability, which is optimal by \pref{theorem:general_lb}.

When $c$ is constant, however, the optimal rate is $\Omega(p^{2}n)$; this is also obtained by \pref{alg:decoder}. This contrasts the two-dimensional grid, where the optimal rate is the same for all $3\leq{}c\leq{}\sqrt{n}$.

\pref{alg:decoder} can be applied to any d-dimensional hypergrid of the form $c\times{}c\times{}\ldots{}n/c^{d-1}$ to achieve $O(p^{d}n)$ Hamming error when $c\approx{}n^{1/d}$. For constant $c$, the optimal rate is $\Theta(p^{\ceil{\frac{d+1}{2}}}n)$. More generally, the optimal rate interpolates between these extremes.
\end{example}

The next two examples briefly sketch how to apply tree decompositions to more lattices. Recall that the triangular lattice and hexagonal lattice are graphs whose drawings can be embedded in $\R^{2}$ to form regular triangular and hexagonal tilings, respectively.

\begin{example}[Triangular Lattice]
Consider a triangular lattice of height and width $\sqrt{n}$.
Let each component to be a vertex and its $6$ neighbors (except for the edges of the mesh), and choose these components such that the graph is covered completely. For a given component, let $W^{\star}$ to be the neighborhood of $W$. For this decomposition $\mincuts{W}$ is $6$ and consequently \pref{alg:decoder} achieves Hamming error $\wt{O}(p^{3}n)$. This rate is optimal because all vertices in the graph have degree $6$ besides those at the boundary, but the number of vertices on the boundary is sub-linear.
\end{example}
The triangular lattice example in particular shows that there exist graphs of average degree $3$ for which an error rate of $O(p^2 n)$ is achievable.

\begin{example}[Hexagonal Lattice]
Consider a $\sqrt{n}\times\sqrt{n}$ hexagonal lattice. Take each component $W$ to be a node $v$ and its neighbors, and choose the nodes $v$ so that the graph is covered. Choose $W^{\star}$ to be the neighborhood of the component $W$. The value of $\mincuts{W}$ for each component is $3$, leading to a Hamming error rate of $\wt{O}(p^{2}n)$. This rate is optimal because all vertices on the lattice except those at the boundary have degree $3$.
\end{example}

\subsection{Newman-Watts Model}

To define the Newman-Watts small world model \citep{newman1999renormalization}, we first define the regular ring lattice, which serves as the base graph for this model. The \emph{regular ring lattice} $R_{n,k}$ is a $2k$-regular graph on $n$ vertices defined as follows:  1) $V=\crl*{1,\ldots,n}$. 2) $E=\crl*{(i,j)\mid{} j\in\crl*{i+1,\ldots,i+k \;(\textrm{mod } n)}}$. \pref{theorem:general_lb} immediately implies that the best rate possible in this model is $\Omega(p^{k}n)$. Using \pref{alg:decoder} with an appropriate decomposition it is indeed possible to achieve this rate.
\begin{example}
\label{ex:ring_lattice}
The optimal Hamming rate for $R_{n,k}$ in \pref{mod:edge_vertex_measurement} is $\wt{\Theta}(p^{k}n)$. Moreover, this rate is achieved by an efficiently by \pref{alg:decoder} in time $O(\ceil{p^{k}n}n)$.
\end{example}
Note that for constant $k$, $R_{n,k}$ does not have the weak expansion property, and so the algorithm of \cite{Globerson2015Hard} does not apply.
We can now move on to the Newman-Watts model:
\begin{definition}[Newman-Watts Model]
To produce a sample from the Newman-Watts model $H_{n,k,\alpha}$, begin with $R_{n,k}$, then independently replace every non-edge with an edge with probability $\alpha/n$.
\end{definition}

For any constant $\alpha<1$, a constant fraction of the vertices in $R_{n,k}$ will be untouched in $H_{n,k,\alpha}$. Thus, the inference lower bound for \pref{ex:ring_lattice} still applies, meaning that the optimal rate is $O(p^{k}n)$. Algorithmically, this result can be obtained by discarding the new edges and using the same decomposition as in \pref{ex:ring_lattice}.
\begin{example}
\label{ex:newman_watts}
For any $\alpha<1$, the optimal Hamming rate for $H_{n,k,\alpha}$ in \pref{mod:edge_vertex_measurement} is $\wt{\Theta}(p^{k}n)$. Moreover, this rate is achieved in time $O(\ceil{p^{k}n}n)$ by \pref{alg:decoder} .
\end{example}


\section{Discussion}
\label{sec:discussion}

We considered \pref{mod:edge_vertex_measurement}, introduced in \cite{Globerson2015Hard}, for approximately inferring the ground truth labels for nodes of a graph based on noisy edge and vertex labels. We provide a general method to deal with arbitrary graphs that admit small width tree decompositions of (edge)-subgraphs. As a result, we recover the results in \cite{Globerson2015Hard} for grids, and are able to provide rates for graphs that do not satisfy the weak expansion property which is needed for the proof techniques in \cite{Globerson2015Hard}. Furthermore, in contrast to most existing work, we demonstrate that recovery tasks can be solved even on sparse ``nonexpanding" graphs such as trees and rings. 

There are several future directions suggested by this work. Currently, it is a nontrivial task to characterize the optimal error rate achievable for a given graph, and it is unclear how to extend our methods to families beyond lattices and graphs of small treewidth. Exploring connections between further graph parameters and achievable error rates is a compelling direction, as our understanding of optimal sample complexity remains quite limited. The challenge here entails both finding what can be done information-theoretically, as well as understanding what recovery rates can be obtained efficiently.


\bibliography{refs}

\appendix

\section{Further discussion of related work}

\label{app:related_work}

\paragraph{Computational Results for Markov Random Fields}

There is a long line of work on computational aspects of inference (e.g. MLE, MAP) in Markov Random Field models similar to \pref{mod:edge_vertex_measurement} \citep{veksler1999efficient, boykov2006graph, komodakis2007approximate, schraudolph2009efficient, chandrasekaran2012complexity}. To our knowledge none of these results shed light on the statistical recovery rates that are attainable for this setting ---  computationally efficiently or not.

\paragraph{Censored Block Model}

A recent line of research has studied recovery under the so-called
\emph{censored block model} (CBM). In CBM, vertices are labeled by $\pm 1$
and for every edge $uv$, the number $Y_uY_v$ is observed independently with probability $1-q$ (where $Y_u,Y_v$ are the labels of the vertices).
The goal is to find the true label $Y_uY_v$ of each edge $uv$ correctly with high probability
(based on the noisy observations).
For partial recovery in the censored block model we ask for a prediction whose correlation with the ground truth (up to sign) is constant strictly greater than $1/2$ as $n\to\infty$.
For the Erd\"{o}s-R\'{e}nyi 
random graph model, $G(n,\alpha/n)$ both the threshold (how large $\alpha$ needs to be in terms of $p$) for partial \cite{Saade2015Spectral}
and exact \cite{Abbe2014Decoding} recovery have been determined
Exact recovery is obtained through maximum likelihood estimation which is generally intractable.  The authors provide a polynomial time algorithm based on semidefinite programming that matches this threshold up to constant factors.

We observe that in our setting, due to the presence of side information, there is a simple and efficient algorithm that achieves exact recovery with high probability when the minimal degree is $\Omega(\log n)$: \pref{thm:large_degree}.
Such exact recovery algorithms are known for CBM model only under additional spectral expansion conditions \cite{Abbe2014Decoding}. 
\paragraph{Recovery from Pairwise Measurements}
\cite{Chen2014Information} provide conditions on exact recovery in a censored block model-like setting which, like our own, considers structured classes of graphs. Motivated by applications in computational biology and social networks analysis, \cite{chen2016community} have recently considered \emph{exact recovery} for edges in this setting. Like the present work, they consider sparse graphs with local structure such as grids and rings. Because their focus is \emph{exact} recover and their model does not have side information, their results mainly apply to graphs of logarithmic degree and our incomparable to our own results. For example, on the ring lattice $R_{n,k}$ in \pref{ex:ring_lattice} their exact recovery result requires $k=\Omega(\log(n))$, whereas our partial recover result concerns constant $k$.

\paragraph{Correlation Clustering}
Correlation clustering focuses on a combinatorial optimization problem closely related to the maximum likelihood estimation problem for our setting when we are only given edge labels. The main difference from our work is that the number of clusters is not predetermined. Most work on this setting has focused on obtaining approximation algorithms and has not considered any particular generative model for the weights (as in our case). An exception is \cite{Joachims2005Error}, which gives partial recovery results in a model similar to the one we consider, in which a ground truth partition is fixed and the observed edge labels correspond to some noisy notion of similarity. However, these authors focus on the case where $G$ is the complete graph.

\cite{Makarychev2015Correlation} consider correlation clustering where the model is a semi-random variant of the one we consider for the edge inference problem: Fix a graph $G=(V,E)$ and a vertex label $Y$. For each $uv\in{}E$, we observe $X_{uv}$ where $X_{uv}=Y_uY_v$ with probability $1-p$ and has its value in selected by an adversary otherwise. They do not consider side information, nor are they interested in concrete structured classes of graphs like grids.

\section{Omitted proofs}
\label{app:proofs}
\subsection{Proofs from \pref{sec:trees}}

\begin{proof}[Proof of \pref{theorem:tree_decoding}]
By the Bernstein inequality it holds that with probability at least $1-\delta/2$,
\[
\sum_{(u,v) \in E} \ind\crl{Y_u \ne X_{u,v} Y_v} \le 2 p n + 2\log(2/\delta).
\]
Thus, if we take $\F=\crl*{\Yh: \sum_{(u,v) \in E} \ind\crl{\Yh_u \ne X_{u,v} \Yh_v} \le 2 p n + 2\log(2/\delta)}$, then  $Y\in\F$ with probability at least $1-\delta/2$.

Fix $\Yh\in\vs$. We can verify by substitution that for each $v\in{}V$, 
\[
\ind\crl{\Yh_v\neq{}Y_v}=\frac{1}{1-2q}\brk*{\Pr_{Z}(\Yh_v\neq{}Z_v) - \Pr_{Z}(Y_v\neq{}Z_v)}.
\]
This implies that when $Y\in\F$ we have the following relation for Hamming error:
\[
\sum_{v\in{}V}\ind\crl*{\Yh_v\neq{}Y_v} = \frac{1}{1-2q}\brk*{\sum_{v\in{}V}\Pr(\Yh_v \ne Z_v) - \min_{Y'\in{}\F}\sum_{v\in{}V}\Pr\left(Y'_v \ne Z_v\right)}.
\]
\pref{cor:fast_rate_well_specified} now implies that if we take $\Yh=\argmin_{Y'\in\F}\sum_{v\in{}V}\ind\crl*{Y'_v\neq{}Z_v}$, which is precisely the solution to \pref{eq:treeopt}, then with probability at least $1-\delta/2$,
\[
\sum_{v\in{}V}\Pr\left(\Yh_v \ne Z_v\right) - \min_{Y'\in{}\F}\sum_{v\in{}V}\Pr\left(Y'_v \ne Z_v\right) \leq{}
\prn*{\frac{4}{3} + \frac{1}{\eps}}\log\prn*{\frac{2\abs{\F}}{\delta}}.
\]
Using that $\abs{\F}\leq{}\sum_{k=0}^{2pn+2\log(2/\delta)}{n\choose{}k}\leq{}(e/p)^{2pn + 2\log(2/\delta)}$ and $\eps\leq{}1/2$ we further have that the RHS is bounded as $\frac{2}{\eps}\log(2e/p\delta)(2pn + 2\log(2/\delta) + 1)$. Putting everything together (and recalling $1-2q=2\eps$), it holds that with probability at least $1-\delta$
\[
\sum_{v\in{}V}\ind\crl*{\Yh_v\neq{}Y_v} \leq{} \frac{1}{\eps^{2}}(2pn + 2\log(2/\delta) + 1)\log(2e/p\delta).
\]
\end{proof}

\subsection{Proofs from \pref{sec:inference}}
\begin{proof}[Proof of \pref{theorem:general_lb}]
The minimax value of the estimation problem is given by
\[
\min_{\Yh}\max_{Y}\En_{X,Z\mid{}Y}\sum_{v\in{}V}\ind\crl*{\Yh_v(X, Z)\neq{}Y_v}.
\]
We can move to the following lower bound by considering a game where each vertex predictor $\Yh_{v}$ is given access to the true labels $Y$ of all other vertices in $G$:
\[
\min_{\crl*{\Yh_v}_{v\in{}V}}\max_{Y}\En_{X,Z\mid{}Y}\sum_{v\in{}V}\ind\crl*{\Yh_v(X, Z, Y_{V\setminus\crl*{v}})\neq{}Y_v}.
\]
Under the new model, the minimax optimal predictor for a given node $v$ is given by the MAP predictor:
\[
\Yh_{v}=\argmin_{\Yh\in\pmo} \log\prn*{\frac{1-q}{q}}\ind\crl*{\Yh\neq{}Z_v} + \log\prn*{\frac{1-p}{p}}\sum_{u\in{}N_v}\ind\crl*{\Yh\neq{}Y_uX_{uv}}.
\]
When $p<q$, the minimax optimal estimator for $v$ takes the majority of the predictions suggested by its edges (that is, $Y_{u}\cdot{}X_{uv}$ for each neighbor $u$) and uses the vertex observation $Z_{v}$ to break ties.

When $\deg(v)$ is odd, the majority will be wrong if at least $\ceil{\deg(v)}$ of the edges in the neighbor of $v$ are flipped, and will be correct otherwise.

When $\deg(v)$ is even there are two cases: 1) Strictly more than $\ceil{\deg(v)}$ of the edges in $N(v)$ have been flipped, in which case the majority will be wrong. 2) Exactly half the edges are wrong, in which the optimal estimator will take the label $Z_v$ as its prediction, which will be wrong with probability $q$.
We thus have
\begin{align*}
\Pr(\Yh_v\neq{}Y_v) &= \sum_{k=\ceil{\deg(v)/2}}^{\deg(v)}{\deg(v)\choose{}k}p^{k}(1-p)^{\deg(v)-k}\\&\geq{} {\deg(v)\choose{}\ceil{\deg(v)/2}}p^{\ceil{\deg(v)/2}}(1-p)^{\deg(v)-k} \\
&\geq{} \prn*{\frac{\deg(v)}{\ceil{\deg(v)/2}}}^{\ceil{\deg(v)/2}}p^{\ceil{\deg(v)/2}}(1/2)^{\ceil{\deg(v)/2}} \\
&\geq{} \Omega(p^{\ceil{\deg(v)/2}}).
\end{align*}
In the last line we have used that we treat $\deg(v)$ as constant to suppress a weak dependence on it that arises when $\deg(v)$ is odd.
Putting everything together, we see that in expectation we have the bound
\[
\En\brk*{\sum_{v\in{}V}\ind\crl*{\Yh_{v}\neq{}Y_v}}  \geq{} \Omega\prn*{q\sum_{v\in{}V}p^{\ceil{\deg(v)/2}}}.
\]
\end{proof}

\begin{proof}[Proof of \pref{theorem:system_lb}]
Recall that the minimax value of the estimation problem is given by
\[
\min_{\Yh}\max_{Y}\En_{X,Z\mid{}Y}\sum_{v\in{}V}\ind\crl*{\Yh_v(X, Z)\neq{}Y_v}.
\]
As in the proof of \pref{theorem:general_lb}, we will move to a lower bound where predictors are given access to extra data. In this case, we consider a set of disjoint predictors $\crl*{\Yh^{W}}$, one for each component $W\in\tW$. We assume that $\Yh^{W}$ see the ground truth $Y_v$ for each vertex $v\notin{}W$, and further sees the product $Y_{uv}\defeq{}Y_uY_v$ for each edge $e\in{}E(W)$. Assuming $G(W)$ is connected (this clearly can only make the problem easier), the learner now only needs to infer one bit of information per component. The minimax value of the new game can be written as:
\[
\geq{}\min_{\crl*{\Yh^W}_{W\in\tW}}\max_{Y}\En_{X,Z\mid{}Y}\sum_{W\in\tW}\sum_{v\in{}W}\ind\crl*{\Yh^W_v(X, Z, Y_{V\setminus{}W}, \crl*{Y_{uv}\mid{}uv\in{}E(W)})\neq{}Y_v}.
\]
Because the learner only needs to infer a single bit per component, we have reduced to the setting of \pref{theorem:general_lb}, components in our setting as vertices in that setting (so $\deg(v)$ is replaced by $\delta_{G}(W)$). The only substantive difference is the following: In that lower bound, we required that $p<q$. For the new setting, we have that ``$q$'' is actually (pessimistically) $q^{\abs{W}}$, and so we require that $p<q^{\max_{W\in\tW}\abs{W}}$ for the bound to apply across all components. Using the final bound from \pref{theorem:general_lb}, we have
\[
\En\brk*{\sum_{v\in{}V}\ind\crl*{\Yh_{v}\neq{}Y_v}}  \geq{} \Omega\prn*{q^{\max_{W\in\tW}\abs{W}}\sum_{W\in\tW}p^{\ceil{\delta_{G}(W)/2}}}.
\]

\end{proof}

\subsection{Proofs from \pref{sec:examples}}

\begin{proof}[Proof of \pref{ex:connected}]

We will show that $\Omega(pn)$ Hamming error is optimal for all trees by establishing that all trees have constant fraction of vertices whose degree is at most two, then appealing to \pref{theorem:general_lb}.

Let $T$ be the tree under consideration. $T$ is bipartite. Let $(A,B)$ be the bipartition of $T$ into two disjoint independent sets. Suppose without loss of generality that $|A| \geq n/2$. 

If $a$ is the number of vertices in $A$ of degree at least $3$ and $a'=\abs{A}-a$, we have that $3a \leq n-1$, hence $a \leq (n-1)/3$. Therefore $a'\geq n/2-a\geq (n-1)/6$. Letting $A'$ be the set of vertices in $A$ with at most $2$ neighbors, we see that $A'$ is an independent set of size at least $(n-1)/6$, and so we appeal to \pref{theorem:general_lb} for the result.
\end{proof}

\begin{proof}[Proof of \pref{ex:dregular_lb}]
Fix $d\geq{}3$. We will construct a graph $G$ of size $(d+1)n$. By building up from components as follows:
\begin{itemize}
\item For each $k\in\brk{n}$ let $G_{k}$ be the complete graph on $d+1$ vertices. Remove an edge from an arbitrary pair of vertices $(u_k, v_k)$.
\item Form $G$ by taking the collection of all $G_k$, then adding an edge connecting $v_k$ to $u_{k+1}$ for each $k$, with the convention $u_{n+1}=u_1$.
\end{itemize}
This construction for $d=3$ is illustrated in \pref{fig:3regular}.

Observe that $G$ is $d$-regular. We obtain the desired result by applying \pref{theorem:system_lb} with the collection $\crl*{G_k}$ as the set system and observing that the each component $G_k$ has only two edges leaving.

\end{proof}
\begin{proof}[Proof of \pref{ex:grid}]
We first examine the case where $c=3$. Here we take the tree decomposition illustrated in \pref{fig:const_height_decomp}, where we cover the graph with overlapping $3\times{}2$ components, and take $W^{\star}=\bigcup_{v\in{}W}N_{v}$. This yields $\mincuts{W}=3$ for all components except those at the graph's endpoints. We now connect the components as a path graph and appeal to \pref{theorem:main_simple}, which implies a rate of $\wt{O}(p^{2}n)$.

When $c=\omega(1)$ we can build a decomposition as follows (informally): Produce $E'$ as in \pref{fig:grid_cut} by performing the zig-zag cut with every third row of edges, leaving only $3$ edges on the left or right side (alternating). We can now produce $T$ (a path graph) by tiling $G'$ with overlapping $3\times{}3$ components. Again, take $W^{\star}=\bigcup_{v\in{}W}N_{v}$.

We can verify that if we perform extended inference we have $\mincuts{W}=3$ for the $O(n)$ components in the interior of the graph and $\mincuts{W}=2$ for the $O(\sqrt{n})$ components at the boundary.  

The tree decomposition is illustrated in \pref{fig:grid_snake}. We have $\widths(T)=O(1)$ and $\deg_{E}(T)=O(1)$. Applying \pref{theorem:main_simple} thus gives an upper bound of $\wt{O}(p^{2}n + p\sqrt{n})$ with probability at least $1-\delta$.

Since $T$ is a path graph, we pay $O(n\ceil{p^2n})$ in computation as per \pref{app:algorithms}.

\begin{figure}[h!]
\centering
\includegraphics[scale=.5]{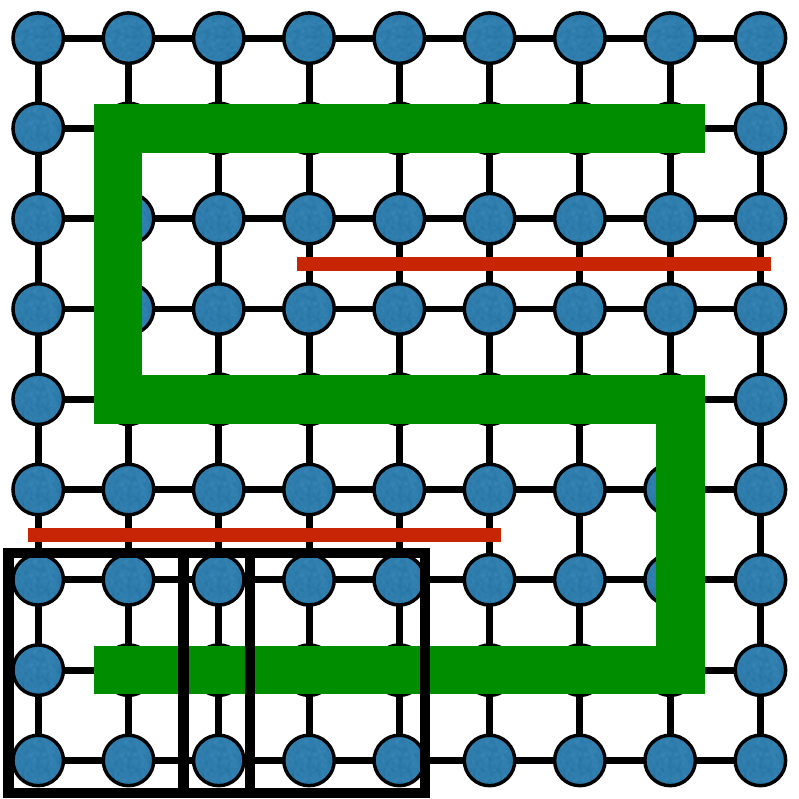}
\caption{Tree decomposition for 2D grid.}
\label{fig:grid_snake}
\end{figure}

\end{proof}

\begin{figure}[h!]
\centering
\includegraphics[scale=.2]{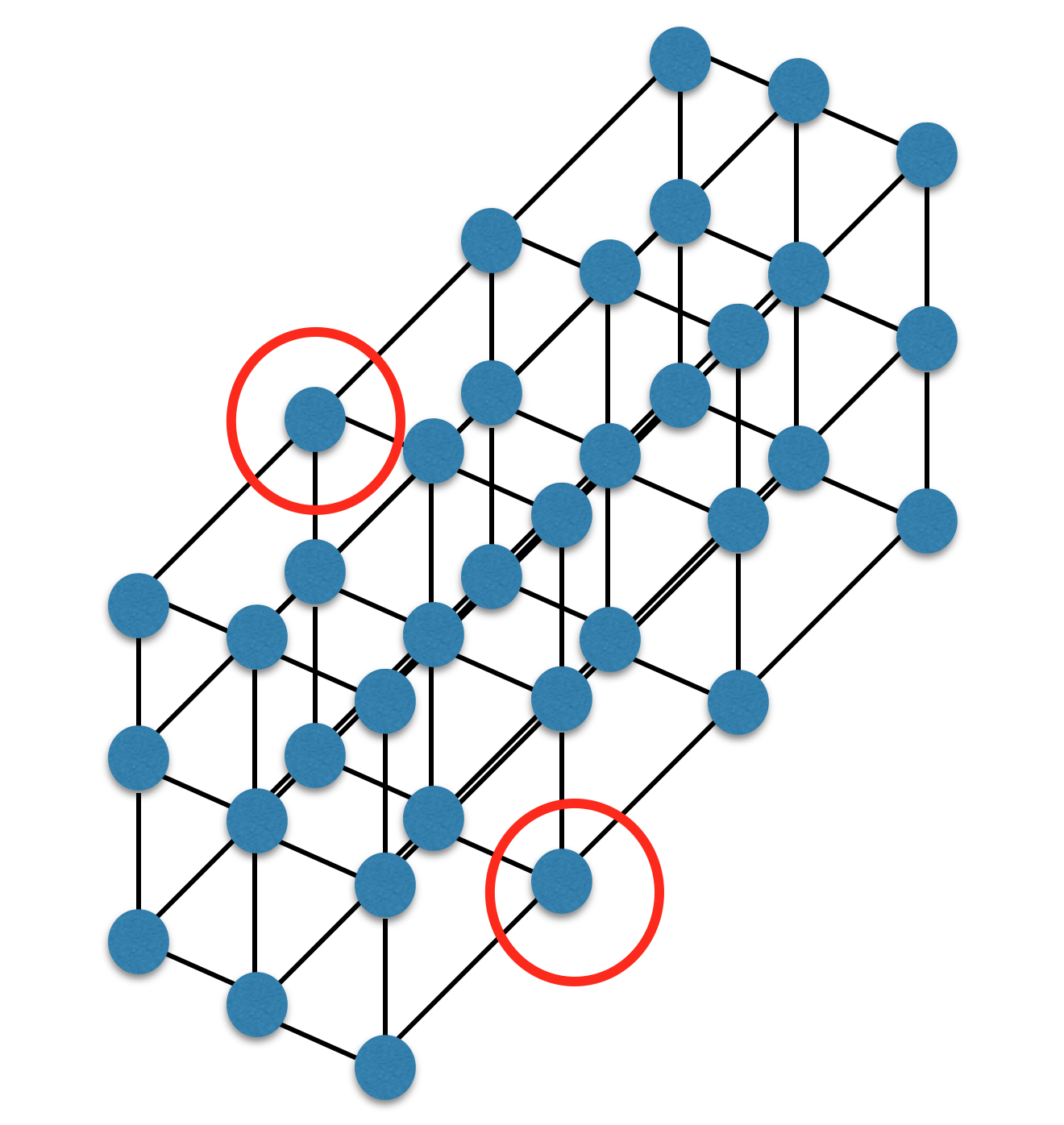}
\caption{Lower bound argument for $n/c^{2}\times{}c\times{}c$ hypergrid.}
\label{fig:hypertube}
\end{figure}

\begin{proof}[Proof of \pref{ex:hypertube}]
We will prove this result for the three-dimensional case. We first show the lower bound.

Suppose $c\geq{}3$ is constant, so that we are in the ``hypertube'' regime. Note that vertices on the outermost ``edges'' of the hypertube, examples of which are circled in \pref{fig:hypertube}, have degree at most $4$. There are $\Omega(n)$ such vertices, so appealing to \pref{theorem:general_lb} yields a lower bound on Hamming error of $\Omega(p^{2}n)$. In fact for the $n/c^2 \times c \times c$ hyper-tube one can achieve the $O(p^2 n)$ rate using our method. Simply take each components of size $2 \times c \times c$ connected in a path as in the example for the 2D grid. Since the minimum cut for each component is already at least $3$, we don't need to consider extended components and simply use brute-force on the components themselves.

We now sketch the upper bound for the $n^{1/3}\times{}n^{1/3}\times{}n^{1/3}$ hypergrid. We use a technique similar to that used for the 2D grid in \pref{ex:grid}: We take $T$ to be a path graph obtained by covering the hypergrid in overlapping $3\times3\times{}3$ components in a zig-zagging pattern. Note that each $3\times{}3\times{}3$ component will contain nodes similar to those highlighted in \pref{fig:hypertube} with degree at most 4. This means $\mincuts{W}=4$, so to obtain the $O(p^{3}n)$ Hamming error we must consider extended components. Take $W^{\star}=\bigcup_{v\in{}W}N_v$. Then $\mincuts{W}=6$ for all components except those at the boundary of the hypergrid, which have $\mincuts{W}\in\crl*{4,5}$. There are only $o(n)$ such components, so we achieve the $O(p^{3}n)$ upper bound by appealing to \pref{theorem:main_simple}.

For higher-dimensional hypergrids, the strategy of taking components to be constant-sized hypergrids and $T$ to be a zig-zagging path graph readily extends. The lower bound stated follows from a simple counting argument. 

In general, we can associated vertices of a $c_{1}\times{}c_2\times{}\ldots\times{}c_d$ hypergrid with the elements of $\Z_{c_1}\times{}\Z_{c_2}\times{}\ldots\times{}\Z_{c_d}$. For a vertex $v=(v_1,\ldots,v_d)$, the degree is given by $\deg(v)=\abs{\crl*{k\in\brk{d}\mid{}v_{k}\in\crl*{0,c_k}}}$. 

Consider the case where $c_1,\ldots{}c_{d-1}=c$, $c_{d}=n/c^{d-1}$. In this case, the degree argument above implies

\[
\abs{\crl*{v\mid{}\deg(v)=d+1}}\geq{}\sum_{v_{k}\in\crl*{0,c}:k\neq{}d}(n-2) = \Omega(n).
\]
Thus, a constant fraction of vertices have degree $d+1$, and so \pref{theorem:general_lb} implies a lower bound of $\Omega(p^{\ceil{\frac{d+1}{2}}}n)$.

\end{proof}

\begin{proof}[Proof of \pref{ex:ring_lattice}]~\\
\textbf{Upper bound: Tree decomposition}
We first formally define the tree decomposition $T=(\tW, F)$ that we will use with \pref{alg:decoder}.
Assume for simplicity what $n=n'\cdot(2k+1)$. We will define a vertex set $\crl*{v_{1},\ldots,v_{n'}}$ as follows: $v_{1}=1$, $v_{i+1}=v_i + k + 1$. We will now define a component for each of these vertices:
\[
W(v_i) = N_{G}(v_i).
\]
Let $\tW$ will be the union of these components. Since we assumed $n$ to be divisible by $(2k+1)$, the components a partition of $V$.
We now define the $\extend$ function for this decomposition:
\[
\extend(W) = \bigcup_{v\in{}W}N_{G}(v).
\]
That is, the extended component $W^{\star}(v_i)$ is the set of all vertices removed from $v_i$ by paths of length $2$.

Finally, we construct the edge set $F$ by adding edges of the form $(W(v_i), W(v_{i+1}))$ for $i\in\crl*{1,\ldots,n'-1}$. This means that the decomposition is a path graph. The decomposition is clearly admissible in the sense of \pref{def:admissible}.

We can observe that $\mincuts{W}=2k$ just as the minimum cut of $R_{n,k}$ is itself $2k$. \pref{theorem:main_simple} thus implies a recovery rate of $\Ot(p^{k}n)$. Since $T$ is a path graph, the algorithm runs in time $O(\ceil{p^{k}n}n)$.

\paragraph{Lower bound} 
That $O(p^{k}n)$ is optimal can be seen by appealing to \pref{theorem:general_lb} with the fact that $R_{n,k}$ is $2k$-regular.

\end{proof}

\begin{proof}[Proof of \pref{ex:newman_watts}]
The average number of vertices added is $\alpha{}n$. By the Chernoff bound, with high probability the number of vertices added is bounded as $\alpha{}n+c\sqrt{\alpha{}n\log{}n}$ for some constant $c$. This means that for any $\eps>0$, there is some minimum $n$ for which an $(1-\alpha+\eps)$ fraction of vertices have no edges added. This means that there are at least $(1-\alpha+\eps)n$ edges with degree $2k$, so \pref{theorem:general_lb} yields the result.
\end{proof}

\subsection{Analysis of \textsc{TreeDecompositionDecoder}}
\paragraph{Properties of Tree Decompositions}
We begin by recalling a few properties of tree decompositions that are critical for proving the performance bounds for \pref{alg:decoder}.
\begin{proposition}
\label{prop:tree_decomp_properties}
For any tree decomposition $T=(\W,F)$, the following properties hold:
\begin{enumerate}
\item For each $v\in{}V$ there exists $W$ with $v\in{}W$.\\
\indent{}\emph{This guarantees that we produce a prediction for each vertex.}
\item If $(W_1, W_2)\in{}F$, there is some $v\in{}V$ with $v\in{}W_1,W_2$.\\
\indent{}\emph{This guarantees that the class $\F$ (see \pref{eq:fclass}) is well-defined.}
\item $T$ is connected\\
\indent{}\emph{This implies that $\abs{\F}\lesssim{}2^{K}$.}
\item $\abs{\tW}\leq{}n$.\\
\emph{This implies that a mistake bound for components of the tree decomposition translates to a mistake bound for vertices of $G$}.
\end{enumerate}
\end{proposition}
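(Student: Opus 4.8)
The plan is to dispatch the four properties in increasing order of difficulty, treating the first and third as essentially unpacking \pref{def:tree_decomposition}. Property~1 is \emph{verbatim} the Vertex Inclusion axiom, so there is nothing to do. Property~3 holds because $T=\tree$ is by hypothesis a tree, hence connected; the only ``content'' is to record that this is exactly what is being assumed. So the real work is in Properties~2 and~4, and the workhorse for both is the Coherence axiom together with the non-redundancy assumption, which I will read symmetrically in the two endpoints: since $F$ is the edge set of an undirected tree, the hypothesis ``no $(W,W')\in F$ with $W'\subseteq W$'' forbids, for every edge $\{W,W'\}\in F$, both $W\subseteq W'$ and $W'\subseteq W$.

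For Property~4 I would fix a root $r$ of $T$ and build an injection $\phi$ from $\tW$ into $V$. The case $\abs{\tW}=1$ gives the bound trivially (assuming $n\ge 1$), so take $\abs{\tW}\ge 2$; then non-redundancy forces every bag to be nonempty (an empty bag would be a subset of any $T$-neighbor). For a non-root bag $W$ with parent $p(W)$, non-redundancy gives $W\not\subseteq p(W)$, so pick $\phi(W)\in W\setminus p(W)$; to the root assign any $\phi(r)\in r$. To see $\phi$ is injective, suppose $\phi(W_1)=\phi(W_2)=v$ with $W_1\neq W_2$. If $W_1$ is an ancestor of $W_2$, then $p(W_2)$ lies on the $T$-path from $W_1$ to $W_2$, and since $v\in W_1$ and $v\in W_2$, Coherence forces $v\in p(W_2)$, contradicting $v\notin p(W_2)$. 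If neither is an ancestor of the other, let $W$ be their least common ancestor; then $W_1\neq W$, so $p(W_1)$ lies on the $T$-path from $W_1$ to $W_2$ (which runs $W_1\to\cdots\to W\to\cdots\to W_2$), and Coherence again gives $v\in p(W_1)$, contradicting $v\notin p(W_1)$. Hence $\phi$ is injective and $\abs{\tW}\le\abs{V}=n$.

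For Property~2 I would argue it for connected $G$ (the case of interest; disconnected graphs are handled component by component as in \pref{ex:connected}). Fix $(W_1,W_2)\in F$ and delete this edge from $T$, splitting $\tW$ into the side $\tW_1\ni W_1$ and the side $\tW_2\ni W_2$; set $V_i=\bigcup_{W\in\tW_i}W$, so $V_1\cup V_2=V$. If $V_1\cap V_2\neq\emptyset$, pick $v$ in it: then $v$ lies in some bag of $\tW_1$ and some bag of $\tW_2$, the $T$-path between those two bags passes through both $W_1$ and $W_2$, and Coherence yields $v\in W_1\cap W_2$, as desired. Otherwise $V_1,V_2$ partition $V$; both are nonempty (an empty endpoint of the edge would violate non-redundancy), so connectivity of $G$ produces an edge $(u,w)\in E$ with $u\in V_1$ and $w\in V_2$. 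By Edge Inclusion some bag $W$ contains both $u$ and $w$; it lies on one side, say $\tW_1$, whence $w\in V_1\cap V_2$, contradicting the partition. So this case cannot occur, and $W_1\cap W_2\neq\emptyset$ always.

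I expect the main obstacle — such as it is — to be the bookkeeping in the injectivity proof of Property~4: getting the ancestor/LCA case split right and confirming that the indicated parent bag genuinely lies on the relevant tree path, while also handling the degenerate empty-bag and single-bag cases. Everything else is a direct appeal to the three axioms of \pref{def:tree_decomposition} plus non-redundancy.
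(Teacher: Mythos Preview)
Your proof is correct and follows essentially the same approach as the paper: Properties~1 and~3 are dispatched directly from \pref{def:tree_decomposition}, Property~2 is proved via coherence, connectivity, and edge inclusion (the paper runs it as a single proof by contradiction where your two cases are collapsed, but the content is identical), and your Property~4 argument is exactly the standard non-redundancy injection that the paper does not write out but instead cites to \cite{kleinberg2006algorithm}. The only cosmetic mismatch is that the paper's Property~2 argument is phrased for the probed graph $G'=(V,E')$ rather than $G$, since that is the graph being decomposed in \pref{alg:decoder}.
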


\begin{proof}[Proof of \pref{prop:tree_decomp_properties}]
\begin{enumerate}
\item \pref{def:tree_decomposition}.
\item Suppose there is some edge $(W_1, W_2)\in{}F$ with no common vertices. Consider the subtrees $T_1$ and $T_2$ created by removing $(W_1, W_2)\in{}F$. By the coherence property (\pref{def:tree_decomposition}), the subgraphs of $G'$ associated with these decompositions (call them $G'_{T_1}$  and $G'_{T_2}$) must have no common nodes. Yet, $G'$ is connected, so there must be $(u,v)\in{}E'$ with $u\in{}G'_{T_1}$, $v\in{}G'_{T_2}$. Our hypothesis now implies that there is no $W\in\W$ containing $u$ and $v$, so $T$ violates the edge inclusion property of the tree decomposition.
\item \pref{def:tree_decomposition}
\item This follows directly from the non-redundancy assumption of \pref{def:tree_decomposition}. See, e.g., \cite[10.16]{kleinberg2006algorithm}.
\end{enumerate}
\end{proof}

\paragraph{Estimation in Tree Decomposition Components}

We now formally define and analyze the component-wise estimators computed in \pref{line:estimator} of \pref{alg:decoder}.
\begin{definition}[Extended Component Estimator]
\label{def:extended_component_estimator}
Consider the (edge) maximum likelihood estimator over $\Ws$:
\begin{equation}
\label{eq:extended_component_estimator}
\Yt^{\Ws}\defeq{}\argmin_{\Yt\in\pmo^{\Ws}}\sum_{uv\in{}E'(\Ws)}\ind\crl{\Yt_u\Yt_v\neq{}X_{uv}}.
\end{equation}
We define the \textbf{extended component estimator} $\Yhws\in\pmo^{W}$ as restriction of $\Yt^{\Ws}$ to $W$.
\end{definition}
For $\Yhws$ estimation performance is governed by $\mincuts{W}$ rather than $\mincut{W}$, as the next lemma shows:

\begin{lemma}[Error Probability for Extended Component Estimator]
\label{lem:extended_component_estimator}
\[
\Pr\left(\min_{s\pmo} \ind\crl{s\Yhws \ne Y^W} >0\right) \le 2^{|\Ws|} p^{\lceil\mincuts{W}/2\rceil}.
\]
\end{lemma}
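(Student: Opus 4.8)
The plan is to mimic the proof of \pref{prop:component_estimator}, but replacing the edge set $E'(W)$ with the larger edge set $E'(\Ws)$ of the extended component, and taking care that the relevant cut is now measured inside $\Ws$ rather than inside $W$. First I would observe that the extended component estimator $\Yt^{\Ws}$ minimizes $\sum_{uv\in{}E'(\Ws)}\ind\crl{\Yt_u\Yt_v\neq{}X_{uv}}$ over $\pmo^{\Ws}$, so the same ``local flip'' argument applies: if neither $\Yhws$ nor $-\Yhws$ agrees with $Y^W$ on all of $W$, then in particular $\Yt^{\Ws}$ (after a global sign choice) disagrees with the ground truth $Y^{\Ws}$ somewhere whose restriction to $W$ is nonempty and proper. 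Let $S\subseteq{}\Ws$ be a maximal connected component (with respect to $G'(\Ws)$, which is connected by admissibility) of the set of vertices on which $\Yt^{\Ws}$ disagrees with $Y^{\Ws}$. Because both $\Yhws$ and $-\Yhws$ fail on $W$, the set $S$ must intersect $W$ in a way that is neither empty nor all of $W$: one picks the global sign so that $S\cap{}W\neq\emptyset$, and $\bar S\cap W\neq\emptyset$ follows from the other sign also failing.

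Next I would run the standard optimality argument: flipping all vertices in $S$ produces another labeling in $\pmo^{\Ws}$, and this new labeling agrees with $X$ on every edge of $E'(\Ws)$ outside $\delta_{G'(\Ws)}(S)$ exactly as $\Yt^{\Ws}$ does, and disagrees on the cut edges in the opposite pattern. Since $\Yt^{\Ws}$ is a minimizer, at least half of the edges in the cut $\delta_{G'(\Ws)}(S)$ — that is, at least $\ceil{\abs{\delta_{G'(\Ws)}(S)}/2}$ of them — must have $X_{uv}$ flipped from the ground truth. Now I need the cut lower bound: by the definition of $\mincuts{W}$ in \pref{tab:tree_decomposition_properties}, any $S\subset{}\Ws$ with $S\cap W\neq\emptyset$ and $\bar S\cap W\neq\emptyset$ satisfies $\abs{\delta_{G(W)}(S)}\geq{}\mincuts{W}$; and since (up to how $E'$ and $\Ws$ interact) the relevant cut edges counted inside $G'(\Ws)$ dominate those in $G(W)$ — or more precisely one uses that $\delta_{G'(\Ws)}(S)\supseteq\delta_{G(W)}(S)$ when $G(W)$'s edges are all probed within $\Ws$, which is the regime of interest — we get $\abs{\delta_{G'(\Ws)}(S)}\geq\mincuts{W}$. (If edges of $G(W)$ are not all in $E'$, one simply redefines $\mincuts{W}$ relative to $G'$; I would flag this and use whichever convention the paper commits to.)

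Finally I would close with the union bound exactly as in \pref{prop:component_estimator}: the failure event is contained in the union over nonempty proper $S\subset\Ws$ with $S\cap W\neq\emptyset\neq\bar S\cap W$ of the event ``at least $\ceil{\abs{\delta_{G'(\Ws)}(S)}/2}$ specified cut edges are flipped,'' each of which has probability at most $p^{\ceil{\abs{\delta_{G'(\Ws)}(S)}/2}}\leq{}p^{\ceil{\mincuts{W}/2}}$ (the inequality because $x\mapsto p^{\ceil{x/2}}$ is nonincreasing for $p<1$). There are at most $2^{\abs{\Ws}}$ such sets $S$, giving
\[
\Pr\left(\min_{s\pmo} \ind\crl{s\Yhws \ne Y^W} >0\right) \le 2^{\abs{\Ws}} p^{\lceil\mincuts{W}/2\rceil},
\]
as claimed.

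The main obstacle I anticipate is bookkeeping around the interaction between $E'$, $\Ws$, and the two notions of cut: $\mincuts{W}$ is defined via $\delta_{G(W)}(S)$ (cuts in the original induced subgraph on $W$), but the optimality argument naturally produces flipped edges in $\delta_{G'(\Ws)}(S)$ (cut edges among the \emph{probed} edges of the \emph{extended} component). I would need to verify that every edge counted by $\mincuts{W}$ — i.e. every edge of $G(W)$ crossing $S$ — is in fact a probed edge lying inside $E'(\Ws)$, so that $\abs{\delta_{G'(\Ws)}(S)}\geq\abs{\delta_{G(W)}(S)}\geq\mincuts{W}$; this should follow from the admissibility assumption that $G'(\Ws)$ is connected together with $W\subseteq\Ws$ and the edge-inclusion property, but it deserves an explicit sentence. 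Everything else — the local flip, the minimizer contradiction, the $2^{\abs{\Ws}}$ count, the monotonicity of $p^{\ceil{\cdot/2}}$ — is routine and parallels the non-extended case verbatim.
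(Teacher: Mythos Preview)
Your proposal is correct and follows essentially the same argument as the paper: take a maximal connected component $S$ of the disagreement set in $\Ws$ that meets $W$, use optimality of $\Yt^{\Ws}$ to force at least $\ceil{\abs{\delta(S)}/2}$ flipped edges in the cut, then union bound over the at most $2^{\abs{\Ws}}$ candidate sets $S$. The paper's proof is terser and simply writes $\delta(S)$ without resolving the $G(W)$ vs.\ $G'(\Ws)$ bookkeeping you flag; your caution there is well placed, but the paper does not address it explicitly either.
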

\begin{proof}[Proof of \pref{lem:extended_component_estimator}]
 Suppose $\Yhws\neq{}Y^W$ and consider $D=\crl{v\in{}\Ws:\Yt^{\Ws}_v\neq{}Y_v}$. Then there is some maximal connected component $S$ of $D$ containing at least one vertex of $W$. It must then be the case that at least half the edge samples in $\delta(S)$ are flipped with respect to the ground truth. Consequently it holds that
\begin{align*}
\Pr\prn*{\min_{s\pmo} \ind\crl{s\Yhws \ne Y^W} >0}&\leq{}\sum_{S\subseteq{}\Ws:S\cap{}W\neq{}\emptyset, \bar{S}\cap{}W\neq{}\emptyset}p^{\ceil{|\delta(S)|/2}}\\ &\leq{}\sum_{S\subseteq{}\Ws}p^{\ceil{\mincuts{W}/2}}\\ &\leq 2^{\abs{\Ws}}p^{\ceil{\mincuts{W}/2}}.
\end{align*}
 \end{proof}
\pref{lem:extended_component_estimator} shows that considering $\textrm{mincut}^{\star}$ offers improved failure probability over $\textrm{mincut}$ because it allows us to take advantage of all of the information in $\Ws$, yet only pay (in terms of errors) for cuts that involve nodes in the core component $W$. In Figure \ref{fig:const_height_decomp}, all components of the tree decomposition except the endpoints have $\mincuts{W}=3$, and so their extended component estimators achieve $O(p^2)$ failure probability.

\paragraph{Concentration}

We begin by stating a concentration result for functions of independent random variables, which we will use to establish a bound on the total number of components that fail in the first stage of our algorithm. Let $X_{1},\ldots,X_n$ be independent random variables each taking values in a probability space $\mc{X}$, and let $F:\mc{X}^{n}\to{}\R$. We will be interested in the concentration of the random variable $S=F(X_1,\ldots,X_n)$. Letting $X'_1,\ldots,X'_n$ be independent copies of $X_1,\ldots,X_n$, we define $S^{(i)}=F(X_1,\ldots,X_{i-1},X'_{i},X_{i+1},\ldots,X_n)$. Finally, we define a new random variable
\[
V_{+}=\sum_{i=1}^{n}\En\brk*{(S - S^{(i)})_{+}^{2}\mid{}X_1,\ldots,X_n}.
\]
\begin{theorem}[Entropy Method with Efron-Stein Variance \citep{Boucheron2003Concentration}]
\label{theorem:entropy}
If there exists a constant $a>0$ such that $V_{+}\leq{} aS$ then
\[
\Pr\crl*{S\geq{}\En\brk{S}+t}\leq{} \exp\prn*{\frac{-t^{2}}{4a\En\brk{S} + 2at}}.
\]
Subsequently, with probability at least $1-\delta$,
\[
S \leq{} \En\brk{S} + \max\crl*{4a\log(1/\delta), 2\sqrt{2a\En\brk*{S}\log(1/\delta)}}\leq{} 2\En\brk{S} + 6a\log(1/\delta).
\]

\end{theorem}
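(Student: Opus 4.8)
The plan is to prove \pref{theorem:entropy} by the standard Herbst / entropy-method argument, turning the Efron--Stein self-bounding hypothesis $V_{+}\leq aS$ into a sub-gamma bound on the centered log-moment-generating function $\psi_{S}(\lambda)\defeq\log\En\brk*{e^{\lambda(S-\En\brk*{S})}}$ and then Chernoff-optimizing. Throughout I would write $F(\lambda)=\En\brk*{e^{\lambda S}}$ and $\mathrm{Ent}(e^{\lambda S})=\En\brk*{\lambda S e^{\lambda S}}-F(\lambda)\log F(\lambda)$, and use that $S\geq 0$ so that $F$ and $G(\lambda)\defeq\lambda^{-1}\log F(\lambda)$ are positive.

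\emph{Step 1 (entropy-method core).} The one input I would take as a black box is the Efron--Stein form of the modified logarithmic Sobolev inequality from \cite{Boucheron2003Concentration}: for every $\lambda>0$, $\mathrm{Ent}(e^{\lambda S})\leq \lambda^{2}\,\En\brk*{V_{+}e^{\lambda S}}$. Its proof is via (a) sub-additivity (tensorization) of entropy across the independent coordinates $X_{1},\dots,X_{n}$, which reduces the bound to a sum of one-coordinate conditional entropies, and (b) a pointwise variational estimate on each conditional entropy using the elementary inequality $e^{-x}-1+x\leq x^{2}/2$ for $x\geq 0$ --- this is exactly where the positive parts $(S-S^{(i)})_{+}^{2}$, and hence $V_{+}$, enter. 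Substituting $V_{+}\leq aS$ gives $\mathrm{Ent}(e^{\lambda S})\leq a\lambda^{2}\,\En\brk*{S e^{\lambda S}}=a\lambda^{2}F'(\lambda)$.

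\emph{Step 2 (differential inequality).} Using the identity $G'(\lambda)=\mathrm{Ent}(e^{\lambda S})/(\lambda^{2}F(\lambda))$ (a direct computation from $G=\lambda^{-1}\log F$), Step 1 reads $F(\lambda)G'(\lambda)\leq aF'(\lambda)$, i.e.\ $G'(\lambda)\leq a(\log F)'(\lambda)=a\bigl(G(\lambda)+\lambda G'(\lambda)\bigr)$, so $G'(\lambda)(1-a\lambda)\leq aG(\lambda)$ for $0\leq\lambda<1/a$. Dividing by $G>0$ and integrating $(\log G)'(\lambda)\leq a/(1-a\lambda)$ from $0$ to $\lambda$, with $G(0^{+})=\En\brk*{S}$, gives $G(\lambda)\leq \En\brk*{S}/(1-a\lambda)$, hence
\[
\psi_{S}(\lambda)=\lambda\bigl(G(\lambda)-\En\brk*{S}\bigr)\leq \frac{a\,\En\brk*{S}\,\lambda^{2}}{1-a\lambda}\qquad\text{for }0\leq\lambda<1/a,
\]
a sub-gamma bound with variance factor $2a\En\brk*{S}$ and scale $a$.

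\emph{Step 3 (Chernoff and inversion), and the main obstacle.} Markov's inequality gives $\Pr\crl*{S\geq \En\brk*{S}+t}\leq \exp\bigl(\psi_{S}(\lambda)-\lambda t\bigr)$; optimizing $\frac{a\En\brk*{S}\lambda^{2}}{1-a\lambda}-\lambda t$ over $\lambda\in[0,1/a)$ is the standard Bernstein calculation, using $\sup_{\lambda}\bigl(\lambda t-\frac{v\lambda^{2}/2}{1-c\lambda}\bigr)\geq \frac{t^{2}}{2(v+ct)}$ with $v=2a\En\brk*{S}$, $c=a$, which yields the first displayed bound $\exp\!\bigl(-t^{2}/(4a\En\brk*{S}+2at)\bigr)$. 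For the high-probability form, setting this equal to $\delta$ means $t^{2}=\bigl(4a\En\brk*{S}+2at\bigr)\log(1/\delta)$; since the right-hand side is a sum of two terms, $t^{2}\leq 2\max\{4a\En\brk*{S}\log(1/\delta),\,2at\log(1/\delta)\}$, and examining the two cases gives $t\leq\max\{4a\log(1/\delta),\,2\sqrt{2a\En\brk*{S}\log(1/\delta)}\}$; bounding the $\max$ by the sum and applying $2\sqrt{xy}\leq x+y$ with $x=\En\brk*{S}$, $y=2a\log(1/\delta)$ gives $t\leq \En\brk*{S}+6a\log(1/\delta)$, so $S\leq \En\brk*{S}+t\leq 2\En\brk*{S}+6a\log(1/\delta)$. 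Everything after Step 1 is routine bookkeeping; the only substantive content is the Efron--Stein modified log-Sobolev inequality, which needs the tensorization of entropy together with the one-variable estimate above. Since \pref{theorem:entropy} is quoted directly from \cite{Boucheron2003Concentration}, the intended proof is really just to record this reduction and defer Step 1 to that reference.
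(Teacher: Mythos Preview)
Your proposal is correct, and you have already put your finger on the key point: the paper does not give any proof of \pref{theorem:entropy} --- it is simply stated as a quoted result from \cite{Boucheron2003Concentration} and then applied as a black box in the proof of \pref{lem:concentration_basic} and \pref{lem:concentration_extended}. There is no argument in the paper to compare against.

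What you have written is the standard Herbst-argument derivation that underlies the cited result, and the bookkeeping in Steps~2 and~3 (the differential inequality for $G(\lambda)=\lambda^{-1}\log F(\lambda)$, the sub-gamma Chernoff optimization, and the quadratic inversion leading to the $\max$ and then to $2\En\brk*{S}+6a\log(1/\delta)$) is all correct. So relative to the paper you are supplying strictly more: an actual proof outline where the paper has none. Your closing remark --- that the ``intended proof'' is just to defer Step~1 to \cite{Boucheron2003Concentration} --- is exactly the situation.
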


With \pref{theorem:entropy} in mind, we may proceed to a bound on the number of components with mistakes when the basic component estimator \pref{eq:component_estimator} is used.
 \begin{lemma}[Formal Version of \pref{lem:conc_informal}]
 \label{lem:concentration_basic}
 For all $\delta>0$, with probability at least $1-\delta$ over the draw of $X$,
 {\small
 \begin{align}
 \label{eq:conc_basic1}
 \min_{s\in\pmo^{\tW}}\sum_{W\in\tW}\ind\crl{s_W\Yh^W \ne Y^W} &\leq
2\sum_{W\in\W}2^{\abs{W}}p^{\ceil{\mincut{W}/2}} +  6\max_{e\in{}E}\abs{\W(e)}\max_{W\in\W}\abs{E'(W)}\log(1/\delta).\\
 \end{align}}
     \end{lemma}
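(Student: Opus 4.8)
The plan is to recognize the left-hand side of \pref{eq:conc_basic1} as a self-bounding function of the independent edge labels $X=(X_e)_{e\in E}$ and then invoke the entropy-method concentration inequality \pref{theorem:entropy}. Since the global signing $s$ ranges over $\pmo^{\tW}$, which assigns an \emph{independent} sign bit to each component, the minimum splits across components:
\[
S\defeq\min_{s\in\pmo^{\tW}}\sum_{W\in\tW}\ind\crl{s_W\Yh^W\ne Y^W}
=\sum_{W\in\tW}\min_{s\in\pmo}\ind\crl{s\Yh^W\ne Y^W},
\]
so $S$ is exactly the number of components whose basic estimator \pref{eq:component_estimator} fails to match the ground truth up to sign. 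By \pref{prop:component_estimator} and linearity of expectation, $\En\brk{S}\le\sum_{W\in\W}2^{\abs{W}}p^{\ceil{\mincut{W}/2}}$, which will furnish the first term of the bound.

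The main work is to verify the self-bounding hypothesis $V_{+}\le aS$ of \pref{theorem:entropy}. Resampling a single coordinate $X_e$ changes $\Yh^W$ --- and hence the failure status of $W$ --- only when $e$ is a probed edge with $e\in E'(W)$, i.e.\ only when $W\in\W(e)$; for every other edge $S$ is unaffected. Writing $S^{(e)}$ for $S$ after resampling $X_e$, and using that any component contributing positively to $S-S^{(e)}$ must already be failing under $X$, we get
\[
\prn*{S-S^{(e)}}_{+}\ \le\ \sum_{W\in\W(e)}\ind\crl{W\text{ fails under }X}.
\]
Squaring, applying $\prn*{\sum_i a_i}^2\le\abs{\W(e)}\sum_i a_i^2$ to the (idempotent) indicator terms, and noting the resulting bound does not involve the resampled value, gives
\[
\En\brk*{\prn*{S-S^{(e)}}_{+}^{2}\mid{}X}\ \le\ \abs{\W(e)}\sum_{W\in\W(e)}\ind\crl{W\text{ fails under }X}.
\]
Summing over $e\in E$ and exchanging the order of summation, with $\sum_{e\in E'(W)}\abs{\W(e)}\le\abs{E'(W)}\max_{e}\abs{\W(e)}$,
\[
V_{+}\ \le\ \prn*{\max_{e\in E}\abs{\W(e)}}\prn*{\max_{W\in\W}\abs{E'(W)}}\sum_{W\in\tW}\ind\crl{W\text{ fails under }X}\ =\ aS,
\]
where $a\defeq\max_{e\in E}\abs{\W(e)}\cdot\max_{W\in\W}\abs{E'(W)}$. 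Then \pref{theorem:entropy} gives $S\le 2\En\brk{S}+6a\log(1/\delta)$ with probability at least $1-\delta$, and inserting the bound on $\En\brk{S}$ yields precisely \pref{eq:conc_basic1}.

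I expect the derivation of $V_{+}\le aS$ to be the main obstacle. The delicate point is to argue that the harmful direction of a one-coordinate perturbation --- components that \emph{become} failures when $X_e$ is resampled --- is controlled by the components \emph{already} failing under $X$; this is what lets one replace a crude worst-case count by the factor $S$ on the right, which is exactly the form needed for the entropy method. Tracking precisely which components a single edge flip can touch (those containing $e$, counted by $\deg_E(T)$) versus how many coordinates can perturb a fixed component ($\abs{E'(W)}$ of them) is what produces the constant $a$ in the statement; everything else is a routine application of the already-established \pref{prop:component_estimator} and \pref{theorem:entropy}.
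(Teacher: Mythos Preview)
Your proposal is correct and follows essentially the same route as the paper: write $S=\sum_{W}\min_{s}\ind\crl{s\Yh^W\ne Y^W}$, bound $\En\brk{S}$ via \pref{prop:component_estimator}, verify the self-bounding hypothesis $V_{+}\le aS$ of \pref{theorem:entropy} by observing that resampling $X_e$ only touches components $W\in\W(e)$ and that the positive part of the change is dominated by the indicators already equal to $1$ under $X$, then swap the sum over $e$ and $W$ to read off $a=\max_{e}\abs{\W(e)}\cdot\max_{W}\abs{E'(W)}$. One small wording slip in your closing commentary: the ``harmful direction'' for $(S-S^{(e)})_{+}$ is components that \emph{stop} failing after resampling (so $S$ drops), not ones that become failures; your actual inequality is stated correctly, only the parenthetical description is reversed.
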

 \begin{proof}[Proof of \pref{lem:concentration_basic}]
 Define a random variable
\[
S(X) = \sum_{W\in\W}\min_{s\in\pmo}\ind\crl*{s\Yh^{W}(X)\neq{}Y^{W}},
\]
where $\Yh^{W}$ are the component-wise estimators produced by Algorithm \ref{alg:decoder} and $X$ are the edge observations. To prove the lemma we will apply \pref{theorem:entropy} by showing that there is a constant $a$ such that the necessary variance bound $V_{+}\leq{}aS$ holds.

To this end, consider
\[
S(X) - S(\Xe) = \sum_{W\in\W}\prn*{\min_{s\in\pmo}\ind\crl*{s\Yh^{W}(X)\neq{}Y^{W}}-\min_{s\in\pmo}\ind\crl*{s\Yh^{W}(\Xe)\neq{}Y^{W}}},
\]
where $\Xe$ is defined as in \pref{theorem:entropy}. To be more precise, we draw $(X'_{e})_{e\in{}E}$ from the same distribution as $X$, then let $X^{(e)}$ be the result of replacing $X_{e}$ with $X'_{e}$.

We have
\[
S(X) - S(\Xe) = \sum_{W\in\W(e)}\prn*{\min_{s\in\pmo}\ind\crl*{s\Yh^{W}(X)\neq{}Y^{W}}-\min_{s\in\pmo}\ind\crl*{s\Yh^{W}(\Xe)\neq{}Y^{W}}},
\]
since changing $X_e$ can only change $\Yh^{W}$ if $e\in{}W$. Now, since $S(\Xe)$ is nonnegative we have
\begin{align*}
(S(X) - S(\Xe)_{+}^{2}&=\prn*{\sum_{W\in\W(e)}\prn*{\min_{s\in\pmo}\ind\crl*{s\Yh^{W}(X)\neq{}Y^{W}}-\min_{s\in\pmo}\ind\crl*{s\Yh^{W}(\Xe)\neq{}Y^{W}}}}_{+}^{2}\\
&\leq{}\prn*{\sum_{W\in\W(e)}\min_{s\in\pmo}\ind\crl*{s\Yh^{W}(X)\neq{}Y^{W}}}^{2}\\
&\leq{}\abs{\W(e)}\sum_{W\in\W(e)}\min_{s\in\pmo}\ind\crl*{s\Yh^{W}(X)\neq{}Y^{W}}.
\end{align*}
We now sum over all edges to arrive at an upper bound on $V_+$:
\begin{align*}
V_{+}&=\sum_{e\in{}E}\En\brk*{(S(X) - S(\Xe)_{+}^{2}\mid{}X}\\
&\leq{} \max_{e\in{}E}\abs{\W(e)}\sum_{e\in{}E}\sum_{W\in\W(e)}\min_{s\in\pmo}\ind\crl*{s\Yh^{W}(X)\neq{}Y^{W}}\\
&= \max_{e\in{}E}\abs{\W(e)}\sum_{W\in\W}\sum_{e\in{}E(W)}\min_{s\in\pmo}\ind\crl*{s\Yh^{W}(X)\neq{}Y^{W}}\\
&\leq{} \max_{e\in{}E}\abs{\W(e)}\max_{W\in\W}\abs{E(W)}\sum_{W\in\W}\min_{s\in\pmo}\ind\crl*{s\Yh^{W}(X)\neq{}Y^{W}}\\
&\leq{} \max_{e\in{}E}\abs{\W(e)}\max_{W\in\W}\abs{E(W)}\sum_{W\in\W}\min_{s\in\pmo}\ind\crl*{s\Yh^{W}(X)\neq{}Y^{W}}\\
&= \max_{e\in{}E}\abs{\W(e)}\max_{W\in\W}\abs{E(W)}S(X).
\end{align*}
We now appeal to \pref{theorem:entropy} with $a=\max_{e\in{}E}\abs{\W(e)}\max_{W\in\W}\abs{E(W)}$, which yields that with probability at least $1-\delta$,
\[
S\leq{} 2\En\brk*{S} + 6\max_{e\in{}E}\abs{\W(e)}\max_{W\in\W}\abs{E(W)}\log(1/\delta).
\]
Finally, the bound on $\En\brk{S}$ follows from \pref{prop:component_estimator}:
\[
\En\brk*{S}= \sum_{W\in\W}\Pr\prn*{\min_{s\in\pmo}\ind\crl*{s\Yh^{W}(X)\neq{}Y^{W}}} \leq{} \sum_{W\in\W}2^{|W|} p^{\lceil\mincut{W}/2\rceil}.
\]

\end{proof}

An analogous concentration result to \pref{lem:concentration_basic} holds to bounds the number of components that fail over the whole graph when the extended component estimator is used:
 \begin{lemma}
 \label{lem:concentration_extended}
 For all $\delta>0$, with probability at least $1-\delta$ over the draw of $X$,
  {\small
 \begin{align}
 \label{eq:conc_extended1}
 \min_{s\in\pmo^{\tW}}\sum_{W\in\tW}\ind\crl{s_W\Yh^{W^{\star}} \ne Y^{W^{\star}}} &\leq
2\sum_{W\in\W}2^{\abs{W^{\star}}}p^{\ceil{\mincuts{W}/2}} +  6\max_{e\in{}E}\abs{\W^{\star}(e)}\max_{W\in\W}\abs{E'(W^{\star})}\log(1/\delta).
 \end{align}
 }

 \end{lemma}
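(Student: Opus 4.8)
The plan is to adapt the proof of \pref{lem:concentration_basic} essentially verbatim, replacing each component $W$ by its extended version $\Ws$ wherever a component's internal structure is invoked. Concretely, I would define the random variable
\[
S(X) = \sum_{W\in\W}\min_{s\in\pmo}\ind\crl*{s\Yhws(X)\neq{}Y^{\Ws}},
\]
where $\Yhws$ is the extended component estimator from \pref{def:extended_component_estimator}, and then invoke \pref{theorem:entropy} (the entropy method with Efron--Stein variance) to show concentration around $\En\brk{S}$.

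First I would bound $\En\brk{S}$: by \pref{lem:extended_component_estimator} each summand has expectation at most $2^{\abs{\Ws}}p^{\ceil{\mincuts{W}/2}}$, so $\En\brk{S}\leq{}\sum_{W\in\W}2^{\abs{\Ws}}p^{\ceil{\mincuts{W}/2}}$, matching the first term on the right-hand side of \pref{eq:conc_extended1}. Next I would establish the variance condition $V_{+}\leq{}aS$ with $a=\max_{e\in{}E}\abs{\W^{\star}(e)}\max_{W\in\W}\abs{E'(\Ws)}$. The key observation is that resampling a single edge $X_e$ can only change $\Yt^{\Ws}$ — and hence $\Yhws$ — for components $W$ with $e\in{}E'(\Ws)$, i.e. those $W$ with $W\in\W^{\star}(e)$. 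Therefore
\[
S(X) - S(\Xe) = \sum_{W\in\W^{\star}(e)}\prn*{\min_{s\in\pmo}\ind\crl*{s\Yhws(X)\neq{}Y^{\Ws}}-\min_{s\in\pmo}\ind\crl*{s\Yhws(\Xe)\neq{}Y^{\Ws}}},
\]
and since $S(\Xe)\geq{}0$, the positive part $(S(X)-S(\Xe))_+^2$ is at most $\prn*{\sum_{W\in\W^{\star}(e)}\min_{s}\ind\crl*{s\Yhws(X)\neq{}Y^{\Ws}}}^2$, which by Cauchy--Schwarz is at most $\abs{\W^{\star}(e)}\sum_{W\in\W^{\star}(e)}\min_{s}\ind\crl*{s\Yhws(X)\neq{}Y^{\Ws}}$.

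Summing over $e\in{}E$ and swapping the order of summation (a term for $W$ appears once for each $e\in{}E'(\Ws)$, contributing a factor $\abs{E'(\Ws)}$) gives
\[
V_{+}\leq{}\max_{e\in{}E}\abs{\W^{\star}(e)}\max_{W\in\W}\abs{E'(\Ws)}\cdot{}S(X),
\]
so the hypothesis of \pref{theorem:entropy} holds with the claimed $a$. Applying the theorem yields $S\leq{}2\En\brk{S}+6a\log(1/\delta)$ with probability at least $1-\delta$, and substituting the bound on $\En\brk{S}$ gives exactly \pref{eq:conc_extended1}. Finally, since $\min_{s\in\pmo^{\tW}}\sum_{W}\ind\crl*{s_W\Yhws\neq{}Y^{\Ws}}\leq{}S(X)$ (choosing the per-component optimal sign only decreases the sum), the left-hand side of \pref{eq:conc_extended1} is controlled as well. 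I do not expect a genuine obstacle here — the only point requiring care is the bookkeeping of which components an edge influences: one must use $\W^{\star}(e)$ (components $W$ with $e\in{}E'(\Ws)$) rather than $\W(e)$, since the estimator $\Yt^{\Ws}$ depends on edges of the \emph{extended} component, and correspondingly the per-component edge count that enters $a$ is $\abs{E'(\Ws)}$, not $\abs{E'(W)}$. This is exactly the substitution reflected in the statement, so the argument goes through mutatis mutandis.
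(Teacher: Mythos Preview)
Your proposal is correct and follows essentially the same route as the paper: define $S$ as the sum of per-component failure indicators, verify $V_{+}\leq aS$ with $a=\max_{e}\abs{\W^{\star}(e)}\max_{W}\abs{E'(\Ws)}$ by replacing $\W(e)$ with $\W^{\star}(e)$ in the argument of \pref{lem:concentration_basic}, and conclude via \pref{theorem:entropy} and \pref{lem:extended_component_estimator}. One small point: the paper's proof defines $S$ comparing $\Yhws$ to $Y^{W}$ (not $Y^{\Ws}$), which is what \pref{lem:extended_component_estimator} actually bounds and is consistent with $\Yhws\in\pmo^{W}$; your version works the same way once this notational slip is corrected.
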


 \begin{proof}[Proof of \pref{lem:concentration_extended}]
 This proof proceeds exactly as in the proof of \pref{lem:concentration_basic} using
 \[
 S(X) =\sum_{W\in\W}\min_{s\in\pmo}\ind\crl*{s\Yh^{W^{\star}}(X)\neq{}Y^{W}}.
 \]
 The only difference is that edges are more influential than in that lemma because each extended component estimator $\Yh^{W^{\star}}$ may depend on more edges than the simpler component estimator $\Yh^{W}$. To this end, define $\mc{W}^{\star}(e)=\crl*{W\mid{}e\in{}E'(W^{\star})}$. One can verify that if we replace every instance of $\mc{W}(e)$ in the proof of \pref{lem:concentration_basic} with $\mc{W}^{\star}(e)$ it holds that $V_{+}\leq{}aS$ with $a=\max_{e\in{}E}\abs{\W^{\star}(e)}\max_{W\in\W}\abs{E(W^{\star})}$. \pref{theorem:entropy} then implies that with probability at least $1-\delta$,
 \begin{align*}
 S&\leq{} 2\En\brk*{S} + 6\max_{e\in{}E}\abs{\W^{\star}(e)}\max_{W\in\W}\abs{E(W^{\star})}\log(1/\delta)\\
 &= 2\En\brk*{S} + 6\deg^{\star}_E(T)\max_{W\in\W}\abs{E(W^{\star})}\log(1/\delta).
 \end{align*}
 \end{proof}

\begin{proof}[Proof of \pref{theorem:main_simple}]~\\
\textbf{Full theorem statement} We will prove the following error bound:  If $T=(\mc{W}, F)$ is admissible, with probability at least $1-\delta$ over the draw of $X$ and $Z$, $\Yh$ satisfies:
{\small
\begin{align}
\label{eq:main_bound}
&\sum_{v\in{}V}\ind\crl*{\Yh_v\neq{}Y_v} \\
&\leq{} O\prn*{
\frac{1}{\eps^{2}}
\prn*{
2^{\widths(T)}\sum_{W\in\tW}p^{\ceil{\mincuts{W}/2}}
+\deg^{\star}_E(T)\max_{W\in\W}\abs{E(W^{\star})}\log(1/\delta) 
}\cdot{}\prn*{\width(T) + \deg(T)\log{}n}
}
\end{align}
}
This statement specializes to \pref{eq:main_w} when all of the tree decomposition quantities are constant and $\delta=1/n$.

\paragraph{Error bound for individual components} \pref{lem:extended_component_estimator} implies that for a fixed component $W\in\tW$, the probability that the estimator produced by the brute-force enumeration routine fails to exactly recover the labels in $W$ (up to sign) is bounded as
\[
\Pr\prn*{\min_{s\pmo} \ind\crl{s\Yhws \ne Y^W} >0} \leq{} 2^{\abs{\Ws}}p^{\ceil{\mincuts{W}/2}}.
\]
\paragraph{Error bound across all components}
Consider the following random variable, which is the total number components
 \[
 S(X) =\sum_{W\in\W}\min_{s\in\pmo}\ind\crl*{s\Yh^{W^{\star}}(X)\neq{}Y^{W}}.
 \]
 The bound on component failure probability immediately implies in in-expectation bound on $S$:
 \[
 \En\brk*{S}\leq{} \sum_{W\in\tW}2^{\abs{\Ws}}p^{\ceil{\mincuts{W}/2}}.
 \]
 \pref{lem:concentration_extended} shows that $S$ concentrates tightly around its expectation. More precisely, let $A \defeq 6\deg^{\star}_E(T)\max_{W\in\W}\abs{E(W^{\star})}$ and
\begin{equation}
\label{eq:kn}
K_n \defeq 2^{\widths(T)+2}\sum_{W\in\tW}p^{\ceil{\mincuts{W}/2}} + A\log(2/\delta).
\end{equation}
Then \pref{lem:concentration_extended} implies that with probability at least $1-\delta/2$,

\begin{align}
 \min_{s\in\pmo^{\tW}}\sum_{W\in\tW}\ind\crl{s_W\Yhws \ne Y^W}
 &\leq{} 2\sum_{W\in\tW}2^{\abs{\Ws}}p^{\ceil{\mincuts{W}/2}} + A\log(2/\delta)\notag
 \\
 &\leq{} K_n\label{eq:kn_bound}
\end{align}
\paragraph{Inference with side information: Hypothesis class}
Consider the following binary signing of the components in $T$:
\[
s^{\star} = \argmin_{s\in\pmo^{\tW}}\sum_{W\in\tW}\ind\crl{s_W\Yhws \ne Y^W}.
\]
$s^{\star}$ is signing of the component-wise predictions $(\Yh^{\Ws})$ that best matches the ground truth. If we knew the value of $s^{\star}$ we could use it to produce a vertex prediction with at most $K_n$ mistakes. Computing the $s^{\star}$ is information-theoretically impossible because we do not have access to $Y$, but we will show that the signing we produce using the side information $Z$ is close.

Define
\begin{equation}
\label{eq:ln}
L_n=\deg(T)\cdot{}K_n.
\end{equation}
We will argue that \pref{eq:kn_bound} implies that $s^{\star}$ lies in the class
\begin{equation}
\label{eq:fclass}
\F(X) \defeq{}  \crl*{s\in\pmo^{\tW}\mid\sum_{(W_1,W_2) \in F} \ind\crl{s_{W_1}  \ne s_{W_2} \cdot S(W_1,W_2) } \le L_n}.
\end{equation}
First, consider the \texttt{for} loop on \pref{alg:decoder}, \pref{line:decoder_meta_edge}. \pref{prop:tree_decomp_properties} implies that $S(W_1, W_2)$ as defined in this loop is well-defined, because there always exists some $v\in{}W_1\cap{}W_2$.

Second, consider the value of
\[
\sum_{(W_1,W_2) \in F} \ind\crl{s^{\star}_{W_1}  \ne s^{\star}_{W_2} \cdot S(W_1,W_2) } = \sum_{(W_1,W_2) \in F} \ind\crl{s^{\star}_{W_1}  \ne s^{\star}_{W_2} \cdot \Yh^{\Ws_1}_v\cdot{}\Yh^{\Ws_2}_v }.
\]
We can bound this quantity in terms of the number of components $W$ for which
\[\min_{s\in\pmo}\ind\crl*{s\Yh^{W^{\star}}\neq{}Y^{W}}=1.
\]
Observe that if $\min_{s\in\pmo}\ind\crl*{s\Yh^{W^{\star}}\neq{}Y^{W}}=0$ then there is some $\bar{s}_{W}\in\pmo$ such that $\Yh^{W^{\star}}=\bar{s}_{W}Y^{W}$. If we take $s^{\star}_{W}=\bar{s}_{W}$ in all the components with no errors, and choose the sign arbitrarily for others, we will have $\ind\crl{s^{\star}_{W_1}  \ne s^{\star}_{W_2} \cdot \Yh^{\Ws_1}_v\cdot{}\Yh^{\Ws_2}_v }=0$ whenever both $W_1$ and $W_2$ have no errors. Pessimistically, there are at most $L_n=\deg(T)\cdot{}K_n$ edges $(W_1, W_2)$ where at least one of $W_1$ or $W_2$ has an error, and therefore \pref{eq:kn_bound} implies that with probability at least $1-\delta/2$, $s^{\star}\in\F$.

We conclude this discussion by showing that $\abs{\mc{F}(X)}$ small. Since by \pref{prop:tree_decomp_properties} $T$ is connected, labelings of the edges of $T$ are in one to one correspondence with labelings of the components. Consequently,
\begin{equation}
\label{eq:f_size}
\abs{\F(X)}\leq{} \sum_{k=0}^{L_n}\binom{\abs{\tW}}{k} \leq{} \prn*{\frac{e\abs{\tW}}{L_n}}^{L_n}\leq{} \prn*{\frac{en}{L_n}}^{L_n}.
\end{equation}
The last inequality uses that, from \pref{prop:tree_decomp_properties}, $\abs{\tW}\leq{}n$.

\paragraph{Final error bound for inference with side information}
We now use the properties of $\F(X)$ to derive an error bound for the prediction $\Yh$.
Recall from \pref{alg:decoder} that $\Yh$ is defined in terms of
\begin{equation}
\label{eq:whatev}
 \hat{s}=\min_{s\in\F(X)}\sum_{W \in \mathcal{W}} \sum_{v \in W} \ind\crl{s_W \Yh^{\Ws}_v \ne Z_v}.
 \end{equation}
 We reduce the analysis of the error rate of $\hat{s}$ to analysis of excess risk in a manner that parallels the proof of \pref{theorem:tree_decoding}, but is slightly more involved because the best predictor in $\F$ does not perfectly match the ground truth.
Fix $\hat{s}\in\pmo^{\tW}$. For each component $W\in\tW$ we have
\begin{align*}
\sum_{v \in W} \ind\crl{\hat{s}_W \Yh^{\Ws}_{v} \ne Y_v} & \le \sum_{v \in W}  \ind\crl{\hat{s}_W \Yh^{\Ws}_{v} \ne s^*_W \Yh^{\Ws}_{v}} + \sum_{v \in W}  \ind\crl{s^*_W \Yh^{\Ws}_{v} \ne Y_v}\\
& \le \sum_{v \in W} \ind\crl{\hat{s}_W \Yh^{\Ws}_{v} \ne s^*_W \Yh^{\Ws}_{v}} + |W|\ \ind\crl{s^*_W \Yh^{\Ws} \ne Y^W} \\
&= \frac{1}{1 - 2q} \sum_{v \in W : s^*_W \Yh^{\Ws}_{v} = Y_v} \left(\Pr_{Z}\left( \hat{s}_W \Yh^{\Ws}_{v} \cdot Z_v < 0\right) -  \Pr_Z\left(s^*_W\Yh^{\Ws}_{v} \cdot Z_v < 0\right) \right)   \\
& ~~~~~- \frac{1}{1 - 2q}    \sum_{v \in W : s^*_W \Yh^{\Ws}_{v} \ne Y_v}\left( \Pr_Z\left( \hat{s}_W \Yh^{\Ws}_{v} \cdot Z_v < 0\right) -  \Pr_Z\left(s^*_W \Yh^{\Ws}_{v} \cdot Z_v < 0\right) \right) \\
&~~~~~ + |W|\  \ind\crl{s^*_W \Yh_W \ne Y_W}.
\end{align*}
Now note that given that $Z_v$ is drawn as a noisy version of $Y_v$, \\ $\left|\Pr_{Z}\left( \hat{s}_W \Yh^{\Ws}_{v} \cdot Z_v < 0\right) -  \Pr_{Z}\left(s^*_W \Yh^{\Ws}_{v} \cdot Z_v < 0\right)\right| = 1 - 2q$ and so
{\small
\begin{align*}
 - \frac{1}{1 - 2q} &   \sum_{v \in W : s^*_W \Yh^{\Ws}_{v} \ne Y_v}\left( \Pr_{Z}\left( \hat{s}_W \Yh^{\Ws}_{v} \cdot Z_v < 0\right) -  \Pr_{Z}\left(s^*_W \Yh^{\Ws}_{v} \cdot Z_v < 0\right) \right)  \\
 &\le 2 \sum_{v \in W}  \ind\crl{s^*_W \Yh^{\Ws}_{v} \ne Y_v} + \frac{1}{1 - 2q}    \sum_{v \in W : s^*_W \Yh^{\Ws}_{v} \ne Y_v}\left( \Pr_{Z}\left( \hat{s}_W \Yh^{\Ws}_{v} \cdot Z_v < 0\right) -  \Pr_{Z}\left(s^*_W \Yh^{\Ws}_{v} \cdot Z_v < 0\right) \right) \\
 & \le 2 |W| \ind\crl{s^*_W \Yhws \ne Y^W} + \frac{1}{1 - 2q}    \sum_{v \in W : s^*_W \Yh^{\Ws}_{v} \ne Y_v}\left( \Pr_{Z}\left( \hat{s}_W \Yh^{\Ws}_{v} \cdot Z_v < 0\right) -  \Pr_{Z}\left(s^*_W \Yh^{\Ws}_{v} \cdot Z_v < 0\right) \right).
\end{align*}}
We conclude that
\begin{align*}
&\sum_{v \in W} \ind\crl{\hat{s}_W \Yh^{\Ws}_{v} \ne Y_v}\\
& \leq{} 3 |W| \ind\crl{s^*_W \Yhws \ne Y^W} + \frac{1}{1 - 2q} \sum_{v \in W} \left(\Pr_{Z}\left( \hat{s}_W \Yh^{\Ws}_{v} \cdot Z_v < 0\right) -  \Pr_Z\left(s^*_W\Yh^{\Ws}_{v} \cdot Z_v < 0\right) \right).
\end{align*}
Summing over all the components $W\in\tW$ we arrive at the bound
{\small
\begin{align*}
\sum_{W \in \mathcal{W}} & \sum_{v \in W} \ind\crl{\hat{s}_W \Yh^{\Ws}_{v} \ne Y_v} \\
& \le 3 \left(\max_{W \in \mathcal{W}} |W|\right) \sum_{w \in \mathcal{W}}\ind\crl{s^*_W \Yhws \ne Y^W} + \frac{1}{1 - 2q}    \sum_{W\in\tW}\sum_{v \in W}\left( \Pr_{Z}\left( \hat{s}_W \Yh^{\Ws}_{v} \cdot Z_v < 0\right) -  \Pr_{Z}\left(s^*_W \Yh^{\Ws}_{v} \cdot Z_v < 0\right) \right)\\
& \le 3 \left(\max_{W \in \mathcal{W}} |W|\right)K_n+ \frac{1}{1 - 2q}  \sum_{W\in\tW}\sum_{v \in W}\left( \Pr_{Z}\left( \hat{s}_W \Yh^{\Ws}_{v} \cdot Z_v < 0\right) -  \Pr_{Z}\left(s^*_W \Yh^{\Ws}_{v} \cdot Z_v < 0\right) \right)
\end{align*}}

We can now appeal to the statistical learning bounds from \pref{app:statistical_learning} to handle the right-hand side of this expression. \pref{lem:fast_rate} implies that if we take $\hat{s} = \argmin_{s\in\F}\sum_{W\in\tW}\sum_{v \in W}\ind\crl*{\hat{s}_W \Yh^{\Ws}_{v} \cdot Z_v < 0}$, which is precisely the solution to \pref{eq:whatev}, we obtain the excess risk bound,
\begin{align*}
&\sum_{W\in\tW}\sum_{v \in W}\left( \Pr_{Z}\left( \hat{s}_W \Yh^{\Ws}_{v} \cdot Z_v < 0\right) -  \Pr_{Z}\left(s^*_W \Yh^{\Ws}_{v} \cdot Z_v < 0\right) \right) \\
&\leq{} \prn*{\frac{2}{3} + \frac{c}{2}}\log(2\abs{\F}/\delta) + \frac{1}{c}\sum_{w\in\tW}\sum_{v\in{}W}\ind\crl{\hat{s}_W \Yh^{\Ws}_{v} \ne Y_v},
\end{align*}
with probability at least $1-\delta/2$ over $Z$ for all $c>0$. If we choose $c=1/\eps$, rearrange, and apply the union bound, this implies that with probability at least $1-\delta$ over the draw of $X$ and $Z$ we have
\[
\sum_{W \in \mathcal{W}}  \sum_{v \in W} \ind\crl{\hat{s}_W \Yh^{\Ws}_{v} \ne Y_v}\leq{} 6 \left(\max_{W \in \mathcal{W}} |W|\right)K_n + \frac{2}{\eps^{2}}\log(2\abs{\F}/\delta).
\]
Recall that $\abs{\F}\leq{} (e\abs{\tW}/L_n)^{L_n}$, which implies a bound of

\begin{align*}
&\sum_{W \in \mathcal{W}}  \sum_{v \in W} \ind\crl{\hat{s}_W \Yh^{\Ws}_{v}\neq{}Y_v} \\
&\leq{}
O\prn*{
\frac{1}{\eps^{2}}\brk*{
\width(T)\cdot{}K_n + L_n\cdot{}\log(en/L_n) + \log(1/\delta)
}
}\\
&\leq{}
O\prn*{
\frac{1}{\eps^{2}}\brk*{
K_n\cdot{}\prn*{\width(T) + \deg(T)\cdot{}\log(en/K_n)} + \log(1/\delta)
}
}\\
&\leq{}
O\prn*{
\frac{1}{\eps^{2}}
\prn*{
2^{\widths(T)}\sum_{W\in\tW}p^{\ceil{\mincuts{W}/2}}
+\deg^{\star}_E(T)\max_{W\in\W}\abs{E(W^{\star})}\log(1/\delta)
}\cdot{}\prn*{\width(T) + \deg(T)\log{}n}
}
\end{align*}

Our choice of $\Yh$ in \pref{alg:decoder} ensures that the Hamming error $\sum_{v\in{}V}\ind\crl*{\Yh_v\neq{}Y_v}$ inherits this bound. \pref{prop:tree_decomp_properties} implies that every $v\in{}V$ is in some component, so this choice is indeed well-defined.
\end{proof}


\section{Statistical learning}
\label{app:statistical_learning}
Here we consider a fixed design variant of the statistical learning setting. Fix an input space $\mc{X}$ and output space $\mc{Z}$. We are given a fixed set $X_1,\ldots,X_n\in\mc{X}$ and samples $Z_1,\ldots, Z_n\in\mc{Z}$ with $Z_i$ drawn from $P(Z_i\mid{}X_i)$ for some distribution $P$. We fix a hypothesis class $\F$ which is some subset of mappings from $\mc{X}$ to $\mc{Z}$, and we would like to use $Z$ to find $\Yh\in\F$ that will predict future observations of $Z$ on $X$. To evaluate prediction we define a loss function $\ls:\mc{Z}\times{}\mc{Z}\to\R_{+}$, and define $L_{i}(Y) = \En_{Z\mid{}X_i}\brk*{\ls(Y, Z)}$. Our goal is to use $Z$ to select $\Yh\in\F$ to guarantee low \emph{excess risk}:
\begin{equation}
\sum_{i\in\brk{n}}L_i(\Yh(X_i)) - \min_{Y\in\F}\sum_{i\in\brk{n}}L_i(Y(X_i)).
\end{equation}
Typically this is accomplished using the \emph{empirical risk minimizer} (ERM): 
\[
\Yh= \argmin_{Y\in\F}\sum_{i\in\brk{n}}\ls(Y(X_i), Z_i)\footnote{There are a many standard bounds quantifying the performance of ERM in settings beyond the one we consider. See \cite{Bousquet2004Introduction} for a survey.}.\]

In this paper we consider a specific instantiation of the above framework in which
\begin{itemize}
\item $\X=V$, the vertex set for some graph (possibly a tree decomposition), and $X_1,\ldots,X_n$ are an arbitrary ordering of $V$ (so $n=\abs{V}$). In light of this we index all variables using $V$ going forward.
\item  $\mc{Z}=\pmo$. We fix $Y\in\vs$ and let $Z_v=Y_v$ with probability $1-q$ and $Z_v=-Y_v$ otherwise (as in \pref{mod:edge_vertex_measurement}).
\item $\ls(Y,Z) = \ind\crl{Y\neq{}V}$, so $L_i(Y) = \Pr_{Z}(Y\neq{}Z_v)$.
\item $\F\subseteq{}\vs$ is arbitrary.
\end{itemize}
For this setting the excess risk for a predictor $\Yh\in\vs$ can be written as 
\begin{equation}
\label{eq:erm_binary}
\sum_{v\in{}V}\Pr(\wh{Y}_v \ne Z_v) - \min_{Y'\in{}\F}\sum_{v\in{}V}\Pr\left(Y'_v \ne Z_v\right),
\end{equation}
and the empirical risk minimizer is given by $\wh{Y}=\argmin_{Y'\in\F}\sum_{v\in{}V}\ind\crl{Y'_v\neq{}Z_v}$.

We assume this setting exclusively for the remainder of the section.

\begin{lemma}[Excess risk bound for ERM]
\label{lem:fast_rate}
Let $\Yh$ be the ERM and let $\Ys=\argmin_{Y'\in{}\F}\sum_{v\in{}V}\Pr\prn*{Y'\neq{}Z}$. Then with probability at least $1-\delta$ over the draw of $Z$,
\begin{equation}
\label{eq:fast_rate_general}
\sum_{v\in{}V}\Pr\left(\wh{Y}_v \ne Z_v\right) - \min_{Y'\in{}\F}\sum_{v\in{}V}\Pr\left(Y'_v \ne Z_v\right) \leq{} \prn*{\frac{2}{3}+\frac{c}{2}}\log\prn*{\frac{\abs{\F}}{\delta}} + \frac{1}{c}\sum_{v\in{}V}\ind\crl*{\Yh_v\neq{}\Ys_v}
\end{equation}
for all $c>0$.
\end{lemma}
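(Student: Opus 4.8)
\textbf{Proof proposal for \pref{lem:fast_rate}.}
The plan is to run the standard ``Bernstein plus union bound'' fast-rate argument for a finite hypothesis class, taking care that the in-class optimum $\Ys$ need not coincide with the ground truth $Y$. For each $Y'\in\F$ introduce the excess-loss variables $\xi_v(Y') \defeq \ind\crl{Y'_v\neq Z_v} - \ind\crl{\Ys_v\neq Z_v}$, which are independent across $v$ and satisfy $\sum_{v\in{}V}\En\brk{\xi_v(Y')} = \sum_{v\in{}V}\Pr(Y'_v\neq Z_v) - \min_{Y''\in\F}\sum_{v\in{}V}\Pr(Y''_v\neq Z_v)$, i.e. exactly the population excess risk of $Y'$ (for $Y'=\Yh$ this is the left-hand side of \pref{eq:fast_rate_general}). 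The key structural observations are: $\xi_v(Y')=0$ unless $Y'_v\neq\Ys_v$, and $\xi_v(Y')\in\crl{\pm1}$ otherwise, so $\En\brk{\xi_v(Y')^2}=\ind\crl{Y'_v\neq\Ys_v}$ and hence $\sum_{v}\mathrm{Var}(\xi_v(Y'))\leq d(Y',\Ys)\defeq\sum_{v}\ind\crl{Y'_v\neq\Ys_v}$; moreover $\abs{\xi_v(Y')}\leq 1$ and $\abs{\En\brk{\xi_v(Y')}}\leq 1-2q\leq 1$, so the centered summand $\En\brk{\xi_v(Y')}-\xi_v(Y')$ is bounded by $2$.

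First I would apply the one-sided Bernstein inequality to $\sum_v\prn{\En\brk{\xi_v(Y')}-\xi_v(Y')}$ for a fixed $Y'$, with variance proxy $d(Y',\Ys)$ and range bound $2$, and union bound over the $\abs{\F}$ hypotheses. This gives that with probability at least $1-\delta$, simultaneously for every $Y'\in\F$, the excess risk of $Y'$ is at most $\sum_v\xi_v(Y') + \sqrt{2\,d(Y',\Ys)\log(\abs{\F}/\delta)} + \tfrac{2}{3}\log(\abs{\F}/\delta)$. I would then specialize to $Y'=\Yh$: since $\Yh$ is the empirical risk minimizer and $\Ys\in\F$, we have $\sum_v\ind\crl{\Yh_v\neq Z_v}\leq\sum_v\ind\crl{\Ys_v\neq Z_v}$, i.e. $\sum_v\xi_v(\Yh)\leq 0$, so the empirical term drops and the excess risk of $\Yh$ is at most $\sqrt{2\,d(\Yh,\Ys)\log(\abs{\F}/\delta)} + \tfrac{2}{3}\log(\abs{\F}/\delta)$. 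Finally, AM-GM gives $\sqrt{2\,d(\Yh,\Ys)\log(\abs{\F}/\delta)}\leq \tfrac{c}{2}\log(\abs{\F}/\delta) + \tfrac{1}{c}\,d(\Yh,\Ys)$ for any $c>0$; substituting and recalling $d(\Yh,\Ys)=\sum_v\ind\crl{\Yh_v\neq\Ys_v}$ yields exactly \pref{eq:fast_rate_general}.

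The main obstacle — and the reason the bound carries the extra additive $\tfrac{1}{c}\sum_v\ind\crl{\Yh_v\neq\Ys_v}$ term rather than being a clean $O(\log\abs{\F})$ fast rate — is that the natural variance proxy is $d(Y',\Ys)$, which is only proportional to the excess risk when $\Ys$ equals the ground truth $Y$ (the well-specified regime of \pref{cor:fast_rate_well_specified}): in general the pointwise excess loss $\En\brk{\xi_v(Y')}$ can be negative (when $\Ys$ errs at $v$ but $Y'$ does not), so one cannot self-bound the square-root term by the excess risk itself and must instead leave $d(\Yh,\Ys)$ as slack, to be controlled downstream through the ERM inequality in whichever application invokes the lemma. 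A secondary point needing care is verifying that the range constant entering Bernstein really produces the stated $\tfrac{2}{3}$ coefficient, which is where the bounds $\abs{\xi_v(Y')}\leq1$ and $\abs{\En\brk{\xi_v(Y')}}\leq1-2q$ are used.
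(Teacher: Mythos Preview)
Your proposal is correct and follows essentially the same route as the paper: define the excess-loss variables $T^{Y'}_v=\ind\{Y'_v\neq Z_v\}-\ind\{\Ys_v\neq Z_v\}$, apply Bernstein with variance proxy $\sum_v\En[(T^{Y'}_v)^2]=d(Y',\Ys)$, union bound over $\F$, use the ERM inequality to kill the empirical term, and split the square root via AM--GM. The paper packages the Bernstein-plus-union-bound step into a separate maximal-inequality lemma (\pref{lem:maximal_inequality}) and applies AM--GM before specializing to $\Yh$ rather than after, but these are cosmetic differences; the substance of the argument and the resulting constants are identical.
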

\begin{corollary}[ERM excess risk: Well-specified case]
\label{cor:fast_rate_well_specified}
When $Y\in\F$ we have that with probability at least $1-\delta$,
\begin{equation}
\label{eq:fast_rate_well_specified}
\sum_{v\in{}V}\Pr\left(\wh{Y}_v \ne Z_v\right) - \min_{Y\in{}\F}\sum_{v\in{}V}\Pr\left(Y_v \ne Z_v\right) \leq{}
\prn*{\frac{4}{3} + \frac{1}{\eps}}\log\prn*{\frac{\abs{\F}}{\delta}},
\end{equation}
recalling $q=1/2-\eps$.
\end{corollary}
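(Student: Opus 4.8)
The plan is to derive the corollary directly from \pref{lem:fast_rate} by exploiting the well-specified structure. First I would observe that, because $Z_v$ equals $Y_v$ with probability $1-q$ and $q=1/2-\eps<1/2$, the population loss $\Pr_Z(Y'_v\neq{}Z_v)$ equals $q$ when $Y'_v=Y_v$ and $1-q$ otherwise; hence $\sum_{v\in{}V}\Pr_Z(Y'_v\neq{}Z_v)$ is minimized precisely at $Y'=Y$, and uniquely so. Since $Y\in\F$ by assumption, this gives $\Ys=Y$. The same computation shows that for any labeling $\Yh\in\vs$ the excess risk in \pref{eq:erm_binary} equals $(1-2q)\sum_{v\in{}V}\ind\crl{\Yh_v\neq{}Y_v}=2\eps\sum_{v\in{}V}\ind\crl{\Yh_v\neq{}\Ys_v}$; in particular the left-hand side of \pref{eq:fast_rate_general} and the localization term $\sum_{v\in{}V}\ind\crl{\Yh_v\neq{}\Ys_v}$ on its right-hand side are the same quantity up to the factor $2\eps$.

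Next I would apply \pref{lem:fast_rate} and tune the free parameter $c$. Writing $N\defeq\sum_{v\in{}V}\ind\crl{\Yh_v\neq{}\Ys_v}$, the lemma states that with probability at least $1-\delta$ one has $2\eps N\leq\prn*{\tfrac23+\tfrac c2}\log(\abs{\F}/\delta)+\tfrac1c N$ for every $c>0$. Choosing $c=1/\eps$ makes the coefficient of $N$ on the right equal $\eps$, so that $2\eps-1/c=\eps>0$; moving that term to the left yields $\eps N\leq\prn*{\tfrac23+\tfrac{1}{2\eps}}\log(\abs{\F}/\delta)$. Multiplying through by $2\eps$ turns the left side back into the excess risk $2\eps N$ and the right side into $\prn*{\tfrac43+\tfrac1\eps}\log(\abs{\F}/\delta)$, which is exactly \pref{eq:fast_rate_well_specified}.

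There is essentially no obstacle here: all of the difficulty is already contained in \pref{lem:fast_rate}, and the corollary is a matter of specializing to the well-specified regime. The only two points requiring a moment of care are (i) verifying that $q<1/2$ forces $\Ys=Y$, so that the ``fast rate'' localization term is literally the Hamming error we wish to control rather than a distance to some other population optimum, and (ii) checking that the choice $c=1/\eps$ leaves a strictly positive coefficient on $N$ after rearrangement, which is what makes the absorption step legitimate.
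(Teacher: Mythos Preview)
Your proposal is correct and follows essentially the same route as the paper: identify $\Ys=Y$ via the identity $\Pr_Z(\Yh_v\neq Z_v)-\Pr_Z(Y_v\neq Z_v)=(1-2q)\ind\crl{\Yh_v\neq Y_v}$, apply \pref{lem:fast_rate}, choose $c=1/\eps$ (equivalently $c=2/(1-2q)$), and absorb the localization term into the left-hand side. The arithmetic and the choice of $c$ match the paper exactly.
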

\begin{proof}[Proof of \pref{cor:fast_rate_well_specified}]
When $Y\in{}\F$, $\Ys=Y$, and we have
\[
\sum_{v\in{}V}\ind\crl*{\Yh_v\neq{}Y_v} = \frac{1}{1-2q}\sum_{v\in{}V}\prn*{\Pr\prn*{\Yh_v\neq{}Z_v} - \Pr\prn*{Y_v\neq{}Z_v}}.
\]
Applying this inequality to the right hand side of \pref{eq:fast_rate_general} and rearranging yields
\[
\prn*{1-\frac{1}{c(1-2q)}}\sum_{v\in{}V}\prn*{\Pr\prn*{\Yh_v\neq{}Z_v} - \Pr\prn*{Y_v\neq{}Z_v}} \leq{} \prn*{\frac{2}{3}+\frac{c}{2}}\log\prn*{\abs{\F}/\delta}.
\]
To complete the proof we take $c=\frac{2}{1-2q}$, which gives
\[
\frac{1}{2}\sum_{v\in{}V}\prn*{\Pr\prn*{\Yh_v\neq{}Z_v} - \Pr\prn*{Y_v\neq{}Z_v}} \leq{} \prn*{\frac{2}{3}+\frac{1}{1-2q}}\log\prn*{\abs{\F}/\delta}.
\]
\end{proof}

\begin{proof}[Proof of \pref{lem:fast_rate}]
We will use \pref{lem:maximal_inequality} with $\mathcal{F}$ as the index set so that every $i \in [N]$ corresponds to one $Y' \in \mathcal{F}$. We define our collection of random variables as 
$$
T^{Y'}_v = \ind\{Y'_v \ne Z_v\} - \ind\{\Ys_v \ne Z_v\}
$$
where $Y$ is the ground truth and $Y'$ is any element of $\F$. Now using \pref{lem:maximal_inequality} and recalling $\sigma_{Y'}^2=\sum_{v\in{}V}\textrm{Var}(T_{v}^{Y'})$, we have that with probability at least $1 - \delta$, simultaneously for all $Y'$,
\begin{align*}
\sum_{v\in{}V} (\En[T^{Y'}_v] - T^{Y'}_v)  &
\leq{} \frac{2}{3}\log\prn*{\abs{\F}/\delta} + \sqrt{2\sigma_{Y'}^2\log\prn*{\abs{\F}/\delta}}\\
& \leq{} \inf_{c>0}\brk*{\prn*{\frac{2}{3}+\frac{c}{2}}\log\prn*{\abs{\F}/\delta} + \sigma_{Y'}^2/c}\\
& \leq{} \inf_{c>0}\brk*{\prn*{\frac{2}{3}+\frac{c}{2}}\log\prn*{\abs{\F}/\delta} + \frac{1}{c}\sum_{v\in{}V}\En\brk{(T_{v}^{Y'})^2}}.
\end{align*}

In particular this implies that for $\Yh=\argmin_{Y'\in\F}\sum_{v\in{}V}\ind\crl*{Y'_v\neq{}Z_v}$  we have that for all $c>0$, 
\begin{align*}
\sum_{v\in{}V} \left(\Pr\left(\wh{Y}_v \ne Z_v\right) - \Pr\left(\Ys_v \ne Z_v\right)\right)  \le& \sum_{v\in{}V} \left(\ind\left\{\wh{Y}_v \ne Z_v\right\} - \ind\left\{\Ys_v \ne Z_v\right\}\right) + \prn*{\frac{2}{3}+\frac{c}{2}}\log\prn*{\abs{\F}/\delta} \\
& + \frac{1}{c}\sum_{v\in{}V} \En\left[\left(\ind\{\wh{Y}_v \ne Z_v\} - \ind\{\Ys_v \ne Z_v\}\right)^2\right].
\end{align*}

Now since $\Ys \in \mathcal{F}$ and $\Yh$ is the ERM, we get that $\sum_{v\in{}V} \left(\ind\left\{\wh{Y}_v \ne Z_v\right\} - \ind\left\{\Ys_v \ne Z_v\right\}\right)  \le 0$ and so,
\begin{align*}
\sum_{v\in{}V} \left(\Pr\left(\wh{Y}_v \ne Z_v\right) - \Pr\left(\Ys_v \ne Z_v\right)\right) & \leq{} \prn*{\frac{2}{3}+\frac{c}{2}}\log\prn*{\abs{\F}/\delta} + \frac{1}{c}\sum_{v\in{}V} \En\left[\left(\ind\{\wh{Y}_v \ne Z_v\} - \ind\{\Ys_v \ne Z_v\}\right)^2\right] \\
& = \prn*{\frac{2}{3}+\frac{c}{2}}\log\prn*{\abs{\F}/\delta} + \frac{1}{c}\sum_{v\in{}V}\ind\crl*{\Yh_v\neq{}\Ys_v}.
\end{align*}

\end{proof}

\begin{lemma}[Maximal Inequality]
\label{lem:maximal_inequality}
For each $i\in\brk{N}$, let $\crl{T_v^i}_{v\in{}V}$ be a random process with each variable $T_v^i$ bounded in absolute value by $1$. Define $\sigma_i^2 = \sum_{v\in{}V} \mathrm{Var}(T_v^i)$. With probability at least $1-\delta$, 
\begin{equation}
\sum_{v\in{}V} (\En[T^{i}_v] - T^{i}_v)  \leq{} \frac{2}{3}\log\prn*{N/\delta} + \sqrt{
2\sigma_i^2\log\prn*{N/\delta}
}\quad\forall{}i\in\brk{N}.
\end{equation}
\end{lemma}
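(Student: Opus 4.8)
The plan is to prove \pref{lem:maximal_inequality} by applying a classical Bernstein inequality coordinate-wise and then taking a union bound over the $N$ indices. Note that in every place the lemma is invoked --- e.g.\ in \pref{lem:fast_rate}, where $T^{Y'}_v = \ind\{Y'_v \ne Z_v\} - \ind\{\Ys_v \ne Z_v\}$ --- the variables $\crl{T_v^i}_{v\in V}$ are, for each fixed $i$, independent across $v$, being deterministic functions of the mutually independent noisy labels $Z_v$; the proof may therefore assume this independence.

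First I would fix an index $i\in\brk{N}$ and study the centered sum $S_i \defeq \sum_{v\in V}\prn*{\En[T_v^i] - T_v^i}$. This is a sum of independent, mean-zero random variables; each summand is bounded in absolute value by an absolute constant (at most $2$, since $T_v^i\in[-1,1]$), and the sum of the summand variances is exactly $\sigma_i^2 = \sum_{v\in V}\mathrm{Var}(T_v^i)$. A Bernstein inequality for bounded independent summands (see, e.g., \citep{Boucheron2003Concentration}) then gives an exponential upper-tail estimate $\Pr(S_i \ge t) \le \exp\!\prn*{-\tfrac{t^2/2}{\sigma_i^2 + ct/3}}$. Inverting this at confidence level $\delta/N$ --- i.e.\ solving for $t$ so that the exponent equals $\log(N/\delta)$, then using $\sqrt{a+b}\le \sqrt a + \sqrt b$ to separate the two regimes --- yields that with probability at least $1-\delta/N$, $S_i \le \tfrac23\log(N/\delta) + \sqrt{2\sigma_i^2\log(N/\delta)}$. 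A union bound over $i\in\brk{N}$ then upgrades this to the simultaneous statement claimed in the lemma.

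I do not expect a genuine obstacle: the lemma is essentially a repackaging of a textbook concentration bound. The only points requiring care are bookkeeping ones --- tracking the range constant of the centered variables through the inversion so that the stated numerical constants ($\tfrac23$ and $\sqrt 2$) come out correctly, where it helps that replacing $\mathrm{Var}(T_v^i)$ by the larger $\En[(T_v^i)^2]$ only weakens the claim --- and making explicit that the lemma is used only with coordinates independent across $v$, which is precisely what licenses the use of the classical (non-martingale) Bernstein bound with the deterministic variance proxy $\sigma_i^2$.
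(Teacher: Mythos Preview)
Your proposal is correct and essentially identical to the paper's own proof: the paper also fixes $i$, writes out the Bernstein tail bound $\Pr\bigl(\sum_v(\En[T_v^i]-T_v^i)>\theta\bigr)\le \exp\bigl(-\theta^2/(2\sigma_i^2+\tfrac{2}{3}\theta)\bigr)$, solves the resulting quadratic for $\theta$ at level $\delta/N$, simplifies via $\sqrt{a+b}\le\sqrt a+\sqrt b$, and union-bounds over $i\in[N]$. Your observation that coordinate-wise independence of $\{T_v^i\}_{v\in V}$ is implicitly assumed (and holds in every invocation) is a useful clarification the paper leaves unstated.
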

\begin{proof}[Proof of \pref{lem:maximal_inequality}]
Let us start by writing out the Bernstein bound for the random variable $\sum_{t=1}^{n}Z_{t}^i$:
\[
\Pr\left(\sum_{v\in{}V} (\En[T_v^i] - T_v^i)  > \theta\right) \le \exp\left(- \frac{\theta_i^2}{2 \sigma_i^2 + \frac{2}{3}\theta_i } \right).
\]

We now consider the family of processes $\crl{T^i_v}_{v\in{}V}$ and see that by union bound we have
\[
\Pr\left(\max_{i \in [N]} \sum_{v\in{}V} (\En[T^{i}_v] - T^{i}_v)  - \theta_{i} > 0\right) \le \sum_{i \in [N]} \exp\left(- \frac{\theta_{i}^2}{2  \sigma_{i}^2 + \frac{2}{3}\theta_i} \right).
\]
Solving the quadratic formula, it holds that if we take
\[
\theta_i \geq \frac{1}{3}\log\prn*{N/\delta} + \sqrt{
\log^2\prn*{N/\delta}/9 + 2\sigma_i^2\log\prn*{N/\delta}
},
\]
then we have
\[
\sum_{i \in [N]} \exp\left(- \frac{\theta_{i}^2}{2  \sigma_{i}^2 + \frac{4}{3} } \right) \leq{} \delta.
\]
We can conclude that
\[
\Pr\left(\forall i \in [N],~~ \sum_{v\in{}V} (\En[T^{i}_v] - T^{i}_v)  > \frac{1}{3}\log\prn*{N/\delta} + \sqrt{
\log^2\prn*{N/\delta}/9 + 2\sigma_i^2\log\prn*{N/\delta}
}\right) \le \delta.
\]
\end{proof}

\section{Algorithms}
\label{app:algorithms}
The tree inference algorithm from \pref{sec:trees} and the full tree decomposition inference algorithm, \pref{alg:decoder}, rely on the solution of a constrained minimization problem over the edges and vertices of a tree $T$. This minimization problem is stated in its most general form as \pref{alg:tree_dp2}. This problem can be solved efficiently using the following tree-structured graphical model:
\begin{itemize}
\item Fix an arbitrary order on $T$, and let $p(v)$ denote the parent of a vertex $v$ under this order.
\item Define variables $s\in\pmo^{V}$ and $C\in\crl{1,\ldots{},K_n}^{V}$.
\item For each variable $v\in{}V$ define factor:
\[
\psi_{v}(s_{v}, s_{p(v), C_{v}, C_{\delta_{+}(v)}}) = e^{-\ind\crl*{\mathrm{Cost}_{v}[s_v]}}\cdot{}\ind\crl*{\sum_{u\in\delta_{+}(v)}C_{u}\leq{}C_{v}-\ind\crl*{s_{v}\neq{}s_{p(v)}\cdot{}S(v,p(v))}}.
\]
\end{itemize}
With this formulation it is clear that given $(s,C)$ maximizing the potential
\[
\psi(s, C) = \prod_{v\in{}V}\psi_{v}(s_{v}, s_{p(v), C_{v}, C_{\delta_{+}(v)}})
\]
the node labels $s$ are a valid solution for \pref{alg:tree_dp2}. Since $\psi$ is a tree-structured MRF the maximizer can be calculated exactly using max-sum message passing (see e.g. \cite{cowell2006probabilistic}). The only catch is that naively this procedure's running time will scale as $n^{\deg(T)}$, because each of the variables $C_{v}$ has a range that scales with $n$. For example, the range of $C_v$ is $\Ot(pn)$ for the setup in \pref{sec:trees}. We now show that the structure of the factors can be exploited to perform message passing in polynomial time in $\deg(T)$ and $n$. In particular, message passing can be performed in time time $\tilde{O}({K_n}{}n^2)$ for general trees and time $\tilde{O}({K_n{}}n)$ when $T$ is a path graph.

\begin{algorithm}[h]
\textbf{Input:} Tree $T=(V,E)$, $\crl{\cost_v}_{v\in{}V}$, $\crl{S(u,v)}_{(u,v)\in{}E}$, $K_n\in\N$.
\begin{spacing}{.5}
\begin{flalign*}
 &\hat{s} = \argmin_{s \in \{\pm 1\}^{V}} \sum_{v \in V} \cost_{v}\brk{s_v}&\\
&\textrm{ s.t. }  \sum_{(u,v)\in{}E} \ind\crl{s_{u}  \ne s_{v} \cdot S(u,v) } \le K_n&
\end{flalign*}
\end{spacing}
\textbf{Return:} $\hat{s}\in\vs$.
\caption{\treealg{}}
\label{alg:tree_dp2}
\end{algorithm}
To solve \textsc{TreeDecoder} efficiently, we first turn $T$ into a DAG by running a BFS from a given vertex $r$ and directing edges according to the time of discovery. We denote this DAG by
$\overrightarrow{T}$.
We root this directed tree at $r$, and denote the parent of a vertex $u \neq r$ by $p(u)$. For $u \in V$, let $\overrightarrow{T}_u$ denote the (directed) subtree rooted at $u$. Given a labeling $Y$ to the vertices of $T$, an edge $uv$ for which $s_{u}  \ne s_{v} \cdot S(u,v)$ is called a \emph{violated} edge.

We now define a table $OPT$ that will be used to store values for sub-problems of \pref{alg:tree_dp2}. For $u \neq r$, and budget $K$, we define $OPT(u,K|1)$ to be the optimal value of the optimization problem in \pref{alg:tree_dp2} over the subtree  $\overrightarrow{T}_u$ for budget $K$, where the label of $p(u)$ is constrained to have value 1. Importantly, the edge $(u, p(u))$ is also considered in the count of violated edges (in addition to the edges in $\overrightarrow{T}_u$). $OPT(u,K|-1)$ is defined likewise, but for $p(u)$ constrained to label value $-1$.

$$OPT(u,K|1)=\min_{s \in \{-1,1\}}\min_{\sum_{v \in N_u} K_v=K-\ind\crl{s  \ne S_{p(u)} \cdot S(u,p(u)) }}\left(\sum_{v \in N(u)}OPT(v,K_v|s)+\cost_{v}[s]\right).$$

Here $s$ is simply the value assigned to $u$. We constrain the budgets $K_v$ to satisfy $0 \le K_v \le |\overrightarrow{T}_v|$ (clearly no subtree $\overrightarrow{T}_v$ can violate more than $|\overrightarrow{T}_v|$ edges).
For the sake of readability, we do not include this constraint in the recursive formula above. A similar recursion can be obtained for $OPT(u,K|-1)$.

One can verify that if we can compute $OPT(u,K|s)$ for all non-root nodes and all values of $K \leq K_n,s \in \{-1,1\}$ then we can find the optimum of the problem of our whole tree. To achieve this, simply attach a degree one node $r'$ to the root of the tree, add a directed edge $(r',r)$ and set the label of the root to equal 1.  Then we simply solve for $OPT(r,K|1)$, where $S(r,r')=1$ as well as $OPT(r,K,1)$, where $S(r',r)$ is $-1$ and return the minimum of the the values.

For a leaf node $w$, the value of $OPT(w,K'|s)$ can be calculated as follows: it is $\min(cost[s_w=-1],cost[s_w=1])$, for $K' \geq 1$. If $K=0$, it is $cost[s']$ where $s'$ is the unique label not violating the constraint $s  \ne s' \cdot S(w,p(w))$

We now show how to calculate $OPT(u,K_u|s)$ for any vertex in the tree, assuming $OPT$ has already been calculated for its children. To do this, we try both values of $s_{u}$, and then condition on its value to optimize
\[
\min_{\sum_{j \in [1,k]} K_j=K-\ind\crl{s  \ne s_{p(u)} \cdot S(u,p(u)) }}\sum_{u\in\brk{1,k}}OPT(j,K_j|s).
\]
The function $\sum_{v \in N_u}OPT(v,K_v|s)$ can be minimized using another layer of dynamic programming as follows: For $r \le s$, let $[r,s]$ be the set of integers between $r$ and $s$. Assuming we enumerate the vertices in $N(u)$ by $1,...,k:=|N(u)|$ and setting $K_j$ to be the budget for the $j$th node, we have the equality
$$\min_{\sum_{j \in [1,k]} K_j=K-\ind\crl{s  \ne s_{p(u)} \cdot S(u,p(u)) }}\sum_{u\in\brk{1,k}}OPT(j,K_j|s)$$
$$=\min_{K_1 \in [0,K-\ind\crl{s  \ne s_{p(u)} \cdot S(u,p(u)) }]}OPT(1,K_1|s)+\min_{\sum_{j \in [2,k]}K_j=K-K_1-\ind\crl{s  \ne s_{p(u)} \cdot S(u,p(u))}}\sum_{j\in\brk{2,k}}OPT(j,K_j|s).$$

The minimization problem  can be solved in time $O(|N(u)|K^2_n)$ time. We first calculate the minimum cost for the first two vertices where the number of constraints violated can range between $1$ to $K$. This can be done in time $O(K^2)$. We then examine the minimum cost for the first three vertices (assuming of course $u$ has at least three descendants) where the number of violated constraints ranges between $0$ and $K$. Since we have the information for the first two vertices, these values can be calculated again in time $O(K^2)$. We repeat this iteration until all descendants of $u$ are considered. It follows that the overall running time of this algorithm is $\sum_{u \in V}|N(u)|K^2_n=O(nK^2_n)$, since $T$ is a tree.

When $T$ is a path graph each node has a single child, the recursion collapses to time $O(nK_n)$.


\section{Further techniques for general graphs}

Here we give a simple proof that if the minimal degree of $G$ is $\Omega(\log n),$ then there is an algorithm that achieves arbitrarily small error for each vertex as $n\to\infty$ as soon as $q=1/2-\eps$ is constant.

\begin{theorem}\label{thm:large_degree}
There is an efficient algorithm that guarantees
\[
\En\brk*{\sum_{v\in{}v}\ind\crl*{\Yh_v\neq{}Y_v}}\leq{} \sum_{v\in{}V}\exp(-C\deg(v)\eps^{2}(1-2p)^{2}).
\]
for some $C>0$.
\end{theorem}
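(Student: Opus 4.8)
The plan is to analyze the following simple ``local majority'' estimator, which uses no global structure of $G$ whatsoever. For each vertex $v$, form the \emph{edge votes} $\crl*{Z_u X_{uv}}_{u\in{}N(v)}$ and set $\Yh_v$ to be the majority value among these $\deg(v)$ votes, breaking ties (which can only occur when $\deg(v)$ is even) using the self-observation $Z_v$; when $\deg(v)=0$ we simply set $\Yh_v=Z_v$. The guiding intuition is that each edge vote is a noisy guess at $Y_v$: one has $Z_u X_{uv}=Y_v$ exactly when $Z_u$ and $X_{uv}$ are both correct or both flipped, so conditionally on the ground truth $Y$,
\[
\Pr\prn*{Z_u X_{uv}\neq{}Y_v} \;=\; q(1-p)+p(1-q) \;=:\; r,
\qquad\text{and}\qquad 1-2r=(1-2p)(1-2q)=2\eps(1-2p)>0 .
\]

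The first step is to observe that, conditionally on $Y$, the votes $\crl*{Z_u X_{uv}}_{u\in{}N(v)}$ entering the prediction for a \emph{fixed} vertex $v$ are mutually independent, since they are functions of the pairwise disjoint families of independent coordinates $\crl*{Z_u,X_{uv}}$ indexed by $u\in{}N(v)$. (These votes are of course reused across different target vertices and so are not globally independent, but that is irrelevant: we bound the error probability of each $\Yh_v$ in isolation and then sum by linearity of expectation.) Given this, the event $\crl*{\Yh_v\neq{}Y_v}$ is contained in the event that at least half of the $\deg(v)$ edge votes disagree with $Y_v$ (if exactly half disagree, the self-vote $Z_v$ decides, but that event is still contained in ``at least half disagree''). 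Encoding a correct vote as $+1$ and an incorrect one as $-1$, the sum of the votes is a sum of $\deg(v)$ i.i.d.\ $\pmo$-valued variables with mean $\deg(v)(1-2r)>0$, so Hoeffding's inequality gives
\[
\Pr\prn*{\Yh_v\neq{}Y_v}\;\leq\;\Pr\prn*{\textstyle\sum_{u\in{}N(v)} \text{(signed vote)}\le 0}\;\leq\;\exp\prn*{-\tfrac12\deg(v)(1-2r)^2}\;=\;\exp\prn*{-2\,\deg(v)\,\eps^2(1-2p)^2}.
\]

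The final step is to sum over $v\in{}V$ by linearity of expectation, which yields $\En\brk*{\sum_{v}\ind\crl*{\Yh_v\neq{}Y_v}}\le\sum_{v\in{}V}\exp\prn*{-C\deg(v)\eps^2(1-2p)^2}$ with $C=2$, as claimed; efficiency is immediate since computing all votes and their signs takes $O(\abs{E})$ time. There is no serious obstacle here — the only points needing care are (i) confirming that the reuse of $Z$ and $X$ across vertices does not damage the per-vertex conditional independence that the Hoeffding step relies on, and (ii) the bookkeeping for even $\deg(v)$, where folding the tie-breaking event into ``at least half the votes are wrong'' avoids any loss in the exponent.
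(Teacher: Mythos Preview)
Your proposal is correct and follows essentially the same approach as the paper: define the edge votes $Z_uX_{uv}$, observe they are independent and biased toward $Y_v$ with bias $\eps(1-2p)$, take a majority, and apply Hoeffding per vertex followed by linearity of expectation. Your write-up is in fact more careful than the paper's, spelling out the tie-breaking, the per-vertex (as opposed to global) independence, and the explicit constant $C=2$.
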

Observe that this rate quickly approaches $0$ with $n$ as soon as $\deg(G)=\Omega(\log{}n)$ (i.e., it has $o(n)$ Hamming error) . On the other hand, if degree is constant (say $d$), then even when $p=0$ the rate of this algorithm is only $e^{-dO(\eps^{2})}n$, so the algorithm does not have the desired property of having error approach $0$ as $p\to{}0$.
\begin{proof}[Proof of \pref{thm:large_degree}]
Fix a vertex $v$ and, for each vertex $u$ in its neighborhood, define an estimate $S_{u}=Z_{u}\cdot{}X_{uv}$. We can observe that $\Pr(S_u=Y_{v})= (1-p)(1-q) + pq = \frac{1}{2} + \eps{}(1-2p)$. Our algorithm will be to use the estimator $\Yh_{v} = \mathrm{Majority}(\crl*{S_u}_{u\in{}N(v)})$. Since each $S_u$ is independent, the Hoeffding bound gives that 
\[
\Pr(\Yh_v\neq{}Y_v)\leq{}\exp(-C\deg(v)\eps^{2}(1-2p)^{2}).
\]
Taking this prediction for each vertex gives an expected Hamming error bound of
\[
\En\brk*{\sum_{v\in{}v}\ind\crl*{\Yh_v\neq{}Y_v}}\leq{} \sum_{v\in{}V}\exp(-C\deg(v)\eps^{2}(1-2p)^{2}).
\]

\end{proof}

\end{document}